\colorlet{texcscolor}{blue!50!black}
\colorlet{texemcolor}{red!70!black}
\colorlet{texpreamble}{red!70!black}
\colorlet{codebackground}{black!25!white!25}
\def\va{{\bm{a}}}
\def\vb{{\bm{b}}}
\def\vc{{\bm{c}}}
\def\ve{{\bm{e}}}
\def\vh{{\bm{h}}}
\def\vi{{\bm{i}}}
\def\vk{{\bm{k}}}
\def\vm{{\bm{m}}}
\def\vo{{\bm{o}}}
\def\vp{{\bm{p}}}
\def\vq{{\bm{q}}}
\def\vr{{\bm{r}}}
\def\vs{{\bm{s}}}
\def\vu{{\bm{u}}}
\def\vv{{\bm{v}}}
\def\vx{{\bm{x}}}
\def\vy{{\bm{y}}}
\def\vz{{\bm{z}}}
\def\mA{{\bm{A}}}
\def\mD{{\bm{D}}}
\def\mF{{\bm{F}}}
\def\mH{{\bm{H}}}
\def\mI{{\bm{I}}}
\def\mJ{{\bm{J}}}
\def\mK{{\bm{K}}}
\def\mM{{\bm{M}}}
\def\mP{{\bm{P}}}
\def\mQ{{\bm{Q}}}
\def\mT{{\bm{T}}}
\def\mU{{\bm{U}}}
\def\mV{{\bm{V}}}
\def\mW{{\bm{W}}}
\def\mX{{\bm{X}}}
\def \Ib {{\mathbf{Ib}}}
\def \RR {{\mathbb{R}}}
\def \Ib {{\mathbf{I}}}
\def \Ub {{\mathbf{U}}}
\newcommand{\norm}[1]{\left\lVert#1\right\rVert}
\newcommand{\lstm}{\textsc{lstm}}
\begin{document}

\title{How Does Momentum Benefit Deep Neural Networks Architecture Design? A Few Case Studies
}


\author{
        Bao Wang \and 
        Hedi Xia \and
        Tan Nguyen \and
        Stanley Osher
}


\institute{
Bao Wang \at
        Department of Mathematics\\
        Scientific Computing and Imaging Institute, University of Utah \\
              \email{wangbaonj@gmail.com}
        \and
Hedi Xia \at
        Department of Mathematics, UCLA\\
              \email{hedixia@math.ucla.edu}
        \and
Tan Nguyen \at
        Department of Mathematics, UCLA\\
              \email{tannguyen89@math.ucla.edu}
        \and
Stanley Osher \at
        Department of Mathematics, UCLA \\
              \email{sjo@math.ucla.edu}           
}

\date{Received: date / Accepted: date}

\maketitle

\begin{abstract}
We present and review an algorithmic and theoretical framework for improving neural network architecture design via momentum. As case studies, we consider how momentum can improve the architecture design for recurrent neural networks (RNNs), neural ordinary differential equations (ODEs), and transformers. We show that integrating momentum into neural network architectures has several remarkable theoretical and empirical benefits, including 1) integrating momentum into RNNs and neural ODEs can overcome the vanishing gradient issues in training RNNs and neural ODEs, resulting in effective learning long-term dependencies. 2) momentum in neural ODEs can reduce the stiffness of the ODE dynamics, which significantly enhances the computational efficiency in training and testing. 3) momentum can improve the efficiency and accuracy of transformers.



\end{abstract}

\section{Introduction}\label{section-introduction}


Deep learning has radically advanced artificial intelligence \cite{lecun2015deep}, which has achieved state-of-the-art performance in various applications, including computer vision \cite{dosovitskiy2020image,touvron2020deit}, natural language processing \cite{devlin2018bert,NEURIPS2020_1457c0d6}, and control \cite{silver2017mastering}. Nevertheless, deep neural networks (DNNs) designs are mostly heuristic and the resulting architectures have many well-known issues: 1) convolutional neural networks (CNNs) are not robust to unperceptible 
adversarial attacks \cite{szegedy2013intriguing}; 2) recurrent neural networks (RNNs) cannot learn long-term dependencies effectively due to vanishing gradients \cite{pascanu2013difficulty}; 3) training neural ordinary differential equations (ODEs) can take an excessive number of function evaluations (NFEs) \cite{chen2018neural}; and 4) training transformers suffers from quadratic computational time and memory costs \cite{vaswani2017attention}. See Sections~\ref{sec-RNN}, \ref{sec-neural-ODE}, and \ref{sec-transformers}, respectively, for the details of these problems 2)-4).

Addressing the above grand challenges is at the forefront of deep learning research. 1) Adversarial defense \cite{Madry:2018} has been proposed to train robust neural networks against adversarial attacks; a survey of adversarial defense algorithms is available, see, e.g., \cite{chakraborty2018adversarial}. Training ResNets has also been interpreted as solving a control problem of the transport equation \cite{BaoWang:2018NIPS,BaoWang:2019NIPS}, resulting in PDE-motivated adversarial defenses \cite{BaoWang:2019NIPS,1930-8337_2021_1_129,wang_osher_2021}. 2) Learning long-term dependencies with improved RNNs has been an active research area for several decades; celebrated works include long short-term memory \cite{hochreiter1997long} and gated recurrent unit \cite{chung2015gated}. 3) Several algorithms and techniques have been proposed to reduce NFEs in training neural ODEs, including input augmentation \cite{NEURIPS2019_21be9a4b}, regularizing solvers and learning dynamics 
\cite{pmlr-v119-finlay20a,NEURIPS2020_2e255d2d,NEURIPS2020_a9e18cb5,pmlr-v139-pal21a},
high-order ODE \cite{norcliffe2020_sonode}, data control \cite{massaroli2020dissecting}, and depth-variance \cite{massaroli2020dissecting}. 4) Transformers are the current state-of-the-art machine learning (ML) models for sequential learning \cite{vaswani2017attention}, which processes the input sequence concurrently and can learn long-term dependencies effectively. However, transformers suffer from quadratic computational time and memory costs with respect to the input sequence length; see Section~\ref{sec-transformers} for details. In response, efficient attention has been proposed leveraging sparse and low-rank approximation of the attention matrix \cite{j.2018generating,pmlr-v80-parmar18a,beltagy2020longformer,ainslie-etal-2020-etc,zaheer2021big,wang2020linformer,katharopoulos2020transformers,choromanski2021rethinking}, locality-sensitive hashing \cite{Kitaev2020Reformer}, clustered attention \cite{vyas2020fast}, and decomposed near-field and far-field attention \cite{nguyen2021fmmformer}.


\subsection{Background: Momentum acceleration for gradient descent
}\label{subsec:review:momentum}

Let us recall heavy-ball momentum, a.k.a. classical momentum \cite{polyak1964some}, for accelerating gradient descent in solving $\min_{\vx\in \RR^d}F(\vx)$. Starting from $\vx^0$ and $\vx^1$, the heavy-ball method iterates as follows
\begin{equation}\label{eq:HB1}
\vx^{k+1} = \vx^k-s\nabla F(\vx^k) + \beta(\vx^k-\vx^{k-1}),
\end{equation}
where $s>0$ is the step size and $0\leq \beta <1$ is the momentum parameter. By introducing the momentum state $\vm$, we can rewrite the HB method as
\begin{equation}\label{eq:HB2}
\begin{aligned}
{\vm}^{k+1} = \beta{\vm}^k+\nabla F(\vx^k);\quad
{\vx}^{k+1} = {\vx}^k-s{\vm}^{k+1}.
\end{aligned}
\end{equation}
In contrast, gradient descent updates at each step as follows
\begin{equation}\label{eq:GD}
\vx^{k+1} = \vx^k-s \nabla F(\vx^k).
\end{equation}

\subsection{Contribution}
This paper aims to present and review an algorithmic and theoretical framework for improving neural network architecture design via momentum, a well-established first-order optimization tool \cite{polyak1964some,nesterov1983method}. In particular, we focus on leveraging momentum to design new RNNs and neural ODEs to accelerate their training and testing and improve learning long-term dependencies with theoretical guarantees. Moreover, we present a new efficient attention mechanism with momentum augmentation, which significantly improves computational efficiency over transformers \cite{vaswani2017attention} and accuracy over linear transformers \cite{katharopoulos2020transformers}. Finally, we present some perspectives of how momentum can further improve neural networks design and solve existing grand challenges.

\subsection{Notations}
We denote scalars by lower- or upper-case letters. We also denote vectors and matrices by lower- and upper-case boldface letters, respectively. For a vector $\vx = (x_1, \cdots, x_d)^\top\in \mathbb{R}^d$, where $(x_1,\cdots,x_d)^\top$ denotes the transpose of the vector $(x_1,\cdots,x_d)$, we use $\|\vx\| = {(\sum_{i=1}^d |x_i|^2)^{1/2}}$ to denote its $\ell_2$ norm. We denote the vector whose entries are all 0s as $\mathbf{0}$. For a matrix $\mA$, we use $\mA^\top$,  $\mA^{-1}$, and $\|\mA\|$ to denote its transpose, inverse, and spectral norm, respectively. We use $\mI$ to denote the identity matrix, whose dimension can be determined in its context.
For a function $f(\vx): \mathbb{R}^d \rightarrow \mathbb{R}$, we denote its gradient as $\nabla f(\vx)$. Given two sequences $\{a_n\}$ and $\{b_n\}$, we write $a_n=\mathcal{O}(b_n)$ if there exists a positive constant $0<C<+\infty$ such that $a_n \leq C b_n$.

\subsection{Organization}

We organize this paper as follows: In Section~\ref{sec-RNN}, we present RNN models and their difficulties in learning long-term dependencies. We also show how to integrate momentum into RNNs to accelerate training RNNs and enhance RNNs' capability in learning long-term dependencies. In Section~\ref{sec-neural-ODE}, we show how the ODE limit of momentum can improve neural ODEs in terms of training and test efficiency and learning long-term dependencies. In Section~\ref{sec-transformers}, we show how momentum can be integrated into efficient transformers and improve their accuracy. We conclude and present potential new directions in Section~\ref{sec-conclusion}.


\section{Recurrent Neural Networks}\label{sec-RNN}
In this section, we present how to leverage momentum to improve RNN architecture design. The main results have been presented at NeurIPS 2020 \cite{MomentumRNN}.

\subsection{Recap on RNNs and LSTM}
Recurrent cells are the building blocks of RNNs. A recurrent cell employs a cyclic connection to update the current hidden state ($\vh_t$) using the past hidden state ($\vh_{t-1}$) and the current input data ($\vx_t$) \cite{elman1990finding}; the dependence of $\vh_t$ on $\vh_{t-1}$ and  $\vx_t$ in a recurrent cell can be written as
\begin{equation}\label{eq:RNN:Cell}
\vh_t = \sigma(\mU\vh_{t-1} + \mW\vx_t + \vb),\ \vx_t \in \RR^d,\ \mbox{and}\  \vh_{t-1}, \vh_t \in \RR^h,\ \ t=1, 2, \cdots, T,
\end{equation}
where  $\mU \in \RR^{h\times h}, \mW \in \RR^{h\times d}$, and $\vb\in \RR^h$ are trainable parameters; $\sigma(\cdot)$ is a nonlinear activation function, e.g., sigmoid or hyperbolic tangent; see Fig.~\ref{fig:momentumrnn-vs-rnn} (left) for an illustration of the RNN cell. Error backpropagation through time is used to train RNN, but it tends to result in exploding or vanishing gradients \cite{bengio1994learning}. Thus RNNs may fail to learn long term dependencies. 

LSTM cells augment the recurrent cell with ``gates'' \cite{hochreiter1997long} and results in
\begin{equation}\label{eq:LSTM:cell}
\begin{aligned}
\vi_t &= \sigma(\mU_{ih}\vh_{t-1} + \mW_{ix}\vx_t + \vb_i),\ \ &(\vi_t: \mbox{input gate})\\
\widetilde{\vc}_t &= \tanh{(\mU_{\widetilde{c}h}\vh_{t-1} + \mW_{\widetilde{c}x}\vx_t + \vb_{\widetilde{c}})},\ \ &(\widetilde{\vc}_t: \mbox{cell input})\\
\vc_t &= \vc_{t-1} + \vi_t \odot \widetilde{\vc}_t,\ \ &(\vc_t: \mbox{cell state})\\
\vo_t &= \sigma(\mU_{oh}\vh_{t-1} + \mW_{ox}\vx_t + \vb_o ),\ \ &(\vo_t: \mbox{output gate})\\
\vh_t &= \vo_t\odot \tanh{\vc_t},\ \ &(\vh_t: \mbox{hidden state})
\end{aligned}
\end{equation}
where $\mU_*\in \RR^{h\times h}$, $\mW_*\in \RR^{h\times d}$, and $\vb_* \in \RR^h$ are learnable parameters, and $\odot$ denotes the Hadamard product. The input gate decides what new information to be stored in the cell state, and the output gate decides what information to output based on the cell state value. The gating mechanism in LSTMs can lead to the issue of saturation \cite{van2018unreasonable,chandar2019towards}.

\begin{figure}[t]
\centering
\includegraphics[width=1.0\linewidth]{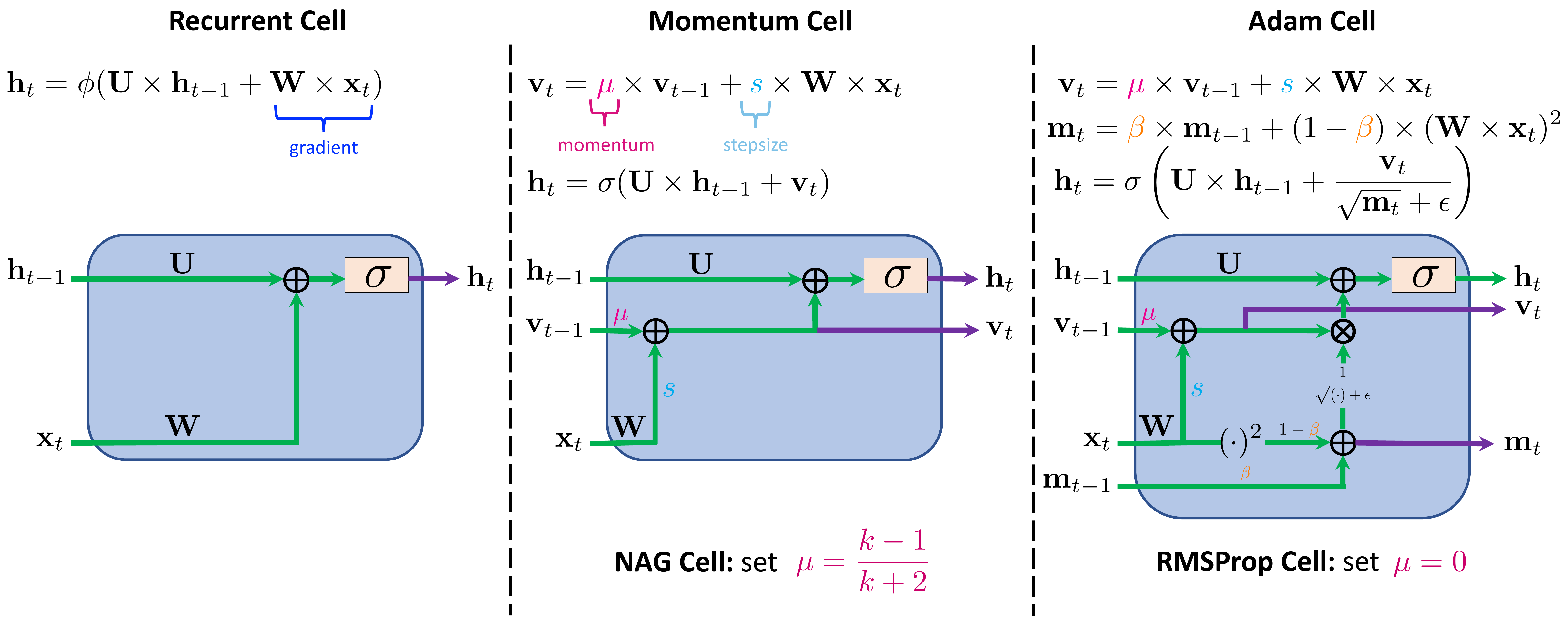}
\vskip -0.35cm
\caption{Illustration of the recurrent cell (left), Momentum/NAG cell (middle), and Adam/RMSProp cell (right). We draw a connection between the dynamics of hidden states in the recurrent cell and GD. We then introduce momentum to recurrent cell as an analogy of the momentum accelerated GD.
}
\label{fig:momentumrnn-vs-rnn}
\end{figure}

\subsection{Gradient descent analogy for RNN and MomentumRNN}\label{subsec:momentum-cell}
Now, we are going to establish a connection between RNN and gradient descent, and further leverage momentum to improve RNNs. Let $\widetilde{\mW} = [\mW, \vb]$ and $\widetilde{\vx}_t = [\vx_t, 1]^\top$ in \eqref{eq:RNN:Cell}, then we have $\vh_t = \sigma(\mU\vh_{t-1} + \widetilde{\mW}\widetilde{\vx}_t)$. For the ease of notation, without ambiguity we denote $\mW := \widetilde{\mW}$ and $\vx_t := \widetilde{\vx}_t$. Then the recurrent cell can be reformulated as
\begin{equation}
\label{eq:basic:cell}
\vh_t = \sigma(\mU \vh_{t-1} + \mW \vx_t).
\end{equation}
Moreover, let $\phi(\cdot) := \sigma(\mU(\cdot))$ and $\vu_t := \mU^{-1}\mW\vx_t$, we can rewrite \eqref{eq:basic:cell} as
\begin{equation}
\label{eq:recurent-cell:PGD}
\vh_t = \phi(\vh_{t-1} + \vu_t).
\end{equation}
If we regard $-\vu_t$ as the ``gradient'' at the $t$-th iteration, then we can consider \eqref{eq:recurent-cell:PGD} as the dynamical system which updates the hidden state by the gradient and then transforms the updated hidden state by the nonlinear activation function $\phi$. We propose the following accelerated dynamical system to accelerate the dynamics of  \eqref{eq:recurent-cell:PGD}, which is principled by the accelerated gradient descent theory (see subsection~\ref{subsec:review:momentum}):
\begin{align}\label{eq:momentum:PGD}
\vp_{t} = \mu\vp_{t-1} - s \vu_{t};\ \ \vh_{t} =\phi(\vh_{t-1} - \vp_{t}),
\end{align}
where $\mu \geq 0, s >0$ are two hyperparameters, which are the analogies of the momentum coefficient and step size in the momentum-accelerated GD, respectively. Let $\vv_t := -\mU\vp_t$, we arrive at the following dynamical system:
\begin{align}\label{eq:momentum:cell}
\vv_{t} = \mu\vv_{t-1} + s \mW\vx_{t};\ \ \vh_{t} = \sigma(\mU\vh_{t-1} + \vv_{t}).
\end{align}
The architecture of the momentum cell that corresponds to the dynamical system \eqref{eq:momentum:cell} is plotted in Fig.~\ref{fig:momentumrnn-vs-rnn} (middle). Compared with the recurrent cell, the momentum cell introduces an auxiliary momentum state in each update and scales the dynamical system with two positive hyperparameters $\mu$ and $s$. 

\begin{remark}
\label{rm:different-parameterization}
Different parameterizations of \eqref{eq:momentum:PGD} can result in different momentum cell architectures. For instance, if we let $\vv_t = -\vp_t$, we end up with the following dynamical system:
\begin{align}\label{eq:momentum:cell2}
\vv_{t} = \mu\vv_{t-1} + s \widehat{\mW}\vx_{t};\ \ \vh_{t} = \sigma(\mU\vh_{t-1} + \mU\vv_{t}),
\end{align}
where $\widehat{\mW} := \mU^{-1}\mW$ is the trainable weight matrix. Even though \eqref{eq:momentum:cell} and \eqref{eq:momentum:cell2} are mathematically equivalent, the training procedure might cause the MomentumRNNs that are derived from different parameterizations to have different performances.
\end{remark}

\begin{remark}
We put the nonlinear activation in the second equation of \eqref{eq:momentum:PGD} to ensure that the value of $\vh_t$ is in the same range as the original recurrent cell. 
\end{remark}

\begin{remark}
The derivation above also applies to the dynamical systems in the LSTM cells, and we can design the MomentumLSTM in the same way as designing the MomentumRNN.
\end{remark}

\subsection{Analysis of the vanishing gradient: Momentum cell vs. Recurrent cell}\label{subsec:gradient}
Let $\vh_T$ and $\vh_t$ be the state vectors at the time step $T$ and $t$, respectively, and we suppose $T\gg t$. Furthermore, assume that $\mathcal{L}$ is the objective to minimize, then
\begin{equation}
\label{eq:gradient:rnn}
{\small \frac{\partial \mathcal L}{\partial \vh_t} = \frac{\partial \mathcal{L}}{\partial\vh_T}\cdot\frac{\partial \vh_T}{\partial \vh_t} = \frac{\partial \mathcal L}{\partial \vh_T}\cdot \prod_{k=t}^{T-1}\frac{\partial \vh_{k+1}}{\partial \vh_k} = \frac{\partial \mathcal L}{\partial \vh_T}\cdot \prod_{k=t}^{T-1}(\mD_k\mU^\top),}
\end{equation}
where {\small $\mU^\top$} is the transpose of $\mU$ and {\small $\mD_k = {\rm diag}(\sigma'(\mU\vh_k + \mW\vx_{k+1}))$} is a diagonal matrix with $\sigma'(\mU\vh_k + \mW\vx_{k+1})$ being its diagonal entries. 
{\small $\|\prod_{k=t}^{T-1}(\mD_k\mU^\top)\|_2$} tends to either vanish or explode \cite{bengio1994learning}. We can use regularization or gradient clipping to mitigate the exploding gradient, leaving vanishing gradient as the major obstacle to training RNN to learn long-term dependency~\cite{pascanu2013difficulty}. We can rewrite \eqref{eq:momentum:cell} as
\vskip -0.5cm
\begin{equation}
\label{eq:momentum:cell:one:eq}
{\small \vh_t = \sigma\left(\mU(\vh_{t-1} - \mu\vh_{t-2}) + \mu\sigma^{-1}(\vh_{t-1}) + s\mW\vx_t\right),}
\end{equation}
where $\sigma^{-1}(\cdot)$ is the inverse function of $\sigma(\cdot)$. We compute $\partial \mathcal L/\partial \vh_t$ as follows
\begin{equation}
\label{eq:gradient:momentum}
{\small \frac{\partial \mathcal L}{\partial \vh_t} = \frac{\partial \mathcal{L}}{\partial\vh_T}\cdot\frac{\partial \vh_T}{\partial \vh_t} = \frac{\partial \mathcal L}{\partial \vh_T}\cdot \prod_{k=t}^{T-1}\frac{\partial \vh_{k+1}}{\partial \vh_k} = \frac{\partial \mathcal L}{\partial \vh_T}\cdot \prod_{k=t}^{T-1}\widehat{\mD}_k[\mU^\top + \mu\boldsymbol\Sigma_k],}
\end{equation}
where {\small $\widehat{\mD}_k = {\rm diag}(\sigma'(\mU(\vh_{k} - \mu\vh_{k-1}) + \mu\sigma^{-1}(\vh_{k}) + s\mW\vx_{k+1} ))$} and {\small $\boldsymbol\Sigma = {\rm diag}((\sigma^{-1})'(\vh_k))$}. For mostly used $\sigma$, e.g., sigmoid and tanh,  $(\sigma^{-1}(\cdot))' > 1$ and $\mu\boldsymbol\Sigma_k$ dominates $\mU^\top$.\footnote{In the vanishing gradient scenario, $\|\mU\|_2$ is small; also it can be controlled by regularizing the loss function.} Therefore, with an appropriate choice of $\mu$, the momentum cell can alleviate vanishing gradient and accelerate training. 

We empirically corroborate that momentum cells can alleviate vanishing gradients by training a MomentumRNN and its corresponding RNN on the PMNIST classification task and plot $\|\partial \mathcal L/\partial \vh_t\|_{2}$ for each time step $t$. Figure~\ref{fig:stability} confirms that unlike in RNN, the gradients in MomentumRNN do not vanish. 

\begin{figure}[t!]
\centering
\includegraphics[width=0.99\linewidth]{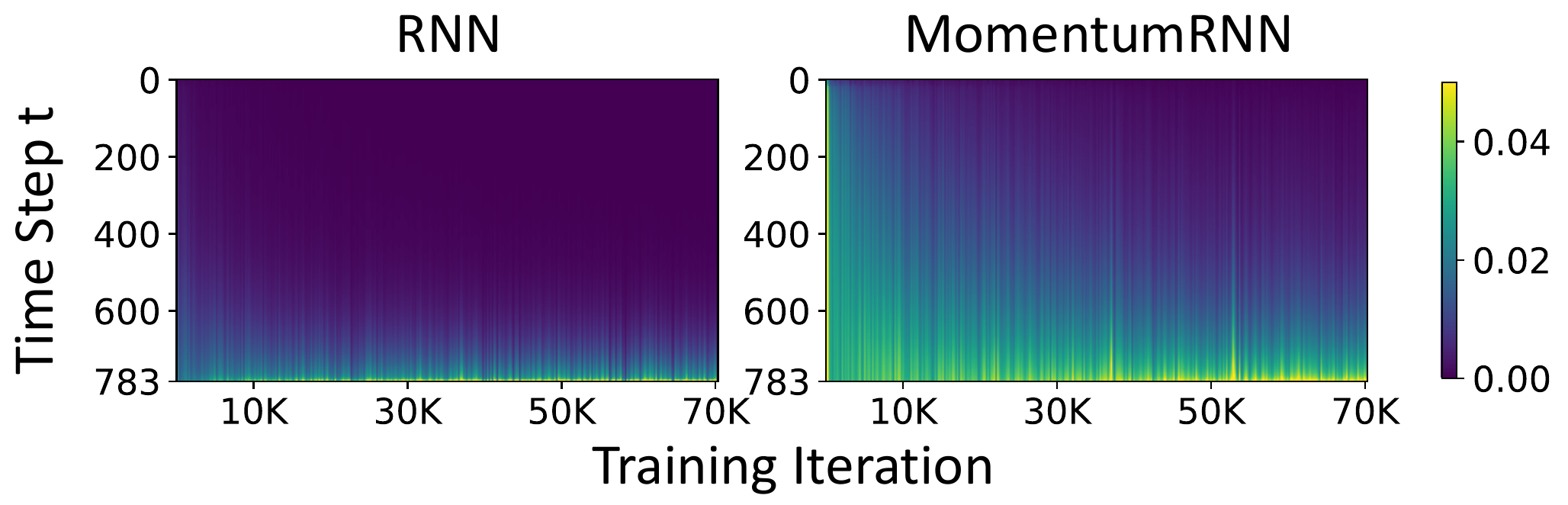}
\vskip -0.2cm
\caption{$\ell_2$ norm of the gradients of the loss $\mathcal{L}$ w.r.t. the state vector $\vh_t$ at each time step $t$ for RNN (left) and MomentumRNN (right). MomentumRNN does not suffer from vanishing gradients.}
\label{fig:stability}
\end{figure}

\subsection{Beyond MomentumRNN: NAG and Adam principled RNNs}\label{sec:Adam:cell} 
There are several other advanced formalisms of momentum existing in optimization, which can be leveraged for RNN architecture design. In this subsection, we present two additional variants of MomentumRNN that are derived from the Nesterov accelerated gradient (NAG)-style momentum with restart \cite{nesterov1983method,wang2020scheduled} and Adam \cite{kingma2014adam}.

{\bf NAG Principled RNNs.} The momentum-accelerated GD can be further accelerated by  replacing the constant momentum coefficient $\mu$ in~\eqref{eq:momentum:cell} with the NAG-style momentum, i.e. setting $\mu$ to $(t-1)/(t+2)$ at the $t$-th iteration. Furthermore, we can accelerate NAG by resetting the momentum to 0 after every $F$ iterations, i.e. $\mu = (t \mod F)/((t\mod F) + 3)$, which is the NAG-style momentum with a scheduled restart of the appropriately selected frequency $F$ \cite{wang2020scheduled}. For convex optimization, NAG has a convergence rate $O(1/t^2)$, which is significantly faster than GD or GD with constant momentum whose convergence rate is $O(1/t)$. Scheduled restart not only accelerates NAG to a linear convergence rate $O(\alpha^{-t}) (0<\alpha <1)$ under mild extra assumptions but also stabilizes the NAG iteration \cite{wang2020scheduled}.
We call the MomentumRNN with the NAG-style momentum and scheduled restart momentum the NAG-based RNN and the scheduled restart RNN (SRRNN), respectively.

{\bf Adam Principled RNNs.} Adam~\cite{kingma2014adam} leverages the moving average of historical gradients and entry-wise squared gradients to accelerate the stochastic gradient dynamics. We use Adam to accelerate \eqref{eq:recurent-cell:PGD} and end up with the following iteration
\begin{equation}
\label{eq:Adam:cell}
\begin{aligned}
\vp_t &= \mu \vp_{t-1} + (1-\mu)\vu_t\\
\vm_t &= \beta\vm_{t-1} + (1-\beta)\vu_t\odot\vu_t\\
\vh_t &= \phi(\vh_{t-1} - s \frac{\vp_t}{\sqrt{\vr_t} +\epsilon}),
\end{aligned}
\end{equation}
where $\mu, s, \beta > 0$ are hyperparameters, $\epsilon$ is a small constant and chosen to be $10^{-8}$ by default, and $\odot$/$\sqrt{\cdot}$ denotes the entrywise product/square root\footnote{In contrast to Adam, we do not normalize $\vp_t$ and $\vm_t$ since they can be absorbed in the weight matrices.}. Again, let $\vv_t = -\mU\vp_t$, we rewrite \eqref{eq:Adam:cell} as follows
\begin{equation*}
\label{eq:Adam:momentum:cell}
\begin{aligned}
\vv_t &= \mu \vv_{t-1} + (1-\mu)\mW\vx_t\\
\vm_t &= \beta\vm_{t-1} + (1-\beta)\vu_t\odot\vu_t\\
\vh_t &= \sigma(\mU\vh_{t-1} + s \frac{\vv_t}{\sqrt{\vm_t} +\epsilon}).
\end{aligned}
\end{equation*}
As before, here $\vu_t := \mU^{-1}\mW\vx_t$. Computing $\Ub^{-1}$ is expensive. Our experiments suggest that replacing $\vu_t\odot\vu_t$ by $\mW\vx_t \odot \mW\vx_t$ is sufficient and more efficient to compute. In our implementation, we also relax $\vv_t = \mu \vv_{t-1} + (1-\mu)\mW\vx_t$ to $\vv_t = \mu \vv_{t-1} + s\mW\vx_t$  that follows the momentum in the MomentumRNN \eqref{eq:momentum:cell} for better performance. Therefore, we propose the \emph{AdamRNN} that is given by
\begin{equation}
\label{eq:Adam:momentum:cell:final}
\begin{aligned}
\vv_t &= \mu \vv_{t-1} + s\mW\vx_t\\
\vm_t &= \beta\vm_{t-1} + (1-\beta)(\mW\vx_t\odot\mW\vx_t)\\ \vh_t &= \sigma(\mU\vh_{t-1} + \frac{\vv_t}{\sqrt{\vm_t} +\epsilon}).
\end{aligned}
\end{equation}
In AdamRNN, if $\mu$ is set to 0, we achieve another new RNN, which obeys the RMSProp gradient update rule~\cite{Tieleman2012}; which we call \emph{RMSPropRNN}.

\begin{remark}
Both AdamRNN and RMSPropRNN can also be derived by letting $\vv_t = -\vp_t$ and $\widehat{\mW} := \mU^{-1}\mW$ as in Remark~\ref{rm:different-parameterization}. This parameterization yields the following formulation for AdamRNN
\begin{equation*}
\label{eq:Adam:momentum:cell:final-2}
\begin{aligned}
\vv_t &= \mu \vv_{t-1} + s\widehat{\mW}\vx_t\\
\vm_t &= \beta\vm_{t-1} + (1-\beta)(\widehat{\mW}\vx_t\odot\widehat{\mW}\vx_t)\\
\vh_t &= \sigma(\mU\vh_{t-1} + \frac{\mU\vv_t}{\sqrt{\vm_t} +\epsilon}).
\end{aligned}
\end{equation*}
Here, we simply need to learn $\widehat{\mW}$ and $\mU$ without any relaxation. In contrast, we relaxed $\mU^{-1}$ to an identity matrix in~\eqref{eq:Adam:momentum:cell:final}. Our experiments suggest that both parameterizations yield similar results.
\end{remark}

\subsection{Experimental results}\label{sec:experiments-momentum-rnn}
In this subsection, we evaluate the effectiveness of our momentum approach in designing RNNs in terms of convergence speed and accuracy. We compare the performance of the MomentumLSTM with the baseline LSTM~\cite{hochreiter1997long} in the following tasks: 1) the object classification task on pixel-permuted MNIST~\cite{le2015simple}, 2) the speech prediction task on the TIMIT dataset~\cite{arjovsky2016unitary,pmlr-v80-helfrich18a,wisdom2016full,mhammedi2017efficient,pmlr-v48-henaff16}, 3) the celebrated copying and adding tasks \cite{hochreiter1997long,arjovsky2016unitary}, and 4) the language modeling task on the Penn TreeBank (PTB) dataset~\cite{mikolov2010recurrent}. These four tasks are among standard benchmarks to measure the performance of RNNs and their ability to handle long-term dependencies. Also, these tasks cover different data modalities -- image, speech, and text data -- as well as a variety of model sizes, ranging from thousands to millions of parameters with one (MNIST and TIMIT tasks) or multiple (PTB task) recurrent cells in concatenation. Our experimental results confirm that MomentumLSTM converges faster and yields better test accuracy than the baseline LSTM across tasks and settings. We also discuss the AdamLSTM, RMSPropLSTM, and scheduled restart LSTM (SRLSTM) and show their advantage over MomentumLSTM in specific tasks. 
All of our results are averaged over 5 runs with different seeds. For MNIST and TIMIT experiments, we use the baseline codebase provided by~\cite{dtrivgithub}. For PTB experiments, we use the baseline codebase provided by~\cite{ptbgithub}.

\subsubsection{Pixel-by-Pixel MNIST}
\label{sec:mnist-exp}
In this task, we classify image samples of hand-written digits from the MNIST dataset~\cite{lecun2010mnist} into one of the ten classes. Following the implementation of~\cite{le2015simple}, we flatten the image of original size 28 $\times$ 28 pixels and feed it into the model as a sequence of length 784. In the unpermuted task (MNIST), the sequence of pixels is processed row-by-row. In the permuted task (PMNIST), a fixed permutation is selected at the beginning of the experiments and then applied to both training and test sequences. We summarize the results in Table~\ref{tab:mnist}. Our experiments show that \emph{MomentumLSTM achieves better test accuracy than the baseline LSTM in both MNIST and PMNIST digit classification tasks} using different numbers of hidden units (i.e. $N=128, 256$). Especially, the improvement is significant on the PMNIST task, which is designed to test the performance of RNNs in the context of long-term memory. Furthermore, we notice that \emph{MomentumLSTM converges faster than LSTM} in all settings. Figure~\ref{fig:loss-vs-iters} 
corroborates this observation when using $N=256$ hidden units.

\begin{table}[!t]
\vspace{-0.02in}
    \caption{Best test accuracy at the MNIST and PMNIST tasks (\%). We use the baseline results reported in \cite{pmlr-v80-helfrich18a},~\cite{wisdom2016full},~\cite{vorontsov2017orthogonality}. Our proposed models outperform the baseline LSTM. Among the models using $N=256$ hidden units, RMSPropLSTM yields the best results. 
    }
\vspace{-0.05in}
\label{tab:mnist}
\begin{center}
\begin{footnotesize}
\begin{sc}
\begin{tabular}{lclcc}
    \toprule
    Model & n & \# params & MNIST & PMNIST \\
    \midrule
    LSTM & $128$ & $\approx 68K$ & $98.70$\cite{pmlr-v80-helfrich18a},$97.30$  \cite{vorontsov2017orthogonality} & $92.00$  \cite{pmlr-v80-helfrich18a},$92.62$ \cite{vorontsov2017orthogonality} \\
    LSTM & $256$ & $\approx 270K$ & $98.90$ \cite{pmlr-v80-helfrich18a}, $98.50$ \cite{wisdom2016full} & $92.29$  \cite{pmlr-v80-helfrich18a}, $92.10$  \cite{wisdom2016full} \\
    \midrule
    MomentumLSTM & $128$ & $\approx 68K$ & $\bf{99.04 \pm 0.04}$ & $\bf{93.40 \pm 0.25}$\\
    MomentumLSTM & $256$ & $\approx 270K$ & $\bf{99.08 \pm 0.05}$ & $\bf{94.72 \pm 0.16}$\\
    \midrule
    AdamLSTM & $256$ & $\approx 270K$ & $99.09 \pm 0.03$ & $95.05 \pm 0.37$\\
    RMSPropLSTM & $256$ & $\approx 270K$ & $\bf{99.15 \pm 0.06}$ & $\bf{95.38 \pm 0.19}$\\
    SRLSTM & $256$ & $\approx 270K$ & $ 99.01 \pm 0.07 $ & $93.82 \pm 1.85$\\
    \bottomrule
\end{tabular}
\end{sc}
\end{footnotesize}
\end{center}
\end{table}

\subsubsection{TIMIT speech dataset}
\label{sec:timit-exp}
We study how MomentumLSTM performs on audio data with speech prediction experiments on the TIMIT speech dataset~\cite{garofolo1993timit}, which is a collection of real-world speech recordings. As first proposed by~\cite{wisdom2016full}, the recordings are downsampled to 8kHz and then transformed into log-magnitudes via a short-time Fourier transform (STFT). The task accounts for predicting the next log-magnitude given the previous ones. We use the standard train/validation/test separation in~\cite{wisdom2016full,lezcano2019cheap,casado2019trivializations}, thereby having 3640 utterances for the training set with a validation set of size 192 and a test set of size 400.

The results for this TIMIT speech prediction are shown in Table~\ref{tab:timit}. Results are reported on the test set using the model parameters
that yield the best validation loss. Again, we see the advantage of MomentumLSTM over the baseline LSTM. In particular, MomentumLSTM yields much better prediction accuracy and faster convergence speed compared to
LSTM. Figure~\ref{fig:loss-vs-iters} 
shows the convergence of MomentumLSTM vs. LSTM when using $N=158$ hidden units.

\begin{figure}[!t]
\centering
\includegraphics[width=0.9\linewidth]{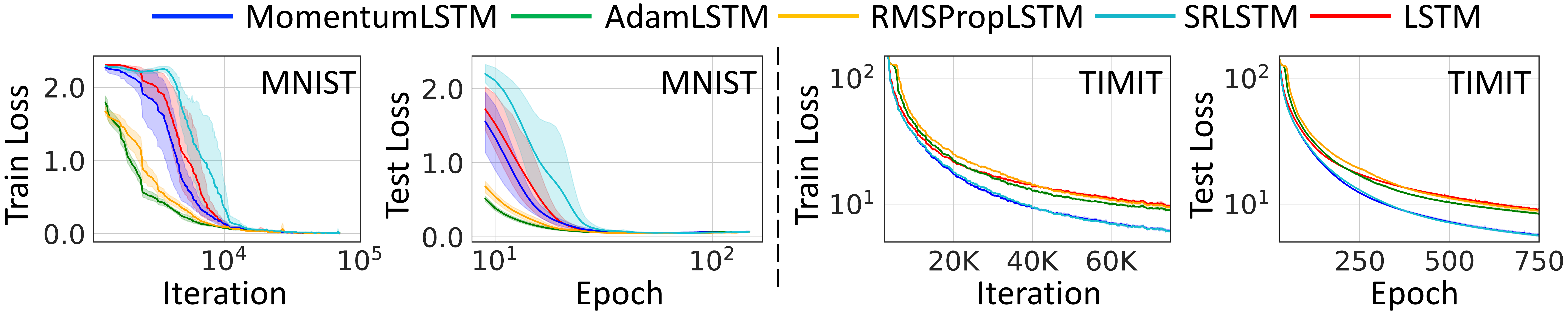}
\vskip -0.3cm
\caption{Train and test loss  of MomentumLSTM, AdamLSTM, RMSPropLSTM, SRLSTM, and LSTM for MNIST (left two panels) and TIMIT (right two panels) tasks. MomentumLSTM converges faster than LSTM in both tasks. For MNIST, AdamLSTM and RMSPropLSTM converge fastest. For TIMIT, MomentumLSTM and SRLSTM converge fastest.}
\label{fig:loss-vs-iters}
\end{figure}

\begin{table}[!t]
\vspace{-0.1in}
    \caption{Test and validation MSEs at the end of the epoch with the lowest validation
MSE for the TIMIT task. All of our proposed models outperform the baseline LSTM. Among models using $N=158$ hidden units, SRLSTM performs the best.} 
\label{tab:timit}
\begin{center}
\begin{footnotesize}
\begin{sc}
\begin{tabular}{lclcc}
    \toprule
    Model & n & \# params & Val. MSE & Test MSE \\
    \midrule
    LSTM & $84$ & $\approx 83K$ & $14.87 \pm 0.15$ & $14.94 \pm 0.15$  \\
    LSTM & $120$ & $\approx 135K$ & $11.77 \pm 0.14$  & $11.83 \pm 0.12$  \\
    LSTM & $158$ & $\approx 200K$ & $9.33 \pm 0.14$  & $9.37 \pm 0.14$  \\
    \midrule
    MomentumLSTM & $84$ & $\approx 83K$ & $\bf{10.90 \pm 0.19}$ & $\bf{10.98 \pm 0.18}$\\
    MomentumLSTM & $120$ & $\approx 135K$ & $\bf{8.00 \pm 0.30}$ & $\bf{8.04 \pm 0.30}$\\
    MomentumLSTM & $158$ & $\approx 200K$ & $\bf{5.86 \pm 0.14}$ & $\bf{5.87 \pm 0.15}$\\
    \midrule
    AdamLSTM & $158$ & $\approx 200K$ & $8.66 \pm 0.15$ & $8.69 \pm 0.14$\\
    RMSPropLSTM & $158$ & $\approx 200K$ & $9.13 \pm 0.33$ & $9.17 \pm 0.33$\\
    SRLSTM & $158$ & $\approx 200K$ & $\bf{5.81 \pm 0.10}$ & $\bf{5.83 \pm 0.10}$\\
    \bottomrule
\end{tabular}
\end{sc}
\end{footnotesize}
\end{center}
\vskip -0.05in
\end{table}

\subsubsection{Copying and adding tasks}
\label{sec:copying-adding-maintext}
Two other important tasks for measuring the ability of a model to learn long-term dependency are the copying and adding tasks \cite{hochreiter1997long,arjovsky2016unitary}. In both copying and adding tasks, avoiding vanishing/exploding gradients becomes more relevant when the input sequence length increases. We compare the performance of MomentumLSTM over LSTM on these tasks. We also examine the performance of AdamLSTM, RMSPropLSTM, and SRLSTM on the same tasks. We 
summarize our results in Figure~\ref{fig:copy-add-task}. 
In copying task for sequences of length 2K, MomentumLSTM obtains slightly better final training loss than the baseline LSTM (0.009 vs.\ 0.01). In adding task for sequence of length 750, both models achieve similar training loss of 0.162. However, AdamLSTM and RMSPropLSTM significantly outperform the baseline LSTM.
\begin{figure}[t!]
\centering
\includegraphics[width=0.65\linewidth]{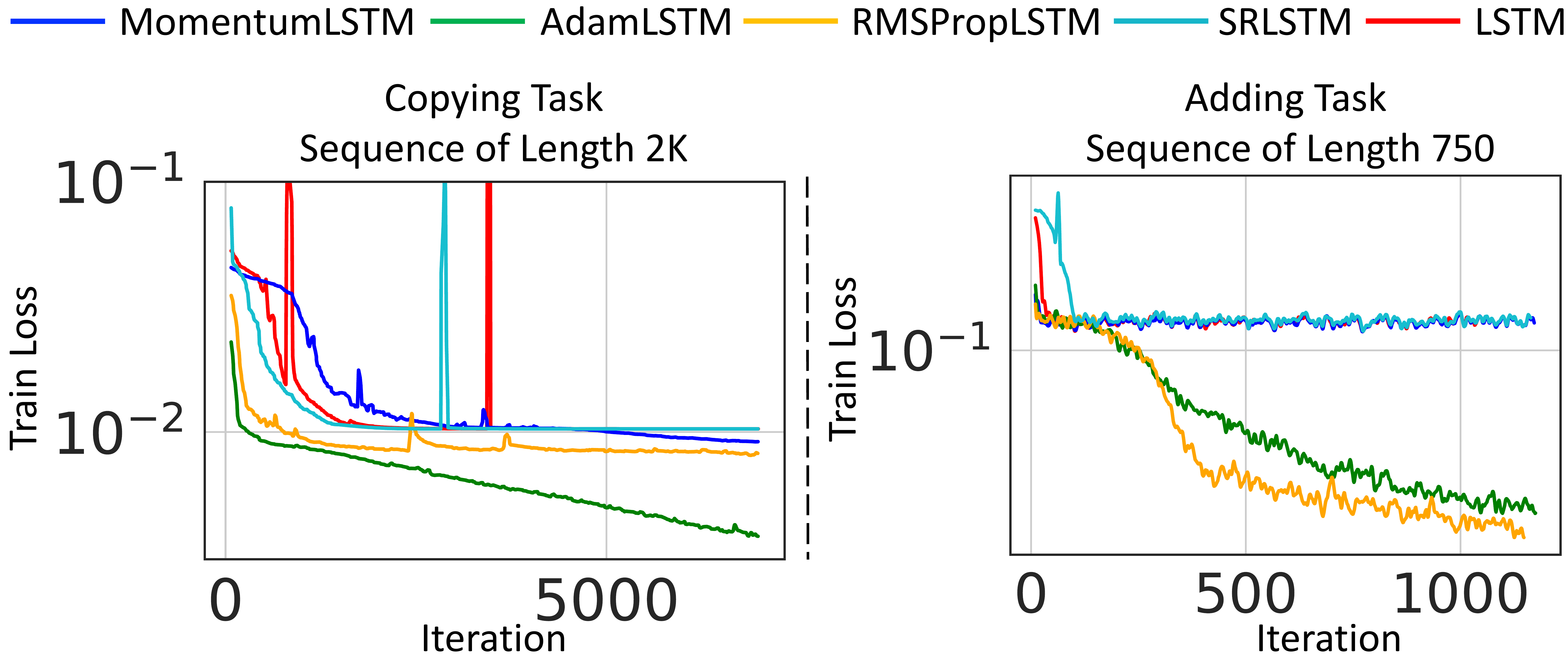}
\vskip -0.2cm
\caption{Train loss vs. iteration for (left) copying task with sequence length 2K and (right) adding task with sequence length 750. AdamLSTM and RMSPropLSTM converge faster and to better final losses than other models. 
}
\label{fig:copy-add-task}
\end{figure}

\subsubsection{Word-level Penn TreeBank}
To study the advantage of MomentumLSTM over LSTM on text data, we perform language
modeling on a preprocessed version of the  
PTB dataset~\cite{mikolov2010recurrent}, 
which has been a standard benchmark for evaluating language models. 
Unlike the baselines used in 
\begin{wrapfigure}{r}{.5\textwidth}
\vspace{0.3cm}
\includegraphics[width=\linewidth]{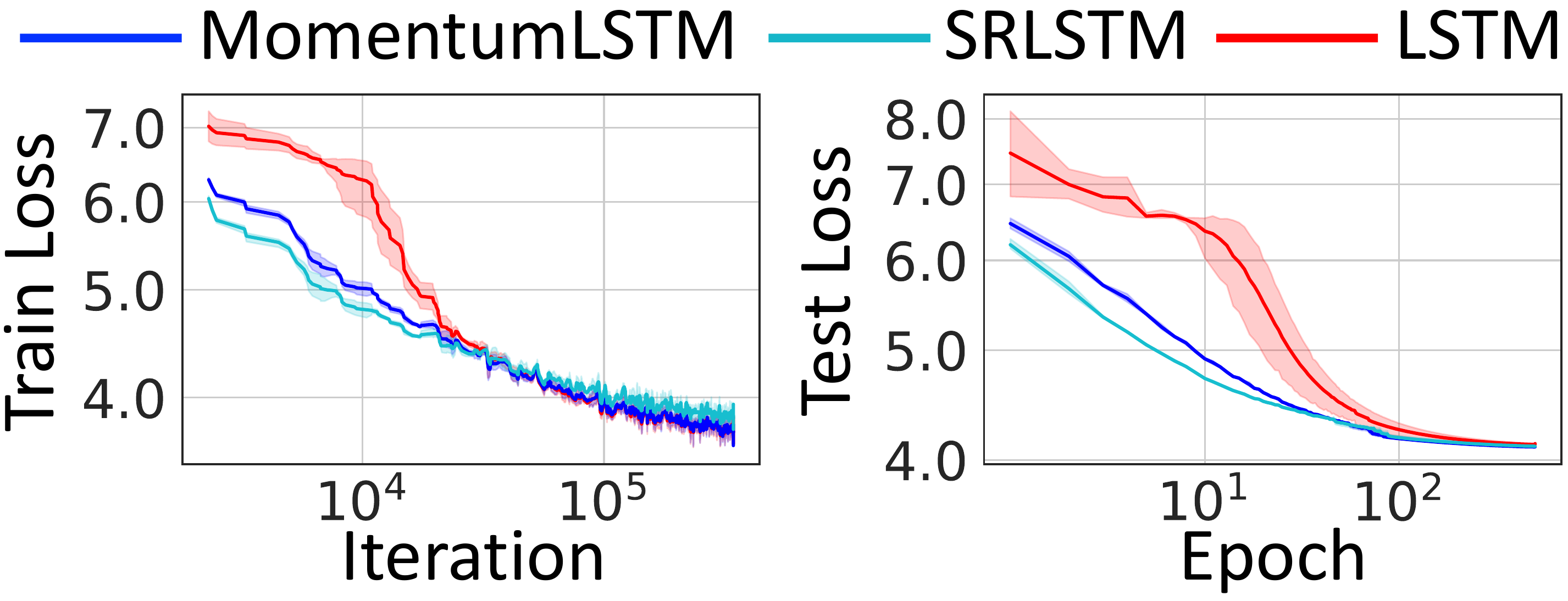}
\vspace{-0.2in}
\caption{Train (left) and test loss (right) of MomentumLSTM (blue), SRLSTM (cyan), and LSTM (red) for the Penn Treebank language modeling tasks at word level.}\label{fig:train-loss-vs-iters-ptb}\vspace{-0.5cm}
\end{wrapfigure}
the (P)MNIST and TIMIT experiments which 
contain one LSTM cell, in this PTB experiment, we use a three-layer LSTM model, 
which contains three concatenated LSTM cells, as the baseline. The size of this model in terms of the number of parameters is also much larger than those in the (P)MNIST and TIMIT experiments. Table~\ref{tab:ptb-word-level} shows the test and validation perplexity (PPL) using the model parameters that yield the best validation loss. Again, MomentumLSTM achieves better perplexities and converges faster than the baseline LSTM (see Figure~\ref{fig:train-loss-vs-iters-ptb}).
\begin{table}[t!]
    \caption{Model test perplexity at the end of the epoch with the lowest validation perplexity for the Penn Treebank language modeling task (word level).} 
\vspace{-0.05in}
\label{tab:ptb-word-level}
\begin{center}
\begin{small}
\begin{sc}
\begin{tabular}{l l c c}
    \toprule
    Model & \# params & Val. PPL & Test PPL \\
    \midrule
    \lstm & $\approx 24M$ & $61.96 \pm 0.83$ & $59.71 \pm 0.99$  \\
    \midrule
    MomentumLSTM & $\approx 24M$ & $\bf{60.71 \pm 0.24}$ & $\bf{58.62 \pm 0.22}$\\
    \midrule
    SRLSTM & $\approx 24M$ & $61.12 \pm 0.68$ & $58.83 \pm 0.62$\\
    \bottomrule
\end{tabular}
\end{sc}
\end{small}
\end{center}
\vskip -0.05in
\end{table}

\subsubsection{NAG and Adam principled RNNs}
Finally, we evaluate AdamLSTM, RMSPropLSTM and SRLSTM on all tasks. 
For (P)MNIST and TIMIT tasks, we summarize the test accuracy of the trained models in Tables~\ref{tab:mnist} and~\ref{tab:timit} and provide the plots of train and test losses in Figure~\ref{fig:loss-vs-iters}. We observe that though AdamLSTM and RMSPropLSTM work better than the MomentumLSTM at (P)MNIST task, they yield worse results at the TIMIT task. Interestingly, SRLSTM shows an opposite behavior - better than MomentunLSTM at TIMIT task but worse at (P)MNIST task. For the copying and adding tasks, Figure~\ref{fig:copy-add-task} shows that AdamLSTM and RMSPropLSTM converge faster and to better final training loss than other models in both tasks. Finally, for the PTB task, both MomentumLSTM and SRLSTM outperform the baseline LSTM (see Figure~\ref{fig:train-loss-vs-iters-ptb} and Table~\ref{tab:ptb-word-level}). However, in this task, AdamLSTM and RMSPropLSTM yields slightly worse performance than the baseline LSTM. In particular, test PPL for AdamLSTM and RMSPropLSTM are $61.11 \pm 0.31$, and $64.53 \pm 0.20$, respectively, which are higher than the test PPL for LSTM ($59.71 \pm 0.99$). We observe that there is no model that win in all tasks. This is somewhat expected, given the connection between our model and its analogy to optimization algorithm. An optimizer needs to be chosen for each particular task, and so is for our MomentumRNN. All of our models outperform the baseline LSTM.

\section{Neural ODEs}\label{sec-neural-ODE}
In this section, we derive the continuous limit of heavy-ball momentum and then present a new class of neural ODEs, named heavy-ball neural ODEs (HBNODEs), which have two properties that imply practical advantages over NODEs: 1) The adjoint state of an HBNODE also satisfies an HBNODE, accelerating both forward and backward ODE solvers, thus significantly reducing the NFEs and improving the utility of trained models. 2) The spectrum of HBNODEs is well structured, enabling effective learning of long-term dependencies from complex sequential data. We verify the advantages of HBNODEs over NODEs on benchmark tasks, including image classification, learning complex dynamics, and sequential modeling. Our method requires remarkably fewer forward and backward NFEs, is more accurate, and learns long-term dependencies more effectively than the other ODE-based neural network models. Part of the results in this section has been accepted for publication at NeurIPS 2021 \cite{HBNODE2021}.

\subsection{Recap on neural ODEs}
Neural ODEs (NODEs) are a family of continuous-depth machine learning models whose forward and backward propagations rely on solving an ODE and its adjoint equation \cite{chen2018neural}. NODEs model the dynamics of hidden features $\vh(t)\in \RR^N$ using an ODE, which is parametrized by a neural network $f(\vh(t),t,\theta)\in \RR^N$ with learnable parameters $\theta$, i.e.,
\begin{equation}\label{eq:NODE}
\frac{d\vh(t)}{dt} = f(\vh(t),t,\theta).
\end{equation}
Starting from the input $\vh(t_0)$, NODEs obtain the output $\vh(T)$ by solving \eqref{eq:NODE} for $t_0 \le t\le T$ with the initial value $\vh(t_0)$, using a black-box numerical ODE solver. The NFEs that the black-box ODE solver requires in a single forward pass is an analogue for the continuous-depth models \cite{chen2018neural} to the depth of networks in ResNets \cite{He2015}. The loss between 
$\vh(T)$ and the ground truth is denoted by $\mathcal{L}(\vh(T))$; we update parameters $\theta$ using the following gradient \cite{pontryagin2018mathematical}
\begin{equation}\label{eq:dL-dtheta}
\frac{d\mathcal{L}(\vh(T))}{d\theta} = \int_{t_0}^T \va(t)\frac{\partial f(\vh(t),t,\theta)}{\partial\theta}dt,
\end{equation}
where $\va(t):=\partial\mathcal{L}/\partial\vh(t)$ 
satisfies the following adjoint equation
\begin{equation}\label{eq:adjoint}
\frac{d\va(t)}{dt} = -\va(t)\frac{\partial f(\vh(t),t,\theta)}{\partial\vh}.
\end{equation}
NODEs are flexible in learning from irregularly-sampled sequential data and particularly suitable for learning  complex dynamical systems \cite{chen2018neural,NEURIPS2019_42a6845a,NEURIPS2019_227f6afd,norcliffe2020_sonode,NEURIPS2020_e562cd9c,KidgerMFL20}, which can be trained by efficient algorithms \cite{Quaglino2020SNODE:,NEURIPS2020_c24c6525,zhuang2021mali}.
The drawback of NODEs is also prominent. In many ML tasks, NODEs require very high NFEs in both training and inference, especially in high accuracy settings where a lower tolerance is needed. The NFEs increase rapidly with training;
high NFEs reduce computational speed and accuracy of NODEs and can lead to blow-ups in the worst-case scenario \cite{grathwohl2018scalable,NEURIPS2019_21be9a4b,massaroli2020dissecting,norcliffe2020_sonode}. As an illustration, we train NODEs for CIFAR10 classification using the same model and experimental settings as in \cite{NEURIPS2019_21be9a4b}, except using a tolerance of {$10^{-5}$}; Fig.~\ref{fig:node-hbnode-cifar10} shows both forward and backward NFEs and the training time of different ODE-based models; we see that NFEs and computational times increase
very rapidly for NODE, ANODE \cite{NEURIPS2019_21be9a4b}, and SONODE \cite{norcliffe2020_sonode}. More results on the large NFE and degrading utility issues for different benchmark experiments are available in Section~\ref{sec:experiments}. Another issue is that NODEs often fail to effectively learn long-term dependencies in sequential data \cite{lechner2020learning}, discussed in subsection~\ref{sec:lowerbounds}.

\begin{figure}
\begin{center}
\begin{tabular}{cccc}
\hskip -0.3cm
\includegraphics[width=0.21\columnwidth]{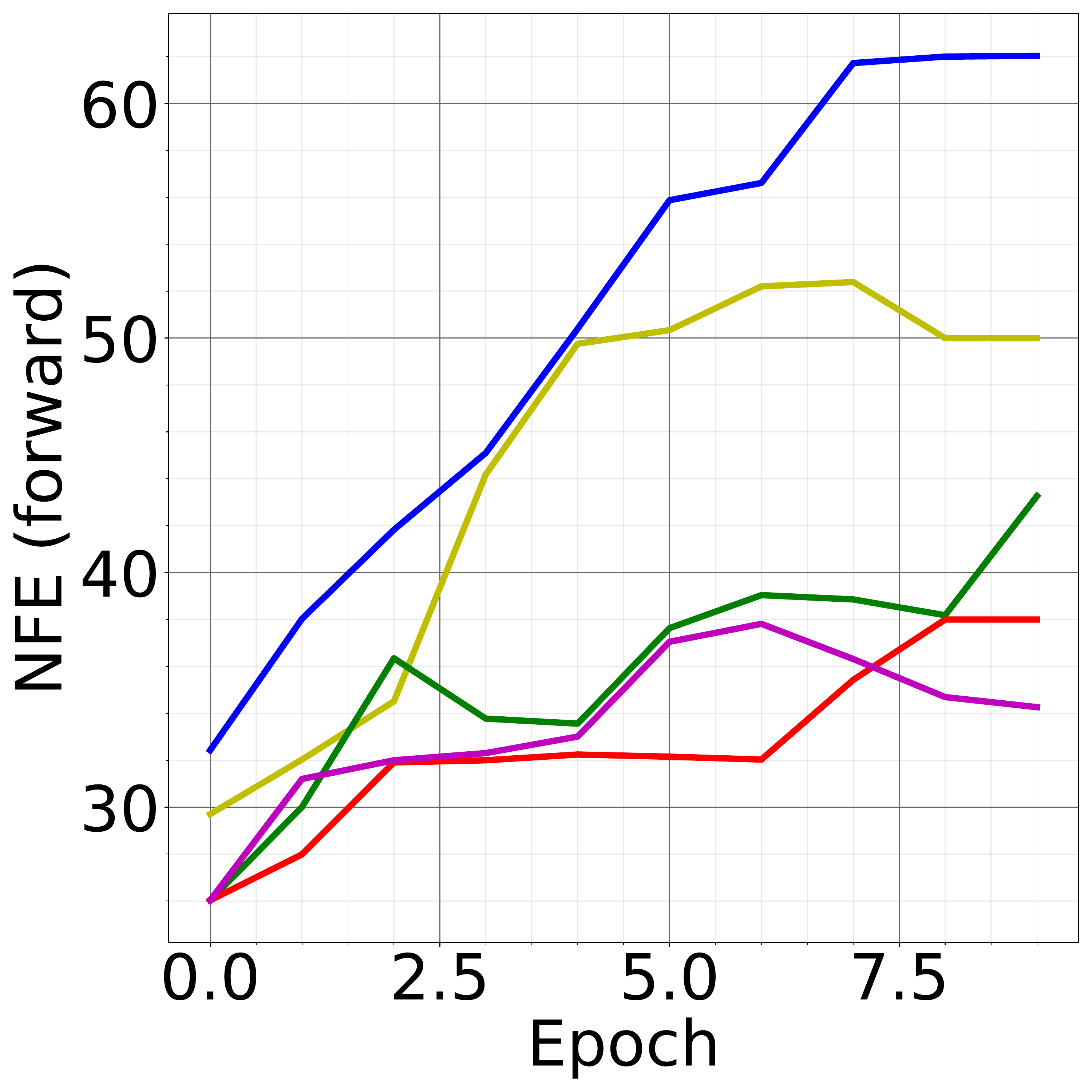}&
\hskip -0.3cm
\includegraphics[width=0.21\columnwidth]{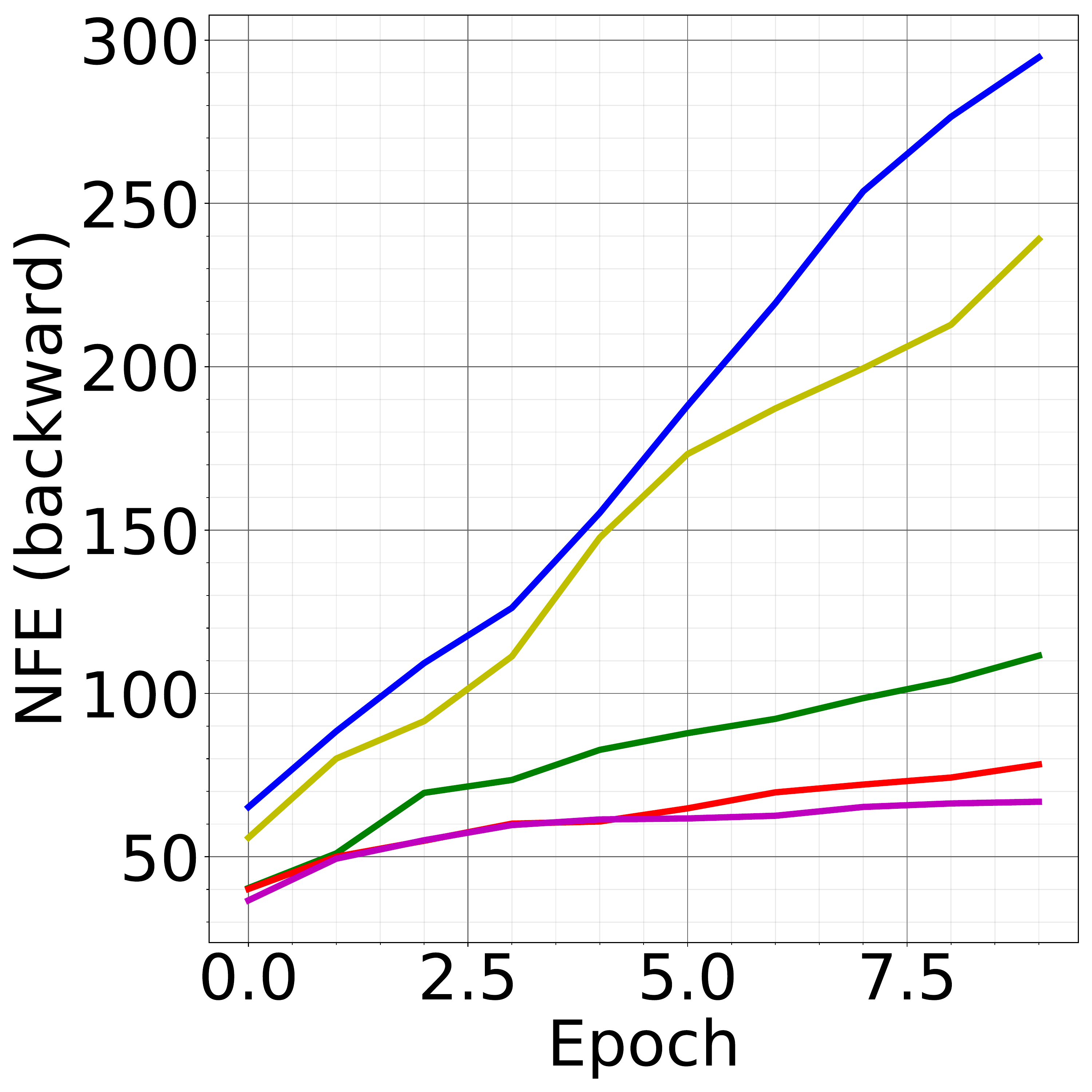}& 
\hskip -0.3cm
\includegraphics[width=0.21\columnwidth]{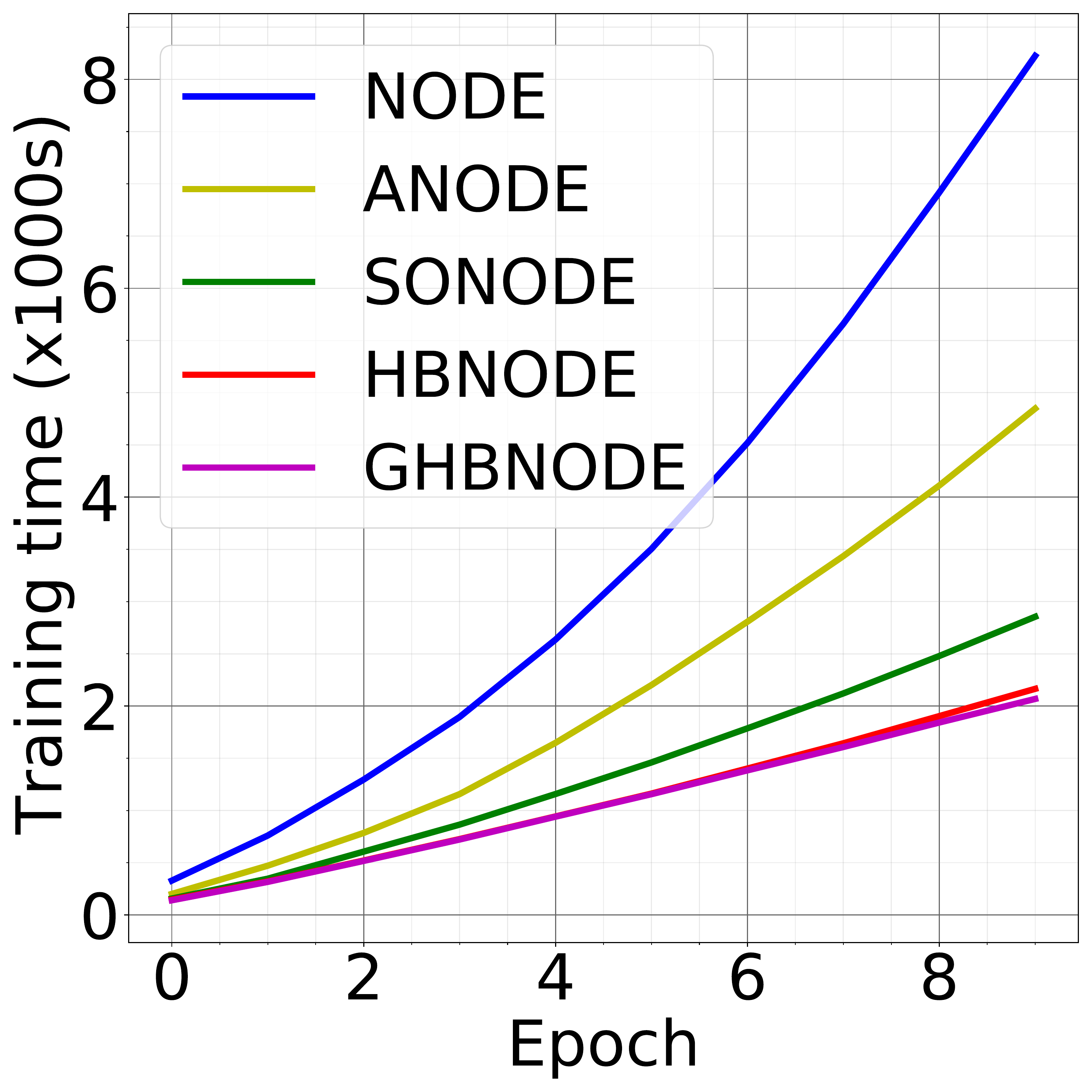} &
\hskip -0.3cm
\includegraphics[width=0.21\columnwidth]{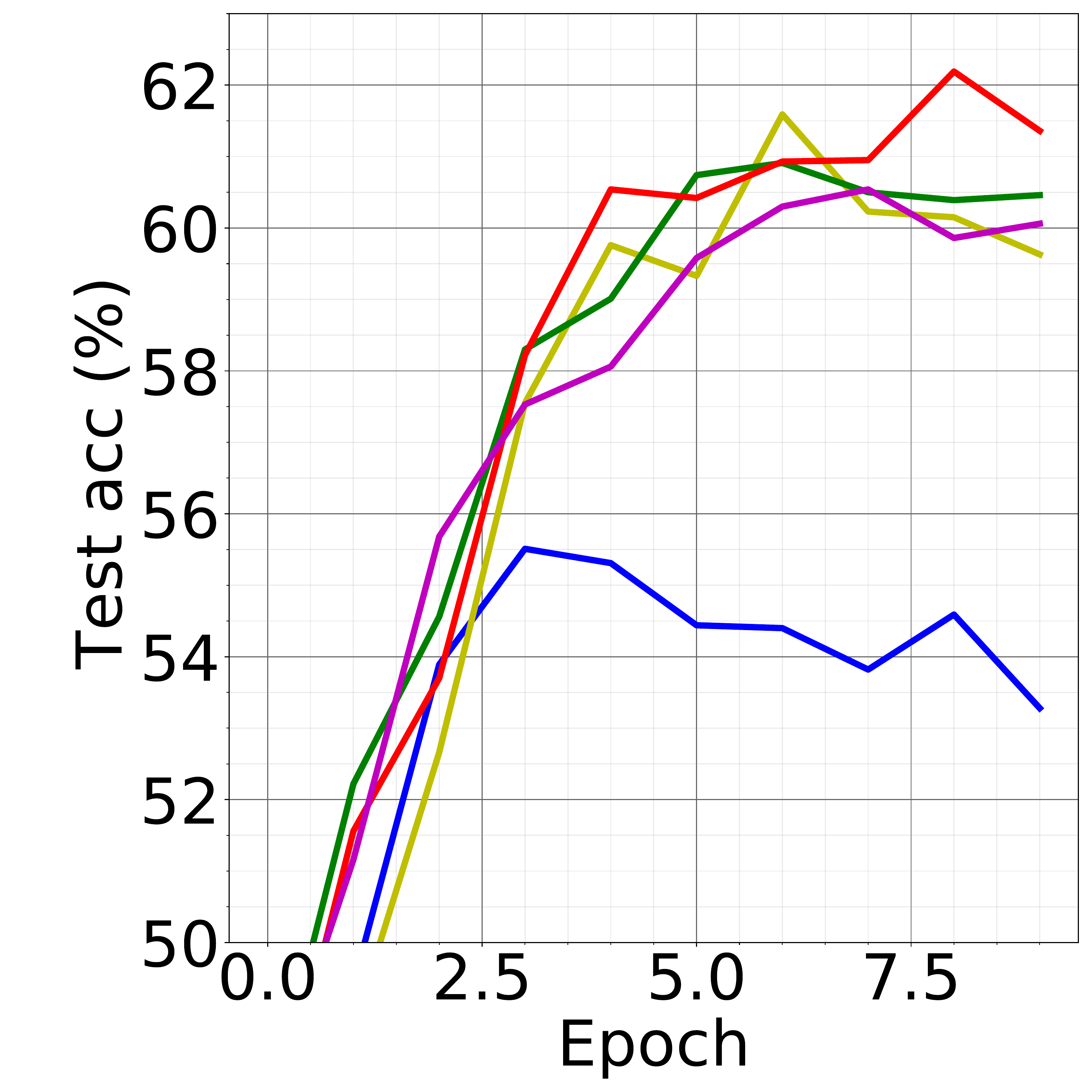}\\
  \end{tabular}
  \end{center}
  \vskip -0.2in
  \caption{Contrasting NODE, ANODE, SONODE, HBNODE, and GHBNODE for CIFAR10 classification in NFEs, training time, and test accuracy. (Tolerance: {$10^{-5}$}, see subsection~\ref{sec:image-classification} for experimental details.)}\label{fig:node-hbnode-cifar10}
\end{figure}

\subsection{Heavy-ball neural ODEs}\label{sec:HBNODE}
\subsubsection{Heavy-ball ODE}
We derive the HBODE from the heavy-ball momentum method. 
For any fixed step size $s$, let 
$
\vm^k := ({\vx^{k+1}-\vx^k})/{\sqrt{s}},
$ 
and let $\beta := 1-\gamma \sqrt{s}$, where $\gamma\geq 0$ is another hyperparameter. Then we can rewrite \eqref{eq:HB1} as 
\begin{equation}\label{eq:Appendix-HB3}
\vm^{k+1}=(1-\gamma\sqrt{s})\vm^k -\sqrt{s}\nabla F(\vx^k);\ \vx^{k+1} = \vx^k + \sqrt{s}\vm^{k+1}.
\end{equation}
Let $s\to 0$ in \eqref{eq:Appendix-HB3}; we obtain the following system of first-order ODEs, 
\begin{equation}\label{eq:Appendix-HBNODE1}
\frac{d\vx(t)}{dt}= \vm(t);\ \frac{d\vm(t)}{dt} = -\gamma \vm(t) - \nabla F(\vx(t)).
\end{equation}
This can be further rewritten as a second-order heavy-ball ODE (HBODE), which also models a damped oscillator,
\begin{equation}\label{eq:Appendix-HBNODE2}
\frac{d^2\vx(t)}{dt^2} + \gamma \frac{d\vx(t)}{dt} = -\nabla F(\vx(t)).
\end{equation}

\subsubsection{Heavy-ball neural ODEs}
{Similar to NODE, we parameterize $-\nabla F$ in \eqref{eq:Appendix-HBNODE2} using a neural network $f(\vh(t),t,\theta)$,} resulting in the following HBNODE with initial position $\vh(t_0)$ and momentum $\vm(t_0):=d\vh/dt(t_0)$, 
\begin{equation}\label{eq:HBNODE-2nd}
\frac{d^2\vh(t)}{dt^2} + \gamma \frac{d\vh(t)}{dt} = f(\vh(t),t,\theta),
\end{equation}
where $\gamma\geq 0$ is the damping parameter, which can be set as a tunable or a learnable hyperparmater with positivity constraint. In the trainable case, we use $\gamma = \epsilon \cdot \text{sigmoid}(\omega)$ for a trainable $\omega\in \mathbb{R}$ and a fixed tunable upper bound $\epsilon$ (we set $\epsilon=1$ below).
According to \eqref{eq:Appendix-HBNODE1}, HBNODE \eqref{eq:HBNODE-2nd} is equivalent to 
\begin{equation}\label{eq:HBNODE-1st}
\frac{d\vh(t)}{dt} = \vm(t); 
\quad \frac{d\vm(t)}{dt} = -\gamma \vm(t) + f(\vh(t),t,\theta).
\end{equation}
Equation \eqref{eq:HBNODE-2nd} (or equivalently, the system \eqref{eq:HBNODE-1st}) defines the forward ODE for the HBNODE, {and} we can use either the first-order (Prop.~\ref{prop:adjoint-HBNODE-1st}) or the second-order (Prop.~\ref{prop:adjoint-HBNODE}) adjoint sensitivity method to update the parameter $\theta$ \cite{norcliffe2020_sonode}.

\begin{proposition}[Adjoint equation for HBNODE]\label{prop:adjoint-HBNODE}
The adjoint state $\va(t):=\partial\mathcal{L}/\partial\vh(t)$ for the HBNODE \eqref{eq:HBNODE-2nd} satisfies the following HBODE with the same damping parameter $\gamma$ as that in \eqref{eq:HBNODE-2nd},
\begin{equation}\label{eq:adjoint-HBNODE-2nd}
\frac{d^2\va(t)}{dt^2} - \gamma\frac{d\va(t)}{dt} = \va(t) \frac{\partial f}{\partial\vh}(\vh(t),t,\theta). 
\end{equation}
\end{proposition}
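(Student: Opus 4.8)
The plan is to derive the adjoint equation for the HBNODE by applying the standard adjoint sensitivity method to the equivalent first-order system \eqref{eq:HBNODE-1st}, and then eliminate the auxiliary adjoint variable to recover a single second-order equation. First I would introduce the state vector $\vz(t) := (\vh(t),\vm(t))^\top$ so that \eqref{eq:HBNODE-1st} becomes a first-order ODE $d\vz/dt = g(\vz(t),t,\theta)$ with $g(\vh,\vm,t,\theta) = (\vm,\ -\gamma\vm + f(\vh,t,\theta))^\top$. The general adjoint theorem \eqref{eq:adjoint} then tells us that the adjoint $\va_{\vz}(t) := \partial\mathcal{L}/\partial\vz(t)$ satisfies $d\va_{\vz}/dt = -\va_{\vz}(t)\,\partial g/\partial\vz$. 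Writing $\va_{\vz} = (\va, \vb)$ where $\va := \partial\mathcal{L}/\partial\vh$ and $\vb := \partial\mathcal{L}/\partial\vm$, and computing the block Jacobian
\[
\frac{\partial g}{\partial\vz} = \begin{pmatrix} 0 & \mI \\ \partial f/\partial\vh & -\gamma\mI \end{pmatrix},
\]
the adjoint system splits into the coupled pair $d\va/dt = -\vb\,(\partial f/\partial\vh)$ and $d\vb/dt = -\va + \gamma\vb$.

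Next I would decouple this pair. From the second equation, $\va = \gamma\vb - d\vb/dt$; differentiating gives $d\va/dt = \gamma\, d\vb/dt - d^2\vb/dt^2$. Substituting into the first equation yields $d^2\vb/dt^2 - \gamma\, d\vb/dt = \vb\,(\partial f/\partial\vh)$, so $\vb$ itself satisfies an HBODE with damping sign flipped — i.e. $-\gamma$ in place of $+\gamma$, consistent with the time-reversal intrinsic to the adjoint method. The remaining step is to identify which of $\va,\vb$ is the adjoint $\va(t) = \partial\mathcal{L}/\partial\vh(t)$ claimed in the statement. Since $\vh$ is the position variable of the HBNODE, $\va$ is the relevant adjoint state; I would then show $\va$ also satisfies the same second-order equation, either by the symmetric substitution (express $\vb$ in terms of $\va$ and $d\va/dt$ using $d\va/dt = -\vb(\partial f/\partial\vh)$, though this requires invertibility of $\partial f/\partial\vh$) or, more cleanly, by noting that both components of $\va_{\vz}$ solve the same scalar-coefficient second-order recurrence structure once decoupled, so the claimed form \eqref{eq:adjoint-HBNODE-2nd} holds for $\va$ directly from the $\vb$-derivation after relabeling. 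Care is needed here: the cleanest route is to observe that differentiating $d\va/dt = -\vb(\partial f/\partial\vh)$ and using $d\vb/dt = -\va+\gamma\vb$ gives $d^2\va/dt^2 = -\,(d\vb/dt)(\partial f/\partial\vh) - \vb\,\tfrac{d}{dt}(\partial f/\partial\vh) = (\va - \gamma\vb)(\partial f/\partial\vh) - \vb\,\tfrac{d}{dt}(\partial f/\partial\vh)$, and then replacing $\vb(\partial f/\partial\vh) = -d\va/dt$ to collect the $\gamma\, d\va/dt$ term and the $\va(\partial f/\partial\vh)$ term, leaving a correction involving $\tfrac{d}{dt}(\partial f/\partial\vh)$.

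The main obstacle is precisely that last correction term $\vb\,\tfrac{d}{dt}(\partial f/\partial\vh)(\vh(t),t,\theta)$: the naive elimination starting from the $\va$-equation does not obviously vanish, whereas starting the elimination from the $\vb$-equation it never appears. So I expect the correct and honest proof to define things so that the second-order adjoint equation \eqref{eq:adjoint-HBNODE-2nd} is read off the $\vb$-variable's dynamics, and then to argue — via the initial/terminal conditions of the adjoint system and the uniqueness of solutions — that this $\vb$ is (up to the identification dictated by the second-order adjoint method of \cite{norcliffe2020_sonode}) the object playing the role of $\va(t)=\partial\mathcal{L}/\partial\vh(t)$ in the second-order formulation. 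I would close by checking the terminal conditions: at $t=T$, $\va_{\vz}(T) = \partial\mathcal{L}(\vh(T))/\partial(\vh(T),\vm(T))$, which pins down $\va(T)$ and $\vb(T)$ and hence the second-order equation's data, completing the correspondence with Prop.~\ref{prop:adjoint-HBNODE-1st}.
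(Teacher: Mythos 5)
Your proposal is correct and takes essentially the same route as the paper: both pass to the first-order adjoint pair $d\va/dt=-\vb\,\partial f/\partial\vh$, $d\vb/dt=-\va+\gamma\vb$ and observe that it is the momentum-adjoint $\vb=\partial\mathcal{L}/\partial\vm(t)$, not the literal $\partial\mathcal{L}/\partial\vh(t)$ of the first-order system, that decouples into \eqref{eq:adjoint-HBNODE-2nd} — the paper reaches this by first deriving the general SONODE second-order adjoint $\ddot{\va}_\vv=\va_\vv\,\partial f/\partial\vh-\frac{d}{dt}(\va_\vv\,\partial f/\partial\vv)$ and specializing $\partial f/\partial\vv=-\gamma\mI$, while you eliminate directly. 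The identification you anticipate is exactly how the paper closes the argument: since the loss depends only on $\vh(T)$, the gradient formula involves only $\va_\vv$, and the terminal data become $\va(T)=\mathbf{0}$, $d\va(T)/dt=-d\mathcal{L}/d\vh_T$, so this $\va_\vv$ is the adjoint state of the second-order formulation in the sense of \cite{norcliffe2020_sonode}.
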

\begin{proof}
Consider the following coupled first-order ODE system
\begin{equation}
\frac{\partial }{\partial t}\begin{bmatrix}
\vh \\ \vv
\end{bmatrix} = \begin{bmatrix}
\vv \\ f(\vh(t), \vv(t), t, \theta)
\end{bmatrix},
\quad 
\begin{bmatrix}
\vh \\ \vv
\end{bmatrix}(t_0) = \begin{bmatrix}
\vh_{t_0} \\ \vv_{t_0}
\end{bmatrix}.
\end{equation}

Denote $\vz = \begin{bmatrix}\vh\\\vv\end{bmatrix}$ and final state as 
\begin{equation}
\begin{bmatrix}
\vh(T) \\ \vv(T)
\end{bmatrix}
= \begin{bmatrix}
\vh_{T} \\ \vv_{T}
\end{bmatrix} = \vz_T.
\end{equation}

Then the adjoint equation is given by
\begin{equation}
\frac{\partial \mA(t)}{\partial t} = -\mA(t) \begin{bmatrix}
{\bf 0} & \mI \\
\frac{\partial f}{\partial \vh} & \frac{\partial f}{\partial \vv}
\end{bmatrix},
\quad 
\mA(T) = -\mI,
\quad 
\va(t) = -\frac{d\mathcal{L}}{d\vz_T} \mA(t).
\end{equation}
By rewriting $\mA = \begin{bmatrix}
\mA_\vh & \mA_\vv
\end{bmatrix}$, we have the following differential equations

\begin{equation}
\frac{\partial \mA_\vh(t)}{\partial t} = -\mA_\vv(t) \frac{\partial f}{\partial \vh},
\quad 
\frac{\partial \mA_\vv(t)}{\partial t} = -\mA_\vh(t) - \mA_\vv(t) \frac{\partial f}{\partial \vv},
\end{equation}
with initial conditions 
\begin{equation}
\mA_\vh(T) = -\begin{bmatrix}\mI\\{\bf 0}\end{bmatrix},
\quad 
\mA_\vv(T) = -\begin{bmatrix}{\bf 0}\\\mI\end{bmatrix},
\end{equation}
and adjoint states 
\begin{equation}
\va_\vh(t) = \frac{d\mathcal{L}}{d\vz_T} \mA_\vh(t),
\quad 
\va_\vv(t) = \frac{d\mathcal{L}}{d\vz_T} \mA_\vv(t).
\end{equation}
The gradient equations becomes 
\begin{equation}
\frac{d \mathcal{L}}{d \theta} = \int_{t_0}^{T}\va \begin{bmatrix}
{\bf 0} \\ \frac{\partial f}{\partial \theta} 
\end{bmatrix} dt = \int_{t_0}^{T}\va_\vv \frac{\partial f}{\partial \theta} dt,
\quad 
\frac{d \mathcal{L}}{d \vh_{t_0}} = \va_\vh (t_0),
\quad 
\frac{d \mathcal{L}}{d \vv_{t_0}} = \va_\vv (t_0).
\end{equation}
Note $\vh_{t_0}$ is fixed, and thus $\va_\vh$ disappears in gradient computation. Therefore, we are only interested in $\va_\vv$. Thus the adjoint $\mA_\vv$ satisfies the following second-order ODE
\begin{equation}
\frac{\partial^2 \mA_\vv(t)}{\partial t^2} = \mA_\vv(t) \frac{\partial f}{\partial \vh} - \frac{\partial (\mA_\vv(t) \frac{\partial f}{\partial \vv})}{\partial t},
\end{equation}
and thus
\begin{equation}\label{eq:254}
\frac{\partial^2 \va_\vv(t)}{\partial t^2} = \va_\vv(t) \frac{\partial f}{\partial \vh} - \frac{\partial (\va_\vv(t) \frac{\partial f}{\partial \vv})}{\partial t},
\end{equation}
with initial conditions
\begin{equation}\label{eq:255}
\va_\vv (T) = -\frac{d\mathcal{L}}{d\vz} \mA_\vv(T) = \frac{d\mathcal{L}}{d\vv_{T}}, 
\quad 
\frac{\partial \va_\vv(T)}{\partial t} = -\frac{d\mathcal{L}}{d\vh_{T}} - \va_\vv(T) \frac{\partial f}{\partial \vv}(T).
\end{equation}
As HBNODE takes the form 
\begin{equation}
\frac{d^2\vh(t)}{dt^2} + \gamma \frac{d\vh(t)}{dt} = f(\vh(t),t,\theta), 
\end{equation}
which can also be viewed as a SONODE. By applying the adjoint equation \eqref{eq:254}, we arrive at
\begin{equation}
\frac{\partial^2 \va(t)}{\partial t^2} = \va(t) \frac{\partial f}{\partial \vh} +\gamma \frac{\partial \va(t)}{\partial t}.
\end{equation}
As HBNODE only carries its state $\vh$ to the loss $\mathcal{L}$, we have $\frac{d\mathcal{L}}{d\vv_{T}} = 0$, and thus the initial conditions in equation \eqref{eq:255} becomes 
\begin{equation}
\va(T) = {\bf 0}, 
\quad 
\frac{\partial \va(T)}{\partial t} = -\frac{d\mathcal{L}}{d\vh_{T}},
\end{equation}
which concludes the proof of Proposition~\ref{prop:adjoint-HBNODE}.
\end{proof}

\begin{remark}\label{remark-adjoint-HBNODE-2nd}
Note that we solve the adjoint equation \eqref{eq:adjoint-HBNODE-2nd} from time $t=T$ to $t=t_0$ in the backward propagation. By letting $\tau=T-t$ and $\vb(\tau)=\va(T-\tau)$, we can rewrite \eqref{eq:adjoint-HBNODE-2nd} as follows,
\begin{equation}\label{eq:adjoint-HBNODE-2nd-b}
\frac{d^2\vb(\tau)}{d\tau^2} + \gamma \frac{d\vb(\tau)}{d\tau} = \vb(\tau)\frac{\partial f}{\partial\vh}(\vh(T-\tau),T-\tau,\theta).
\end{equation}
Therefore, the adjoint of the HBNODE is also a HBNODE and they have the same damping parameter.
\end{remark}
\begin{proposition}[Adjoint equations for the first-order HBNODE system]\label{prop:adjoint-HBNODE-1st}
The adjoint states $\va_\vh(t)$ $:=\partial\mathcal{L}/\partial\vh(t)$ and  $\va_\vm(t):=\partial\mathcal{L}/\partial\vm(t)$ for the first-order HBNODE system \eqref{eq:HBNODE-1st} satisfy
\begin{equation}\label{eq:adjoint-HBNODE-1st}
\frac{d\va_\vh(t)}{dt} = -\va_\vm(t) \frac{\partial f}{\partial\vh}(\vh(t),t,\theta); \quad \frac{d\va_\vm(t)}{dt} = -\va_\vh(t) + \gamma \va_\vm(t).
\end{equation}
\end{proposition}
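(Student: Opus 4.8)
The plan is to treat the first-order HBNODE system \eqref{eq:HBNODE-1st} as a single autonomous-in-the-state ODE $\dot{\vz} = g(\vz,t,\theta)$ with $\vz = (\vh,\vm)$ and $g(\vh,\vm,t,\theta) = \bigl(\vm,\ -\gamma\vm + f(\vh,t,\theta)\bigr)$, and then to apply the standard adjoint sensitivity identity $\dot{\va} = -\va\,\partial g/\partial\vz$ — the very same identity underlying \eqref{eq:adjoint} and the proof of Proposition~\ref{prop:adjoint-HBNODE} — reading off the block structure. The first step is to compute the Jacobian, using that $f$ depends on $\vh$ but not on $\vm$, so that
\[
\frac{\partial g}{\partial \vz} = \begin{bmatrix} \mathbf{0} & \mI \\ \partial f/\partial\vh & -\gamma\mI \end{bmatrix}.
\]
Writing the adjoint as the row vector $\va = (\va_\vh,\va_\vm)$ with $\va_\vh = \partial\mathcal{L}/\partial\vh$ and $\va_\vm = \partial\mathcal{L}/\partial\vm$ and substituting gives
\[
\frac{d}{dt}(\va_\vh,\va_\vm) = -(\va_\vh,\va_\vm)\begin{bmatrix} \mathbf{0} & \mI \\ \partial f/\partial\vh & -\gamma\mI \end{bmatrix} = \Bigl(-\va_\vm\,\tfrac{\partial f}{\partial\vh},\ -\va_\vh + \gamma\va_\vm\Bigr),
\]
which is exactly the claimed pair of equations.

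Alternatively — and this is the version I would actually include, since the machinery is already on the page — I would piggyback on the proof of Proposition~\ref{prop:adjoint-HBNODE}. That proof already derived, for the general coupled system $\partial_t(\vh,\vv) = (\vv,\, f(\vh,\vv,t,\theta))$, the adjoint ODEs $\dot{\mA}_\vh = -\mA_\vv\,\partial f/\partial\vh$ and $\dot{\mA}_\vv = -\mA_\vh - \mA_\vv\,\partial f/\partial\vv$, with $\va_\vh = (d\mathcal{L}/d\vz_T)\mA_\vh$ and $\va_\vv = (d\mathcal{L}/d\vz_T)\mA_\vv$. Specializing the right-hand side to the HBNODE case, the ``$f$'' there is $-\gamma\vv + f(\vh,t,\theta)$, so $\partial f/\partial\vv = -\gamma\mI$ while $\partial f/\partial\vh$ is unchanged; plugging in and left-multiplying by the constant row vector $d\mathcal{L}/d\vz_T$ converts the $\mA$-equations into the stated $\va$-equations (identifying $\va_\vm$ with $\va_\vv$). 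As a consistency check I would differentiate the second equation, $\ddot{\va}_\vm = -\dot{\va}_\vh + \gamma\dot{\va}_\vm = \va_\vm\,\partial f/\partial\vh + \gamma\dot{\va}_\vm$, recovering the second-order adjoint HBODE \eqref{eq:adjoint-HBNODE-2nd} (with the damping-sign flip), so the first-order statement is consistent with Proposition~\ref{prop:adjoint-HBNODE}.

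The only part requiring care rather than cleverness is the bookkeeping of conventions: fixing the row-vector convention for $\va$ so that the minus sign in $\dot{\va} = -\va\,\partial g/\partial\vz$ and the placement of $\partial f/\partial\vh$ (acting on the right of $\va_\vm$) match \eqref{eq:adjoint}, and tracking that the associated terminal conditions and the parameter-gradient formula $d\mathcal{L}/d\theta = \int_{t_0}^{T}\va_\vm\,\partial f/\partial\theta\,dt$ drop out of the same computation. I do not anticipate any genuine obstacle beyond this; the content is a one-line specialization of the general adjoint equations already established.
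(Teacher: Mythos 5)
Your proposal is correct and follows essentially the same route as the paper: the paper's proof likewise writes the coupled system with state $(\vh,\vm)$, invokes the general adjoint identity $\dot{\mA}=-\mA\,\partial g/\partial\vz$ established in the proof of Proposition~\ref{prop:adjoint-HBNODE} with the block Jacobian $\begin{bmatrix}\mathbf{0} & \mI\\ \partial f/\partial\vh & -\gamma\mI\end{bmatrix}$, and reads off the two component equations by linearity. Your consistency check against the second-order adjoint equation \eqref{eq:adjoint-HBNODE-2nd} is a nice addition but not needed.
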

\begin{proof}
The coupled form of HBNODE is a coupled first-order ODE 
{system} of the form
\begin{equation}
\frac{\partial }{\partial t}\begin{bmatrix}
\vh \\ \vm
\end{bmatrix} = \begin{bmatrix}
\vm \\ -\gamma \vm + f(\vh(t), t, \theta)
\end{bmatrix},
\quad 
\begin{bmatrix}
\vh \\ \vm
\end{bmatrix}(t_0) = \begin{bmatrix}
\vh_{t_0} \\ \vm_{t_0}
\end{bmatrix}.
\end{equation}
Denote the final state as 
\begin{equation}
\begin{bmatrix}
\vh(T) \\ \vm(T)
\end{bmatrix}
= \begin{bmatrix}
\vh_{T} \\ \vm_{T}
\end{bmatrix} = z.
\end{equation}
Using the conclusions from the proof of Proposition~\ref{prop:adjoint-HBNODE}, we have the adjoint equation 
\begin{equation}
\frac{\partial \mA(t)}{\partial t} = -\mA(t) \begin{bmatrix}
{\bf 0} & \mI \\
\frac{\partial f}{\partial \vh} & -\gamma \mI
\end{bmatrix},
\quad 
\mA(T) = -\mI,
\quad 
\va (t) = -\frac{d\mathcal{L}}{d\vz} \mA(t).
\end{equation}

Let $\begin{bmatrix}\va_\vh & \va_\vm\end{bmatrix} = \va$, by linearity we have 

\begin{equation}
\frac{\partial \begin{bmatrix}\va_\vh & \va_\vm\end{bmatrix}}{\partial t} = -\begin{bmatrix}\va_\vh & \va_\vm\end{bmatrix} \begin{bmatrix}
{\bf 0} & \mI \\
\frac{\partial f}{\partial \vh} & -\gamma \mI
\end{bmatrix},
\quad 
\begin{bmatrix}\va_\vh(T) & \va_\vm(T)\end{bmatrix} = \begin{bmatrix}\frac{d\mathcal{L}}{d\vh_{T}} & \frac{d\mathcal{L}}{d\vm_{T}}\end{bmatrix},
\end{equation}
which gives us the initial conditions at $t=T$, and the simplified first-order ODE system 
\begin{equation}
\frac{\partial \va_\vh}{\partial t} = - \va_\vm\frac{\partial f}{\partial \vh},
\quad 
\frac{\partial \va_\vm}{\partial t} = -\va_\vh + \gamma \va_\vm,
\end{equation}
concluding the proof of Proposition~\ref{prop:adjoint-HBNODE-1st}.
\end{proof}

\medskip

\begin{remark}\label{remark-adjoint-HBNODE-1st}
Let $\tilde{\va}_\vm(t)=d\va_\vm(t)/dt$, then $\va_\vm(t)$ and $\tilde{\va}_\vm(t)$ satisfies the following first-order heavy-ball ODE system 
\begin{equation}\label{eq:Adjoint-firsrt-order-system}
\frac{d\va_\vm(t)}{dt} =  \tilde{\va}_\vm(t); \quad \frac{d\tilde{\va}_\vm(t)}{dt} = \va_\vm(t)\frac{\partial f}{\partial \vh}(\vh(t),t,\theta) + \gamma \tilde{\va}_\vm(t).
\end{equation}
{Note that we solve this system backward in time in back-propagation.}
Moreover, we have $\va_{\vh}(t) = \gamma \va_\vm(t) - \tilde{\va}_\vm(t)$.
\end{remark}
Similar to \cite{norcliffe2020_sonode}, we use the coupled first-order HBNODE system \eqref{eq:HBNODE-1st} and its adjoint first-order HBNODE system \eqref{eq:adjoint-HBNODE-1st} for practical implementation, since the entangled representation permits faster computation \cite{norcliffe2020_sonode} of the gradients of the coupled ODE systems.

\subsubsection{Generalized heavy-ball neural ODEs}\label{sec:GHBNODEs}
\begin{wrapfigure}{r}{.4\textwidth}\vspace{-0.5cm}
\includegraphics[width=\linewidth]{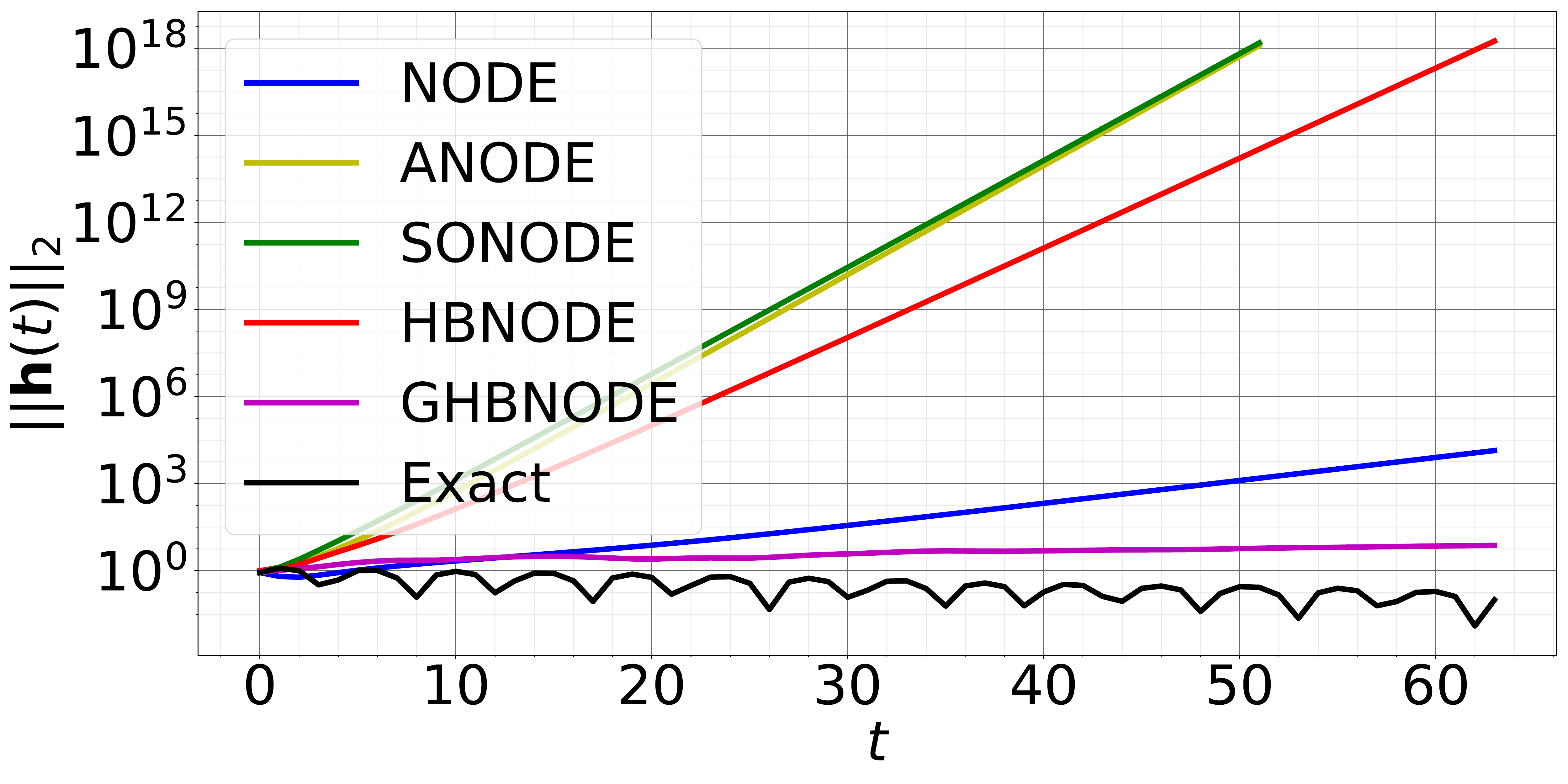}
\vspace{-0.2in}
\caption{Contrasting $\vh(t)$ for different models. $\vh(t)$ in ANODE, SONODE, and HBNODE grows much faster than that in NODE. GHBNODE controls the growth of $\vh(t)$ effectively when $t$ is large.}\label{fig:blow-up}\vspace{-0.5cm}
\end{wrapfigure}
In this part, we propose a generalized version of HBNODE (GHBNODE), see \eqref{eq:GHBNODE}, to mitigate the potential blow-up issue in training ODE-based models. We observe that $\vh(t)$ of ANODEs \cite{NEURIPS2019_21be9a4b}, SONODEs \cite{norcliffe2020_sonode}, and HBNODEs \eqref{eq:HBNODE-1st} usually grows much faster than that of NODEs. The fast growth of $\vh(t)$ can lead to finite-time blow up. As an illustration, we compare the performance of NODE, ANODE, SONODE, HBNODE, and GHBNODE on the Silverbox task as in \cite{norcliffe2020_sonode}. The goal of the task is to learn the voltage of an electronic circuit that resembles a Duffing oscillator, where the input voltage $V_1(t)$ is used to predict the output $V_2(t)$. Similar to the setting in \cite{norcliffe2020_sonode}, we first augment ANODE by 1 dimension with 0-augmentation and augment SONODE, HBNODE, and GHBNODE with a dense network.
We use a simple dense layer to parameterize $f$ for all five models, with an extra input term for $V_1(t)$\footnote{Here, we exclude an $\vh^3$ term that appeared in the original Duffing oscillator model because including it would result in finite-time explosion.}. For both HBNODE and GHBNODE, we set the damping parameter $\gamma$ to be ${\rm sigmoid}(-3)$. For GHBNODE \eqref{eq:GHBNODE} below, we set $\sigma(\cdot)$ to be the \texttt{hardtanh} function with bound $[-5,5]$ and $\xi=\ln(2)$. As shown in Fig.~\ref{fig:blow-up}, compared to the vanilla NODE, the $\ell_2$ norm of $\vh(t)$ grows much faster when a higher order NODE is used, which leads to blow-up during training. Similar issues arise in the time series experiments (see Section~\ref{sec:sequential-modeling}), where SONODE blows up during long term integration in time, and HBNODE suffers from the same issue with some initialization.

To alleviate the problem above, we propose the following GHBNODE
\begin{equation}\label{eq:GHBNODE}
\begin{aligned}
\frac{d\vh(t)}{d t} &= \sigma(\vm(t)),\\
\frac{d\vm(t)}{d t} &= -\gamma\vm(t) + f(\vh(t),t,\theta) - \xi\vh(t), 
\end{aligned}
\end{equation}
where $\sigma(\cdot)$ is a nonlinear activation, which is set as $\tanh$ in our experiments. The positive hyperparameters $\gamma,\xi>0$ are tunable or learnable. In the trainable case, we let $\gamma = \epsilon \cdot \text{sigmoid}(\omega)$ as in HBNODE, and $\xi = \text{softplus}(\chi)$ to ensure that $\gamma,\xi \geq 0$. Here, we integrate two main ideas into the {design of} GHBNODE: (i) We incorporate the gating mechanism used in LSTM \cite{hochreiter1997long} and GRU \cite{cho2014learning}, which can suppress the aggregation of $\vm(t)$;
(ii) Following the idea of skip connection \cite{he2016identity}, we add the term $\xi\vh(t)$ into the governing equation of $\vm(t)$, which benefits training and generalization of GHBNODEs. Fig.~\ref{fig:blow-up} shows that GHBNODE can indeed control the growth of $\vh(t)$ effectively. 
\begin{proposition}[Adjoint equations for GHBNODEs]\label{prop:adjoint-GHBNODE}
The adjoint states $\va_\vh(t):=\partial\mathcal{L}/\partial\vh(t)$, $\va_\vm(t):=\partial\mathcal{L}/\partial\vm(t)$ for the GHBNODE \eqref{eq:GHBNODE} 
satisfy the following first-order ODE system
\begin{equation}\label{eq:adjoint-GHBNODE-2nd}
\begin{aligned}
\frac{\partial \va_\vh(t)}{\partial t} &= - \va_\vm(t)\Big(\frac{\partial f}{\partial \vh}(\vh(t),t,\theta) - \xi\mI\Big)\\
\frac{\partial \va_\vm(t)}{\partial t} &= -\va_\vh(t) \sigma'(\vm(t)) + \gamma \va_\vm(t).
\end{aligned}
\end{equation}
\end{proposition}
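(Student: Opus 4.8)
The plan is to reduce the GHBNODE adjoint computation to the general first-order adjoint-sensitivity formula, exactly as was done for Proposition~\ref{prop:adjoint-HBNODE-1st}. First I would write the GHBNODE \eqref{eq:GHBNODE} as a coupled first-order system $\partial_t \begin{bmatrix}\vh\\\vm\end{bmatrix} = \begin{bmatrix}\sigma(\vm)\\ -\gamma\vm + f(\vh,t,\theta) - \xi\vh\end{bmatrix}$ with state $\vz = \begin{bmatrix}\vh\\\vm\end{bmatrix}$. The right-hand side $g(\vz,t,\theta)$ has Jacobian with respect to $\vz$ equal to $\begin{bmatrix}{\bf 0} & \operatorname{diag}(\sigma'(\vm))\\ \frac{\partial f}{\partial\vh} - \xi\mI & -\gamma\mI\end{bmatrix}$, where the top-left block is $\bf 0$ because $\sigma(\vm)$ does not depend on $\vh$, and the top-right block is $\sigma'(\vm)$ applied entrywise since $\sigma$ acts componentwise.

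Next I would invoke the standard adjoint ODE $\partial_t \mA(t) = -\mA(t)\,\partial g/\partial \vz$ with terminal condition $\mA(T) = -\mI$ and $\va(t) = -\tfrac{d\mathcal L}{d\vz_T}\mA(t)$, exactly the machinery already established in the proof of Proposition~\ref{prop:adjoint-HBNODE}. Splitting $\va = \begin{bmatrix}\va_\vh & \va_\vm\end{bmatrix}$ according to the $\vh$- and $\vm$-blocks and carrying out the row-vector times block-matrix multiplication, the first column of the Jacobian contributes only through $\frac{\partial f}{\partial \vh} - \xi\mI$ (hit by $\va_\vm$), giving $\partial_t \va_\vh = -\va_\vm\big(\frac{\partial f}{\partial\vh} - \xi\mI\big)$; the second column contributes $\operatorname{diag}(\sigma'(\vm))$ hit by $\va_\vh$ and $-\gamma\mI$ hit by $\va_\vm$, giving $\partial_t \va_\vm = -\va_\vh\,\sigma'(\vm) + \gamma\va_\vm$. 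This is precisely \eqref{eq:adjoint-GHBNODE-2nd}. Since $\vh_{t_0}$ is fixed and the loss depends only on $\vh(T)$, the terminal conditions $\va_\vh(T) = d\mathcal L/d\vh_T$, $\va_\vm(T) = {\bf 0}$ follow as before, though the statement only asks for the ODE system itself.

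The only genuine subtlety — and the step I would be most careful about — is the placement of $\sigma'(\vm(t))$: because $\sigma$ is applied componentwise in the $\vh$-update, its linearization is the diagonal matrix $\operatorname{diag}(\sigma'(\vm))$ multiplying on the right, so one must check that the row vector $\va_\vh$ multiplies this diagonal matrix in the correct order to yield the entrywise product $\va_\vh\sigma'(\vm)$ written in \eqref{eq:adjoint-GHBNODE-2nd}; getting the transpose/ordering wrong here is the standard pitfall. Everything else is a routine specialization of the already-proven general adjoint formula, so I would keep the proof short: state the coupled system, write its Jacobian, quote the adjoint equation from the proof of Proposition~\ref{prop:adjoint-HBNODE}, and read off the two component equations.
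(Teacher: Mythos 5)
Your proposal is correct and follows essentially the same route as the paper's own proof: rewrite the GHBNODE as a coupled first-order system, apply the general adjoint equation $\partial_t\mA = -\mA\,\partial g/\partial\vz$ with the block Jacobian $\begin{bmatrix}{\bf 0} & \sigma'(\vm)\\ \partial f/\partial\vh - \xi\mI & -\gamma\mI\end{bmatrix}$, split $\va = \begin{bmatrix}\va_\vh & \va_\vm\end{bmatrix}$, and read off the two component equations. Your extra care about $\sigma'(\vm)$ being the diagonal linearization of the componentwise activation is a sensible precaution, and the terminal conditions you note match those derived in the paper.
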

\begin{proof}
GHBNODE can be written as the following first-order ODE system
\begin{equation}\label{eq:264}
\frac{\partial }{\partial t}\begin{bmatrix}
\vh \\ \vm
\end{bmatrix} = \begin{bmatrix}
\sigma(\vm) \\ -\gamma \vm + f(\vh(t), t, \theta) - \xi \vh(t)
\end{bmatrix},
\quad 
\begin{bmatrix}
\vh \\ \vm
\end{bmatrix}(t_0) = \begin{bmatrix}
\vh_{t_0} \\ \vm_{t_0}
\end{bmatrix}.
\end{equation}
Denote the final state as $\vz_T:=[\vh_T \vm_T]$.
We have the adjoint equation 
\begin{equation}
\frac{\partial \mA(t)}{\partial t} = -\mA(t) \begin{bmatrix}
{\bf 0} & \sigma'(\vm) \\
\frac{\partial f}{\partial \vh}-\xi\mI & -\gamma \mI
\end{bmatrix},
\quad 
\mA(T) = -\mI,
\quad 
\va (t) = -\frac{d\mathcal{L}}{d\vz_T} \mA(t).
\end{equation}
Let $\begin{bmatrix}\va_\vh & \va_\vm\end{bmatrix} = \va$, by linearity we have 
\begin{equation}
\begin{aligned}
\frac{\partial \begin{bmatrix}\va_\vh & \va_\vm\end{bmatrix}}{\partial t} &= -\begin{bmatrix}\va_\vh & \va_\vm\end{bmatrix} \begin{bmatrix}
{\bf 0} & \sigma'(\vm) \\
\frac{\partial f}{\partial \vh}-\xi\mI & -\gamma \mI
\end{bmatrix},\\
\begin{bmatrix}\va_\vh(T) & \va_\vm(T)\end{bmatrix} &= \begin{bmatrix}\frac{d\mathcal{L}}{d\vh_{T}} & \frac{d\mathcal{L}}{d\vm_{T}}\end{bmatrix},
\end{aligned}
\end{equation}
which gives us the initial conditions at $t=T$, and the simplified first-order ODE system
\begin{equation}
\frac{\partial \va_\vh}{\partial t} = - \va_\vm \Big(\frac{\partial f}{\partial \vh}-\xi\mI\Big),
\quad 
\frac{\partial \va_\vm}{\partial t} = -\va_\vh \sigma'(\vm) + \gamma \va_\vm,
\end{equation}
concluding the proof of Proposition~\ref{prop:adjoint-GHBNODE}.
\end{proof}
Though the adjoint state of the GHBNODE \eqref{eq:adjoint-GHBNODE-2nd} does not satisfy the exact  heavy-ball ODE, based on our empirical study, it also significantly reduces the backward NFEs.

\subsection{Learning long-term dependencies -- Vanishing 
gradient 
}\label{sec:lowerbounds}
As mentioned in Section~\ref{sec-RNN}, the vanishing gradient is the main bottleneck for training RNNs with long-term dependencies. 
As the continuous analogue of RNN, NODEs as well as their hybrid ODE-RNN models, may {also} suffer from vanishing in the adjoint state $\va(t):=\partial\mathcal{L}/\partial\vh(t)$ \cite{lechner2020learning}. When {the} vanishing gradient issue happens, $\va(t)$ goes to ${\bf 0}$ quickly as $T-t$ increases, then $d\mathcal{L}/d\theta$ in \eqref{eq:dL-dtheta} will be independent of these $\va(t)$. We have the following expressions for the adjoint states of the NODE and HBNODE:
\begin{itemize}[leftmargin=*]
\item For NODE, we have 
\begin{equation}\label{eq:NODE-gradient}
\frac{\partial\mathcal{L}}{\partial\vh_t} = \frac{\partial\mathcal{L}}{\partial\vh_T}\frac{\partial\vh_T}{\partial\vh_t} = \frac{\partial\mathcal{L}}{\partial\vh_T}\exp\Big\{-\int_T^t\frac{\partial f}{\partial \vh}(\vh(s),s,\theta)ds\Big\}.
\end{equation}

\item For GHBNODE\footnote{HBNODE can be seen as a special GHBNODE with $\xi=0$ and $\sigma$ be the identity map.
}, 
from \eqref{eq:adjoint-HBNODE-1st} we can derive 
\begin{equation}\label{eq:HBNODE-gradient}
{\small \begin{bmatrix}
\frac{\partial\mathcal{L}}{\partial\vh_t}&\hspace{-0.1in} \frac{\partial\mathcal{L}}{\partial\vm_t} 
\end{bmatrix} = 
\begin{bmatrix}
\frac{\partial\mathcal{L}}{\partial\vh_T}&\hspace{-0.1in}  \frac{\partial\mathcal{L}}{\partial\vm_T} 
\end{bmatrix}
\begin{bmatrix}
\frac{\partial\vh_T}{\partial\vh_t} &\hspace{-0.1in} \frac{\partial\vh_T}{\partial\vm_t}\\
\frac{\partial\vm_T}{\partial\vh_t} &\hspace{-0.1in}
\frac{\partial\vm_T}{\partial\vm_t}\\
\end{bmatrix}=
\begin{bmatrix}
\frac{\partial\mathcal{L}}{\partial\vh_T} \  \frac{\partial\mathcal{L}}{\partial\vm_T} \end{bmatrix}\exp\Big\{-\underbrace{\int_T^t\begin{bmatrix}
{\bf 0} &\hspace{-0.05in} \frac{\partial \sigma}{\partial \vm}\\
\big(\frac{\partial f}{\partial\vh}-\xi\mI\big) &\hspace{-0.05in}
-\gamma\mI
\end{bmatrix}ds}_{:=\mM} \Big\}.}
\end{equation}
\end{itemize}

Note that the matrix exponential is directly related to its eigenvalues. By
Schur decomposition, there exists an orthogonal matrix $\mQ$ and {an} upper triangular matrix $\mU$, where the diagonal entries of $\mU$ are eigenvalues of $\mQ$ ordered by their real parts, such that
\begin{equation}
    -\mM = \mQ\mU\mQ^\top \Longrightarrow 
    \exp\{-\mM\} = \mQ\exp\{\mU\}\mQ^\top. 
\end{equation}
Let $\vv^\top := \begin{bmatrix}\frac{\partial\mathcal{L}}{\partial\vh_T} \ \frac{\partial\mathcal{L}}{\partial\vm_T} \end{bmatrix}\mQ$, then \eqref{eq:HBNODE-gradient} can be rewritten as 
\begin{equation}\label{eq:sgfsfxvb}
\begin{aligned}
\begin{bmatrix}
\frac{\partial\mathcal{L}}{\partial\vh_t} \  \frac{\partial\mathcal{L}}{\partial\vm_t} 
\end{bmatrix} &=
\begin{bmatrix}
\frac{\partial\mathcal{L}}{\partial\vh_T} \  \frac{\partial\mathcal{L}}{\partial\vm_T} \end{bmatrix}\exp\{-\mM\}\\
&= \begin{bmatrix}
\frac{\partial\mathcal{L}}{\partial\vh_T} \  \frac{\partial\mathcal{L}}{\partial\vm_T} \end{bmatrix}\mQ\exp\{\mU\}\mQ^\top = 
\vv^\top \exp\{\mU\}\mQ^\top.
\end{aligned}
\end{equation}
Taking the $\ell_2$ norm in \eqref{eq:sgfsfxvb} and dividing both sides by $\norm{\begin{bmatrix}\frac{\partial\mathcal{L}}{\partial\vh_T} \ \frac{\partial\mathcal{L}}{\partial\vm_T} \end{bmatrix}}_2$, we have
\begin{equation}\label{eq:ratio:HBNODE}
\frac{\norm{\begin{bmatrix}
\frac{\partial\mathcal{L}}{\partial\vh_t} \  \frac{\partial\mathcal{L}}{\partial\vm_t} 
\end{bmatrix}}_2}
{\norm{\begin{bmatrix}\frac{\partial\mathcal{L}}{\partial\vh_T} \ \frac{\partial\mathcal{L}}{\partial\vm_T} \end{bmatrix}}_2}
= 
\frac{\norm{\vv^\top \exp\{\mU\}\mQ^\top}_2}{\norm{\vv^\top\mQ^\top}_2}
= 
\frac{\norm{\vv^\top \exp\{\mU\}}_2}{\norm{\vv}_2}
=
\norm{\ve^\top \exp\{\mU\}}_2,
\end{equation}
i.e., $\norm{\begin{bmatrix}
\frac{\partial\mathcal{L}}{\partial\vh_t} \  \frac{\partial\mathcal{L}}{\partial\vm_t} 
\end{bmatrix}}_2=\norm{\ve^\top \exp\{\mU\}}_2 \norm{\begin{bmatrix}\frac{\partial\mathcal{L}}{\partial\vh_T} \ \frac{\partial\mathcal{L}}{\partial\vm_T} \end{bmatrix}}_2$ where $\ve = {\vv}/{\norm{\vv}_2}$.

\begin{proposition}\label{lemma-eigan-M}
The eigenvalues of $-\mM$ can be paired so that the sum of each pair equals $(t-T)\gamma$. 
\end{proposition}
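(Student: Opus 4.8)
The plan is to read the eigenvalue pairing directly off the block structure of $\mM$. Integrating the defining integrand of $\mM$ entrywise over $[T,t]$, its $(1,1)$-block is $\mathbf{0}$ and its $(2,2)$-block is $-(t-T)\gamma\,\mI$, so
\[
-\mM \;=\; \begin{bmatrix} \mathbf{0} & P \\ Q & (t-T)\gamma\,\mI \end{bmatrix},
\qquad
P := -\int_T^t \frac{\partial\sigma}{\partial\vm}\,ds,
\quad
Q := -\int_T^t \Bigl(\frac{\partial f}{\partial\vh}-\xi\mI\Bigr)ds ,
\]
with $P,Q\in\RR^{N\times N}$ whose precise entries are irrelevant. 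Writing $a:=(t-T)\gamma$, it suffices to show that the characteristic polynomial $p(\lambda):=\det(\lambda\mI_{2N}+\mM)$ is invariant under the involution $\lambda\mapsto a-\lambda$; this forces the $2N$ eigenvalues of $-\mM$ to split into $N$ pairs $\{\lambda,\,a-\lambda\}$, each summing to $a=(t-T)\gamma$, with a possible fixed eigenvalue at $a/2$ occurring with even multiplicity so the pairing is well defined.

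First I would evaluate $p$ by the Schur-complement determinant identity. By the block form above, $\lambda\mI_{2N}+\mM$ has top-left block $\lambda\mI_N$, off-diagonal blocks $-P$ and $-Q$, and bottom-right block $(\lambda-a)\mI_N$; for $\lambda\neq 0$ the top-left block is invertible with Schur complement $(\lambda-a)\mI_N-\tfrac1\lambda QP$, so
\[
p(\lambda) \;=\; \det(\lambda\mI_N)\cdot\det\bigl((\lambda-a)\mI_N-\tfrac1\lambda QP\bigr) \;=\; \det\bigl(\lambda(\lambda-a)\mI_N-QP\bigr).
\]
Both sides are polynomials in $\lambda$ agreeing off a finite set, hence agree for all $\lambda$ (routing through the bottom-right block instead gives $PQ$ in place of $QP$, which is immaterial since $PQ$ and $QP$ share a characteristic polynomial). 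The crux is then the trivial identity $(a-\lambda)\bigl((a-\lambda)-a\bigr)=(a-\lambda)(-\lambda)=\lambda(\lambda-a)$, i.e.\ the scalar $\lambda(\lambda-a)$ is fixed by $\lambda\mapsto a-\lambda$; feeding this into the last display yields $p(a-\lambda)=p(\lambda)$, the desired symmetry. Equivalently, setting $\lambda=a/2+\mu$ turns $p$ into an even polynomial in $\mu$, hence a degree-$N$ polynomial in $\mu^2$, so the eigenvalues are exactly $a/2\pm\sqrt{\rho_i}$ over the roots $\rho_i$ of that polynomial — which exhibits the pairs and their common sum $a$ explicitly.

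I do not anticipate a real obstacle; the only step needing a word of care is invoking the block-determinant formula at the excluded values $\lambda=0$ and $\lambda=a$, which the polynomial-identity continuation above takes care of. Since nothing in the argument uses the specific form of $\partial\sigma/\partial\vm$ or $\partial f/\partial\vh$, the same proof covers HBNODE as the special case $\sigma=\mathrm{id}$, $\xi=0$, and GHBNODE in general.
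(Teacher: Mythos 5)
Your proof is correct and follows essentially the same route as the paper's: both reduce the $2N\times 2N$ characteristic polynomial to $\det\bigl(\lambda(\lambda-a)\mI_N-QP\bigr)$ (the paper via the commuting-block determinant identity, you via a Schur complement plus polynomial continuation) and then extract the pairing from the quadratic $\lambda(\lambda-a)=\rho_i$ (the paper via Vieta's formulas, you via the equivalent involution symmetry $\lambda\mapsto a-\lambda$). The only substantive refinement you add is the explicit remark that a root at $\lambda=a/2$ must have even multiplicity, which the paper glosses over.
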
 
\begin{proof}
Let $\mF = \frac{1}{t-T}\int_T^t\frac{\partial f}{\partial \vh}(\vh(s), s, \theta)ds - \xi\mI$, ${\mJ} = \frac{1}{t-T}\int_T^t\frac{\partial \sigma}{\partial \vm}(\vm(s))ds$, and ${\mH} = \frac{1}{t-T} \mM$, then we have the following equation
\begin{equation}
\mH = \frac{1}{t-T}\mM = \begin{bmatrix}0 & {\mJ} \\ {\mF} & -\gamma \mI\end{bmatrix}.
\end{equation}
As $(\lambda+\gamma)\mI$ commutes with any matrix ${\mF}$, the characteristics polynomials of ${\mH}$ and ${\mJ\mF}$ satisfy the relation
\begin{equation}
\begin{aligned}
    ch_{\mH}(\lambda) &= \det(\lambda \mI - {\mH}) = \det \begin{bmatrix}\lambda \mI & -{\mJ} \\ -{\mF} & (\lambda + \gamma) \mI\end{bmatrix}\\
    &= det(\lambda(\lambda + \gamma)\mI - {\mJ\mF}) = -ch_{\mJ\mF}(\lambda(\lambda+\gamma)).
\end{aligned}
\end{equation}
Since the characteristics polynomial of ${\mJ\mF}$ splits in the field $\mathbb{C}$ of complex numbers, i.e. $ch_{\mJ\mF}(x) = \prod_{i=1}^n (x - \lambda_{{\mJ\mF},i})$, we have
\begin{equation}
    ch_{\mH}(\lambda) = -ch_{\mJ\mF}(\lambda(\lambda+\gamma)) = -\prod_{i=1}^n (\lambda(\lambda+\gamma) - \lambda_{{\mJ\mF},i}).
\end{equation}
Therefore, the eigenvalues of ${\mH}$ 
{appear} in $n$ pairs with each pair 
{satisfying}
the 
quadratic equation 
\begin{equation}
    \lambda(\lambda+\gamma) - \lambda_{{\mJ\mF},i} = 0.
\end{equation}
By Vieta's formulas, the sum of these pairs are all $-\gamma$. Therefore, the eigenvalues of $\mM$ comes in $n$ pairs and the sum of each pair is 
$-(t-T)\gamma$, which finishes the proof of Proposition~\ref{lemma-eigan-M}.
\end{proof}
 
For a given constant $a>0$, we can group the upper triangular matrix $\exp\{\mU\}$ as follows
\begin{equation}
\exp\{\mU\}:= \begin{bmatrix}\exp\{\mU_L\} & \mP 
\\ {\bf 0} & \exp\{\mU_V\}\end{bmatrix},
\end{equation}
where the diagonal of $\mU_L$ ($\mU_V$) contains eigenvalues of $-\mM$ that are no less (greater) than $(t-T)a$. Then, we have $\|\ve^\top \exp\{\mU\}\|_2 \geq \|\ve_L^\top \exp\{\mU_L\}\|_2$ where the vector $\ve_L$ denotes the first $m$ columns of $\ve$ with $m$ be the number of columns of $\mU_L$. By choosing $0\leq \gamma \leq 2a$, for every pair of eigenvalues of $-\mM$ there is at least one eigenvalue whose real part is no less than $(t-T)a$. Therefore, $\exp\{\mU_L\}$ decays at a rate at most $(t-T)a$, and the dimension of $\mU_L$ is at least $N\times N$. We avoid exploding gradients by clipping the $\ell_2$ norm of the adjoint states similar to that used for training RNNs.

In contrast, all eigenvalues of the matrix $\int_T^t {\partial f}/{\partial \vh}ds$ in \eqref{eq:NODE-gradient} {for NODE} can be very positive or negative, resulting in exploding or vanishing
gradients. As an illustration, we consider the benchmark Walker2D kinematic simulation task that requires learning long-term dependencies effectively \cite{lechner2020learning,brockman2016openai}. We train ODE-RNN \cite{NEURIPS2019_42a6845a} and (G)HBNODE-RNN on this benchmark dataset, and the detailed experimental settings are provided in Section~\ref{sec:sequential-modeling}. Figure~\ref{fig:gradient-vanishing-ODE} plots $\|\partial\mathcal{L}/\partial\vh_{t}\|_2$ for ODE-RNN and $\|[\partial\mathcal{L}/\partial \vh_{t}\ \partial\mathcal{L}/\partial\vm_{t}] \|_2$
{for (G)HBNODE-RNN}, showing that the adjoint state of ODE-RNN vanishes quickly, while {that of} (G)HBNODE-RNN 
{does not}
vanish even when the gap between $T$ and $t$ is very large.

\begin{figure}
\begin{center}
  \includegraphics[clip, trim=0.1cm 0.1cm 3.9cm 0.1cm,height=3.5cm]{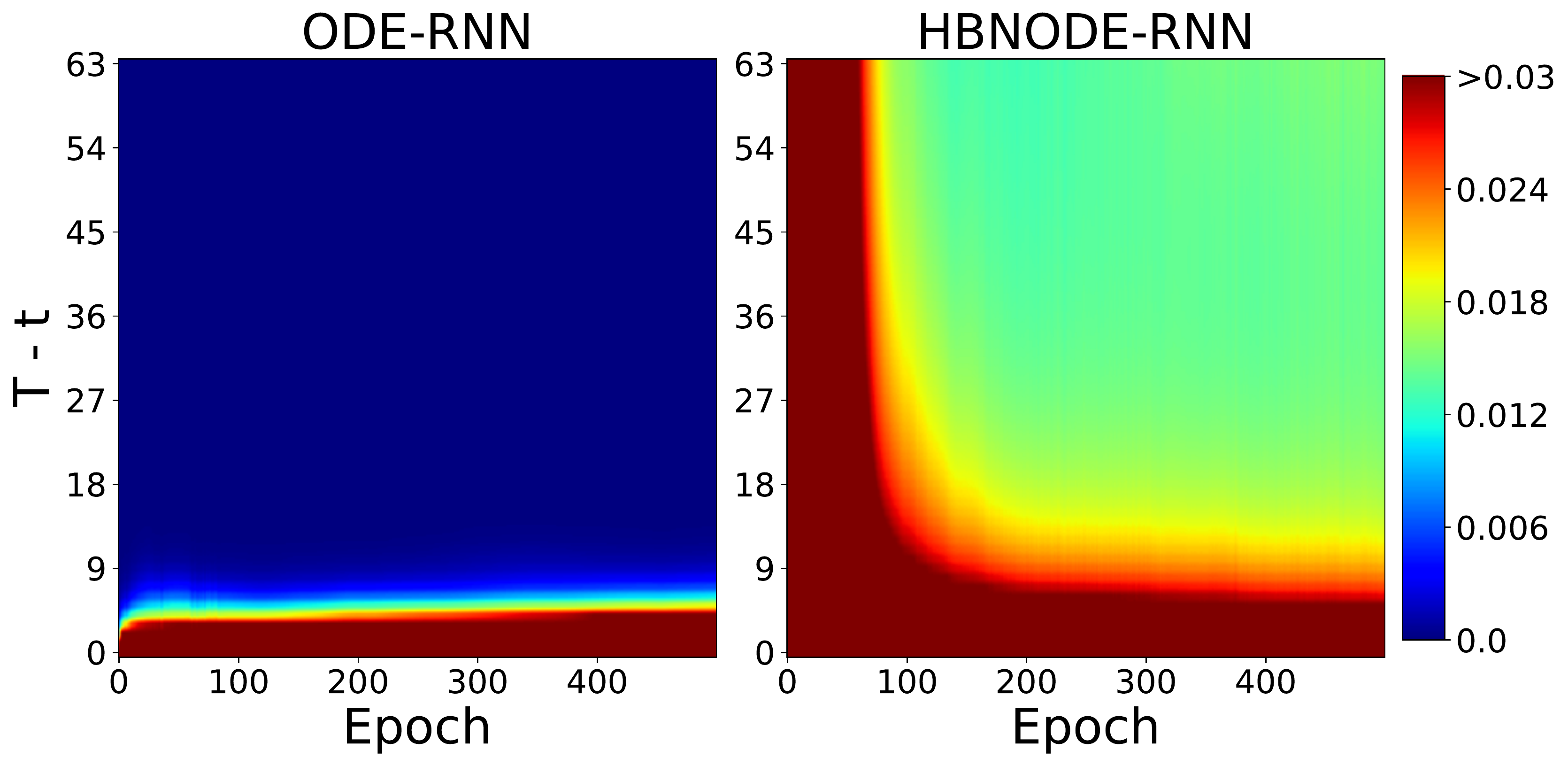}
  \includegraphics[clip, trim=16.5cm 0.1cm 0.1cm 0.1cm,height=3.5cm]{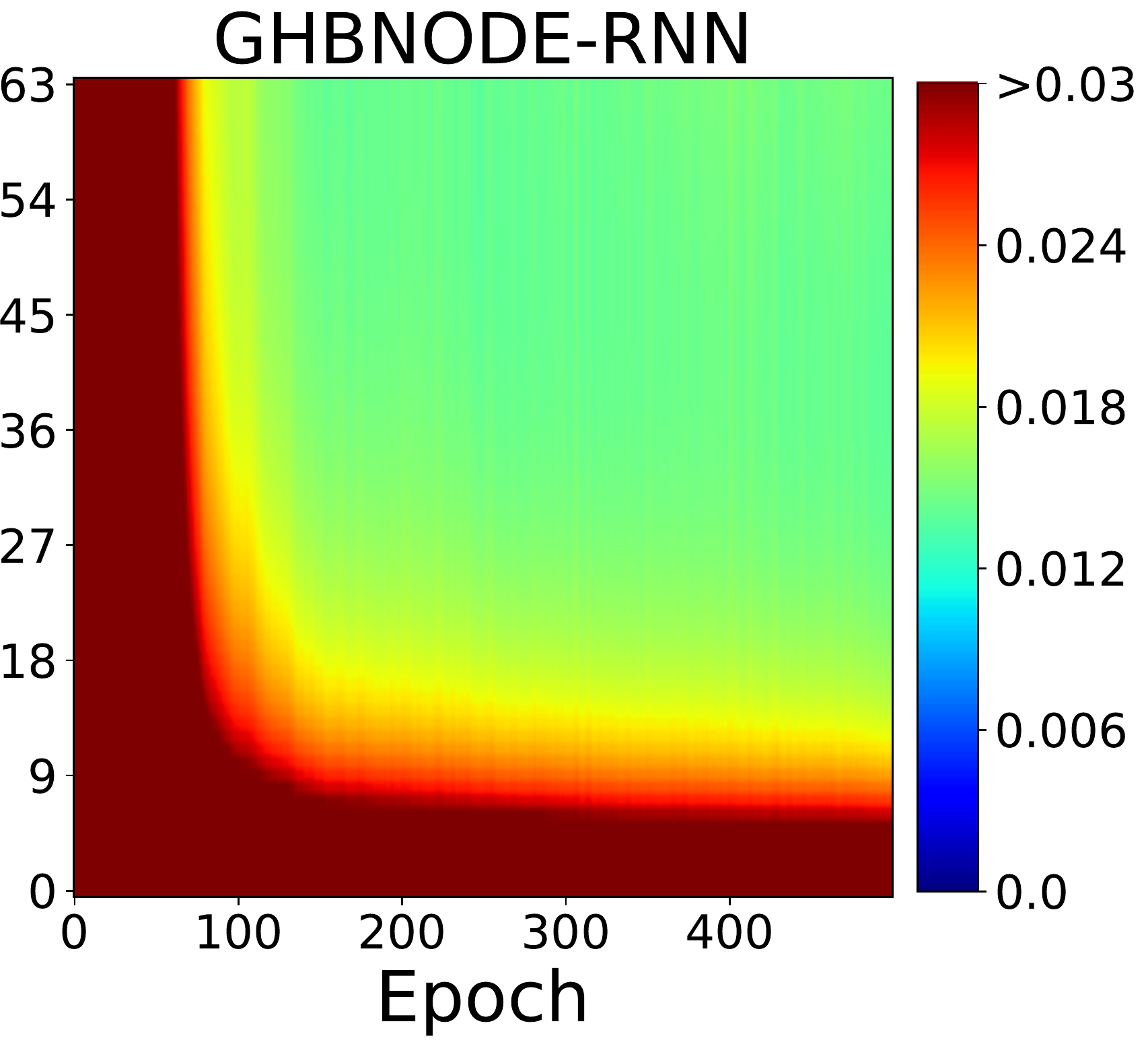}
  \vspace{-2mm}
   \caption{Plot of the the $\ell_2$-norm of the adjoint states for ODE-RNN and (G)HBNODE-RNN back-propagated from the last time stamp. The adjoint state of ODE-RNN vanishes quickly when the gap between the final time $T$ and intermediate time $t$ becomes larger, while the adjoint states of (G)HBNODE-RNN decays much more slowly. This implies that (G)HBNODE-RNN is more effective in learning long-term dependency
   than ODE-RNN. 
  }\label{fig:gradient-vanishing-ODE}
\end{center}
\end{figure}

\subsection{Experimental Results}\label{sec:experiments}
In this section, we compare the performance of the proposed HBNODE and GHBNODE with existing ODE-based models, including NODE \cite{chen2018neural}, ANODE \cite{NEURIPS2019_21be9a4b}, and SONODE \cite{norcliffe2020_sonode} on the benchmark 
point cloud separation, 
image classification, learning dynamical systems, and kinematic simulation. 
For all the experiments, we use Adam \cite{kingma2014adam} as the benchmark optimization solver (the learning rate and batch size for each experiment 
{are}
listed in Table~\ref{Tab:batch-size-learning-rate}).
For HBNODE and GHBNODE, we 
{set}
{$\gamma={\rm sigmoid}(\theta)$, }
{where}
$\theta$ 
{is}
a trainable weight initialized as $\theta=-3$. 

\begin{table}[!ht]
\fontsize{8.0}{8.0}\selectfont
\centering
\begin{threeparttable}
\caption{The batch size and learning rate for different datasets.}\label{Tab:batch-size-learning-rate}
\begin{tabular}{cccccc}
\toprule[1.0pt]
Dataset& Point Cloud& MNIST  & CIFAR10 &  Plane Vibration &  Walker2D\cr
\midrule[0.8pt]
Batch Size  & 50 & 64 & 64 & 64 & 256 \cr
Learning Rate  & 0.01 & 0.001 & 0.001 & 0.0001 & 0.003 \cr
\bottomrule[1.0pt]
\end{tabular}
\end{threeparttable}\vspace{-0.1in}
\end{table}

\begin{figure*}[!ht]\vspace{-2mm}
\centering
\begin{tabular}{cc}
\hskip -0.3cm
\includegraphics[clip, trim=0.1cm 0.1cm 0.1cm 1.5cm,width=0.46\linewidth]{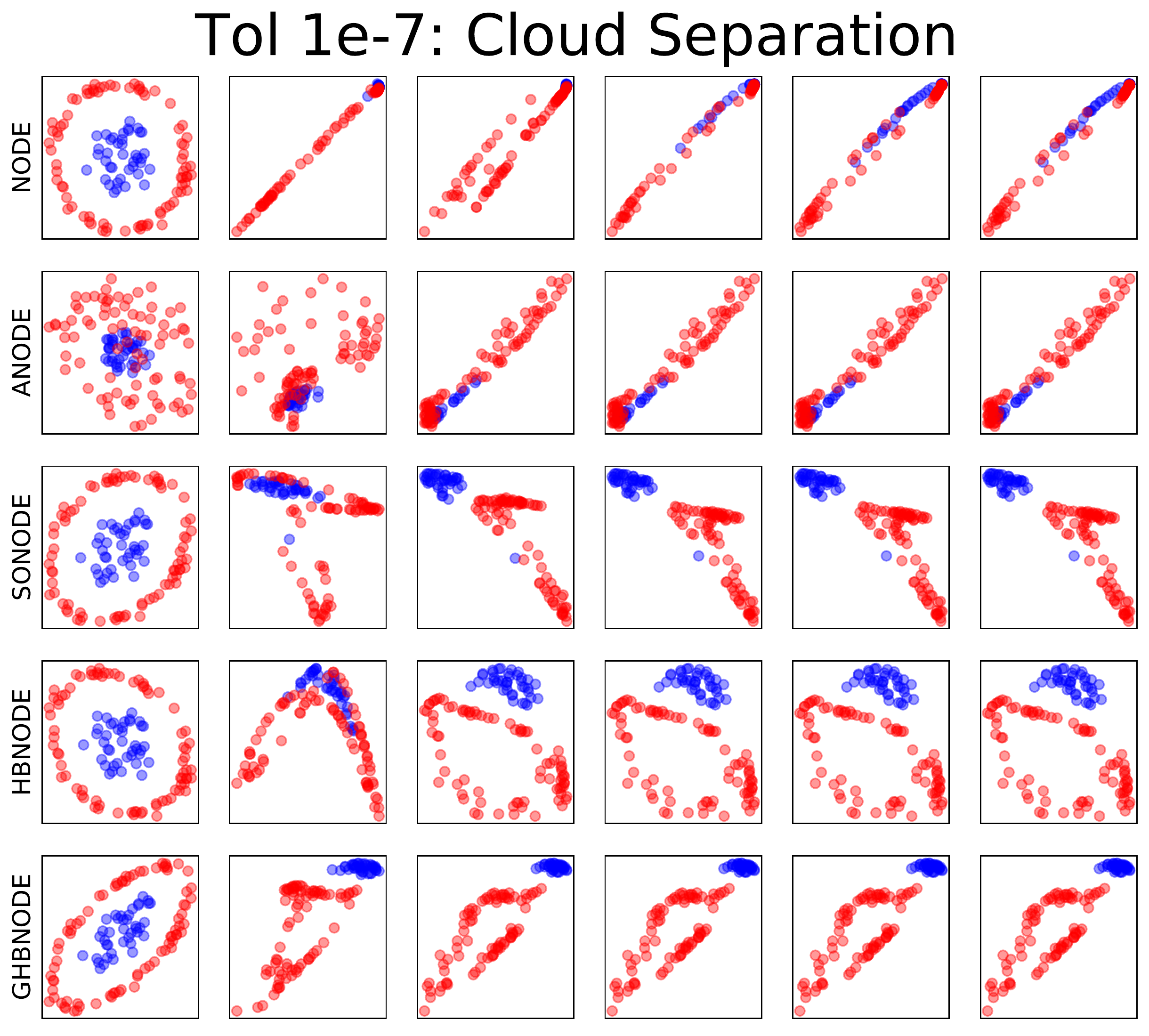}&
\hskip -0.4cm\includegraphics[clip, trim=0.1cm 0.1cm 0.1cm 0.1cm,width=0.45\linewidth]{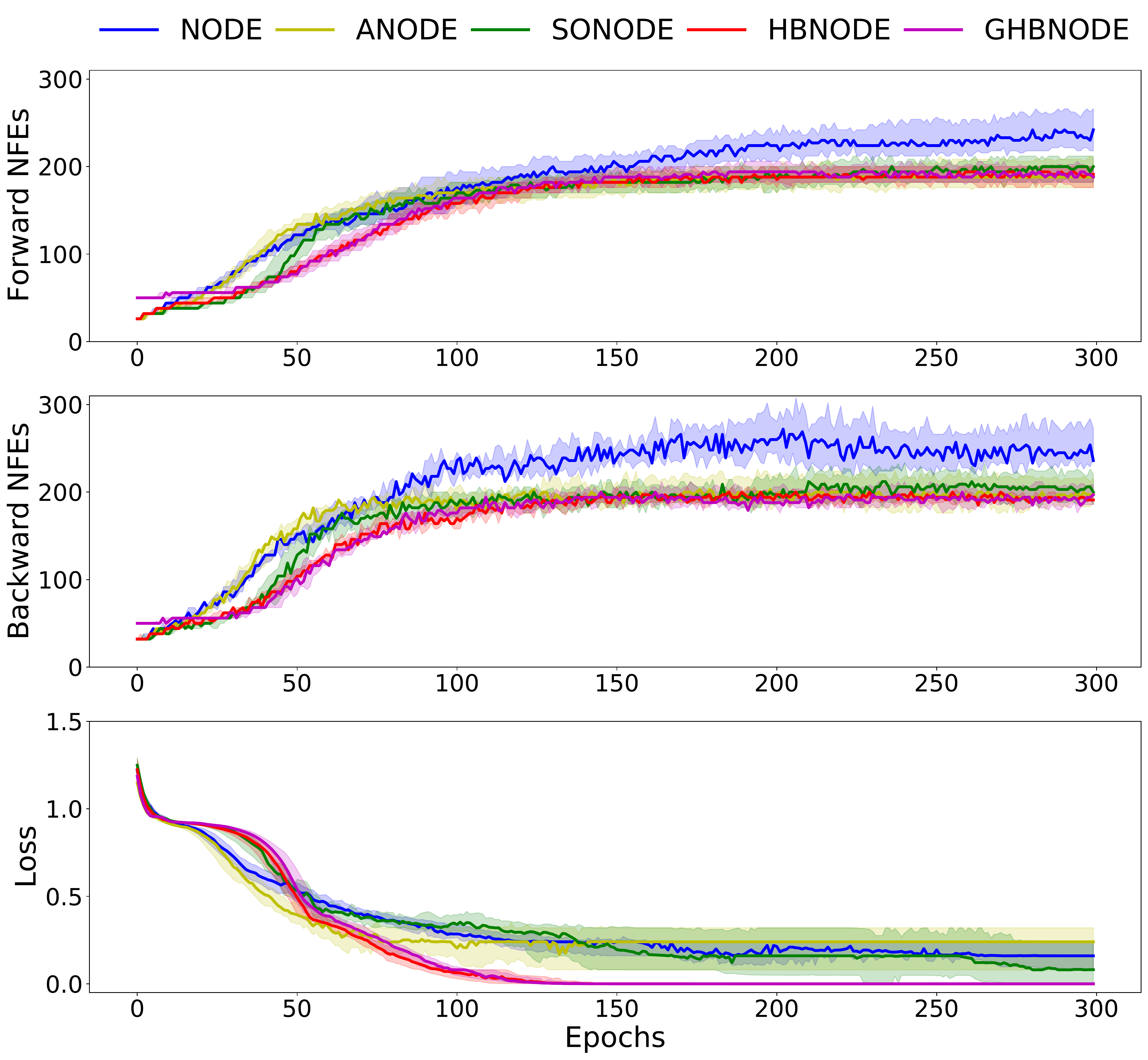}\\
\end{tabular}
\vskip -0.3cm
\caption{Comparison between NODE, ANODE, SONODE, HBNODE, and GHBNODE for two-dimensional point cloud separation. HBNODE and GHBNODE converge better and require less 
{NFEs}
in both forward and backward propagation than the other benchmark models.
}
\label{fig:point:cloud-separation}\vspace{-3mm}
\end{figure*}

\subsubsection{Point cloud separation}\label{sec:point-cloud}
In this subsection, we consider the two-dimensional point cloud separation benchmark.
A total of $120$ points are sampled, 
{in which}
$40$ points are drawn uniformly from the circle $\|\vr\|<0.5$, and $80$ points are drawn uniformly from the annulus $0.85<\|\vr\|<1.0$. This experiment aims to learn effective features to classify these two point clouds. 
{Following \cite{NEURIPS2019_21be9a4b},}
we use a three-layer neural network 
to parameterize the 
{right-hand side}
of each ODE-based model, 
{integrate}
the ODE-based model from $t_0=0$ to $T=1$, and
{pass the integration results to a dense layer to generate the classification results.}
We set the size 
of hidden layers so that the models have similar 
{sizes,}
and the number of 
{parameters}
of NODE, ANODE, SONODE, HBNODE, and GHBNODE are $525$, $567$, $528$, $568$, and $568$, respectively.
To avoid the effects of numerical error of the black-box ODE solver we set tolerance of ODE solver to be $10^{-7}$.
Figure~\ref{fig:point:cloud-separation} plots a randomly selected evolution of the point cloud separation for each model; we also compare the forward and backward NFEs and the training loss of these models 
(100 independent runs). 
HBNODE and GHBNODE improve training as the training loss consistently goes to zero over different runs, while ANODE and SONODE often get stuck at local 
{minima}, 
and NODE cannot separate the point cloud 
since it preserves the topology \cite{NEURIPS2019_21be9a4b}.

\subsubsection{Image classification}\label{sec:image-classification}
We compare the performance of HBNODE and GHBNODE with the existing ODE-based models {on} 
MNIST and CIFAR10 classification {tasks} using the same setting as 
in \cite{NEURIPS2019_21be9a4b}.
We parameterize $f(\vh(t),t,\theta)$ 
using a 3-layer convolutional network for each ODE-based model, and the total number of parameters 
{for} each model is listed in Table~\ref{Tab:num-params-image-classification}.
For a given input image of the size $c\times h\times w$, 
we first augment the number of channel from $c$ to $c+p$ 
with the augmentation dimension $p$ 
{dependent}
on each method\footnote{We set $p=0, 5, 4, 4, 5/0,10,9,9,9$ on MNIST/CIFAR10 
for NODE, ANODE, SONODE, HBNODE, and GHBNODE, respectively.}. Moreover, for SONODE, HBNODE and GHBNODE, we further include velocity or momentum with the same shape as the augmented state. 

\begin{table}[!ht]\vspace{-0.1in}
\fontsize{8.0}{8.0}\selectfont
\centering
\begin{threeparttable}
\caption{The number of parameters for each models for image classification.}\label{Tab:num-params-image-classification}
\begin{tabular}{cccccc}
\toprule[1.0pt]
Model  & NODE  & ANODE  & SONODE &HBNODE & GHBNODE \cr
\midrule[0.8pt]
\#Params (MNIST)    & 85,315 & 85,462 & 86,179 & {85,931} & {85,235} \cr
\#Params (CIFAR10)  & 173,611 & 172,452 & 171,635 & 172,916 & 172,916 \cr
\bottomrule[1.0pt]
\end{tabular}
\end{threeparttable}\vspace{-0.2in}
\end{table}

\begin{figure}
\begin{center}
\begin{tabular}{cccc}
\hskip -0.3cm
\includegraphics[width=0.2\columnwidth]{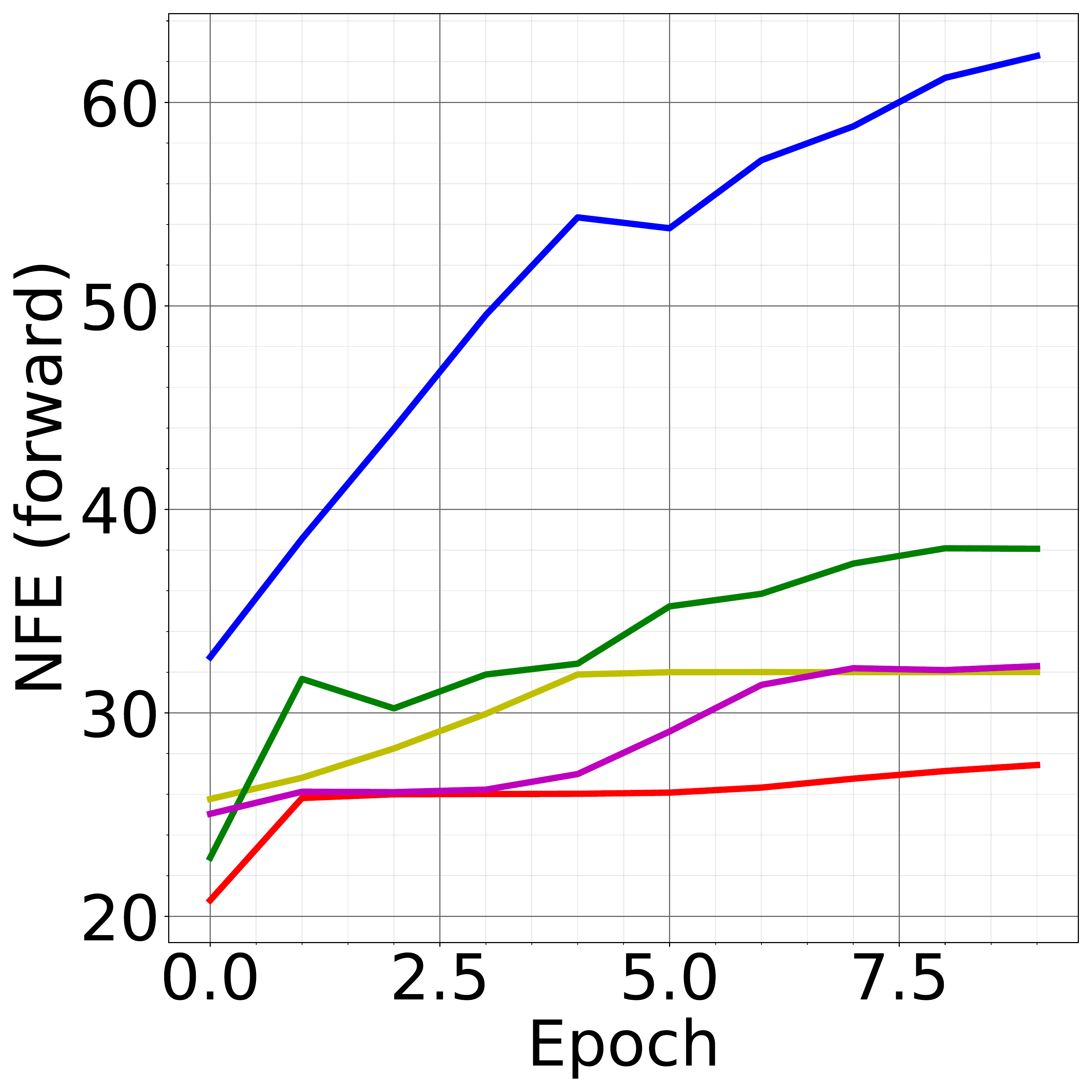}&
\hskip -0.3cm
\includegraphics[width=0.2\columnwidth]{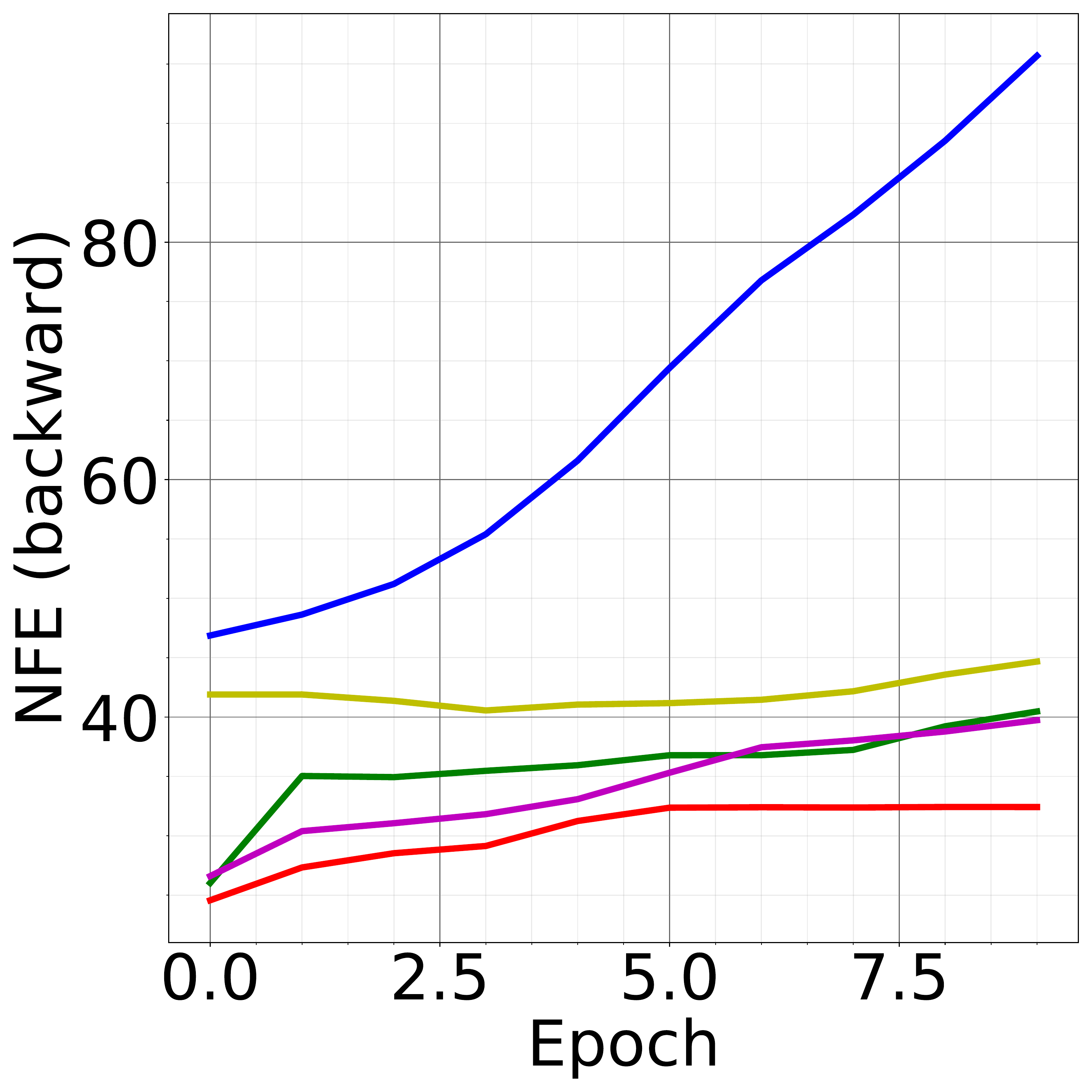}& 
\hskip -0.3cm
\includegraphics[width=0.2\columnwidth]{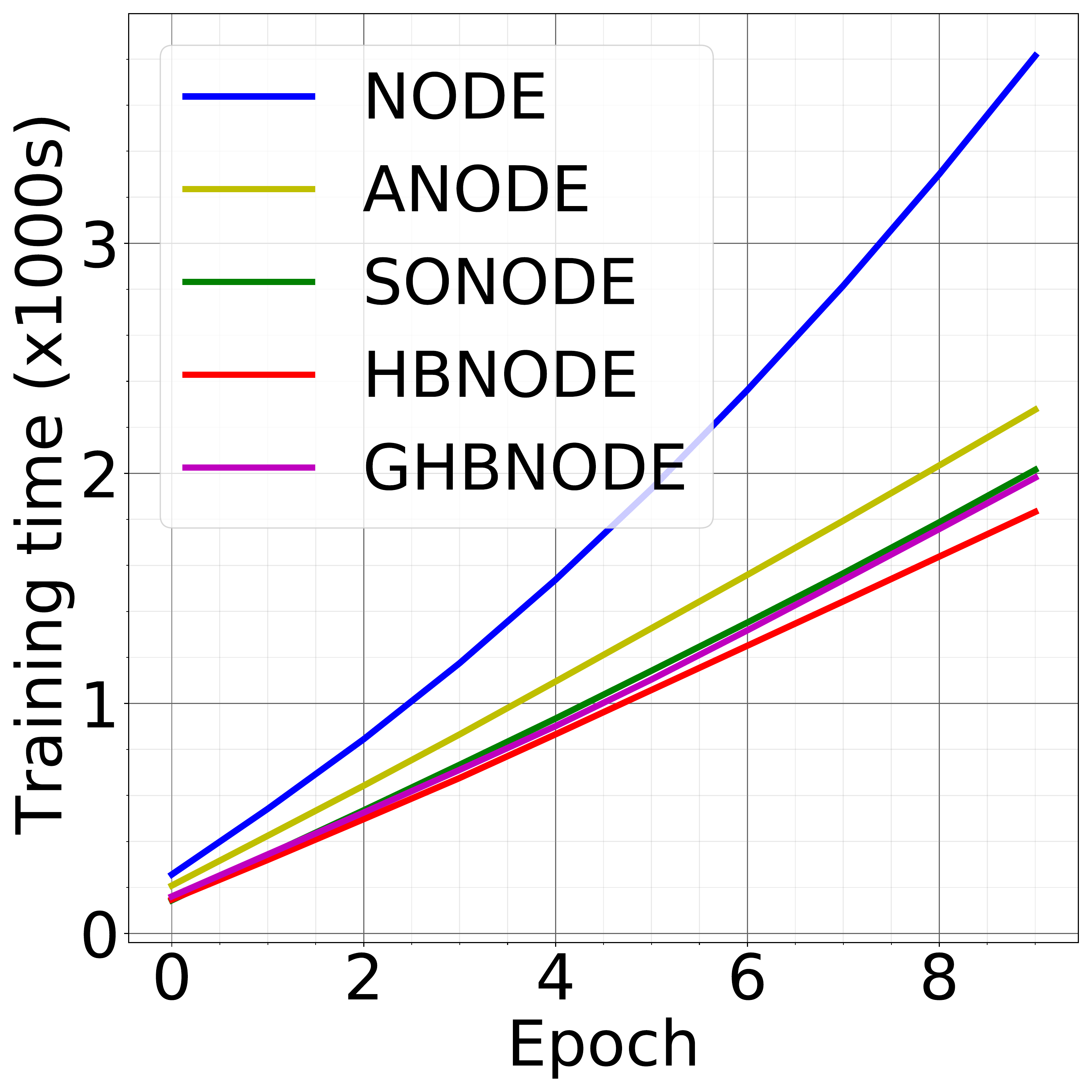} &
\hskip -0.3cm
\includegraphics[width=0.195\columnwidth]{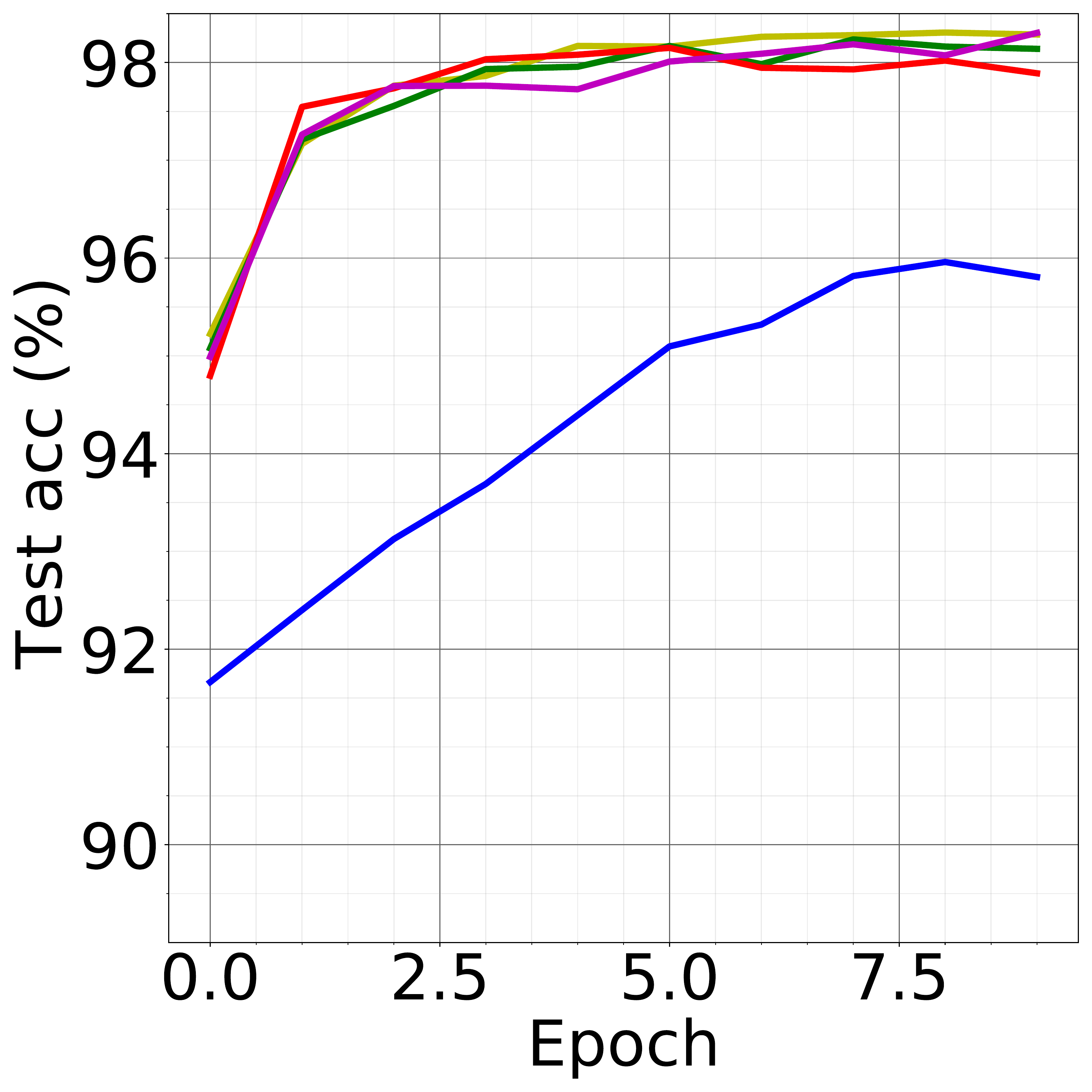}\\
  \end{tabular}
  \end{center}
  \vskip -0.2in
  \caption{ Contrasting NODE \cite{chen2018neural}, ANODE \cite{NEURIPS2019_21be9a4b}, SONODE \cite{norcliffe2020_sonode}, HBNODE, and GHBNODE for MNIST classification in NFE, training time, and test accuracy. (Tolerance: $10^{-5}$).  
  }\label{fig:node-hbnode-mnist}
\end{figure}

\paragraph{NFEs.}
As shown in Figs.~\ref{fig:node-hbnode-cifar10} and \ref{fig:node-hbnode-mnist}, the NFEs grow rapidly with training of the NODE, resulting in an increasingly 
complex model with reduced performance and the possibility 
of blow up. Input augmentation has been verified to effectively reduce the NFEs, as 
both ANODE and SONODE 
{require fewer}
forward 
{NFEs}
than NODE for the MNIST and CIFAR10 classification. However, input augmentation is less effective {in controlling} 
their backward NFEs. HBNODE and GHBNODE require much 
{fewer}
NFEs than the existing benchmarks, especially for backward NFEs. In practice, reducing NFEs {implies reducing} 
both training and inference time, as shown in Figs.~\ref{fig:node-hbnode-cifar10} and \ref{fig:node-hbnode-mnist}. 



\paragraph{Accuracy.}
We {also} compare the accuracy of different ODE-based models for MNIST and CIFAR10 classification. As shown in
{Figs.~\ref{fig:node-hbnode-cifar10} and \ref{fig:node-hbnode-mnist}}, HBNODE and GHBNODE have slightly better classification accuracy than the other three models; this resonates with the fact that less NFEs 
{lead}
to simpler models which generalize better \cite{NEURIPS2019_21be9a4b,norcliffe2020_sonode}. 

\begin{figure}[!ht]\vspace{-2mm}
\centering
\begin{tabular}{c}
\includegraphics[clip, trim=0.01cm 0.01cm 0.01cm 0.01cm, width=0.99\columnwidth]{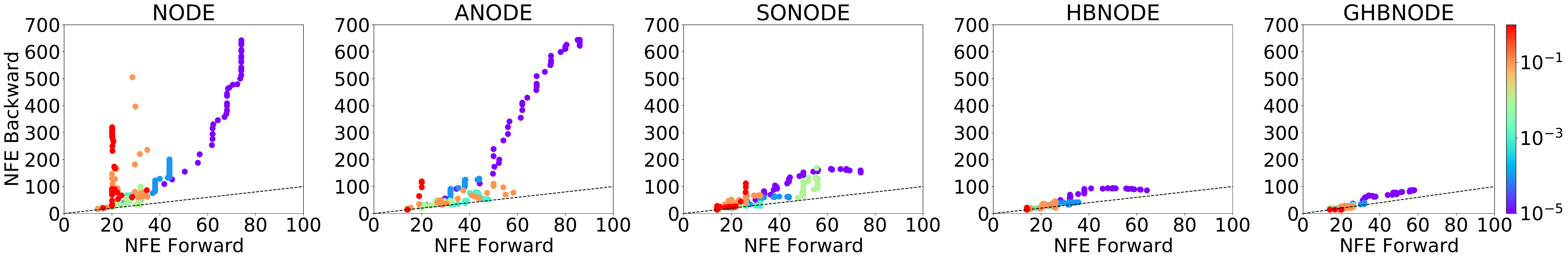}\\
\end{tabular}
\vspace{-4mm}
\caption{NFE vs. tolerance (shown in the colorbar) for training ODE-based models for CIFAR10 classification. Both forward and backward NFEs of HBNODE and GHBNODE grow much 
{more slowly}
than that of NODE, ANODE, and SONODE; especially the backward 
{NFEs}.
As the tolerance decreases, the advantage of HBNODE and GHBNODE in reducing NFEs becomes more significant.}
\label{fig:NFE-vs-tol}\vspace{-5mm}
\end{figure}

\paragraph{NFEs vs. tolerance.} We further study the NFEs for different ODE-based models under different 
{tolerances}
of the 
ODE solver using the same approach as 
in \cite{chen2018neural}. Figure~\ref{fig:NFE-vs-tol} depicts the forward and backward NFEs for different models under different tolerances. We see that (i) both forward and backward NFEs grow quickly when tolerance 
{is decreased,}
and HBNODE and GHBNODE require much 
fewer NFEs than other models; (ii) under different tolerances, the backward
{NFEs}
of NODE, ANODE, and SONODE 
{are much larger}
than the forward 
{NFEs,}
and the difference becomes 
larger when 
{the}
tolerance decreases. In contrast, the forward and backward NFEs of HBNODE and GHBNODE scale almost linearly with each other. 
{This} reflects that the advantage in NFEs of (G)HBNODE over {the} benchmarks become more significant when a smaller tolerance is used.

\subsubsection{Learning dynamical systems from irregularly-sampled time series}\label{sec:physical-systems}
In this subsection, we learn dynamical systems from experimental measurements. In particular, we use the ODE-RNN framework \cite{chen2018neural,NEURIPS2019_42a6845a}, with the recognition model being set to different ODE-based models, 
to study the vibration of an airplane dataset \cite{noel2017f}. The dataset was acquired, from time $0$ to $73627$, by attaching a shaker underneath the right wing to provide input signals, and $5$ attributes are recorded per time stamp; these attributes include voltage of input signal, force applied to aircraft, and acceleration at $3$ different spots of the airplane. We randomly take out $10\%$ of the data to make the time series irregularly-sampled. We use the first $50\%$ of data as our train set, the next $25\%$ as validation set, and the rest as test set. 
We divide each set into non-overlapping segments of consecutive $65$ time stamps of the irregularly-sampled time series, with each input instance consisting of $64$ time stamps of the irregularly-sampled time series, and we aim to forecast $8$ consecutive time stamps starting from the last time stamp of the segment. The input is fed through the the hybrid methods in a recurrent fashion; by changing the time duration of the last step of the ODE integration, we can forecast the output in the different time stamps. The output of the hybrid method is passed to a single dense layer to generate the output time series. In our experiments,  we compare different ODE-based models hybrid with RNNs. The 
ODE of each model is parametrized by a $3$-layer network whereas the RNN is parametrized by a simple dense network; the total number of parameters for ODE-RNN, ANODE-RNN, SONODE-RNN, HBNODE-RNN, and GHBNODE-RNN with $16$, $22$, $14$, $15$, $15$ augmented dimensions are 15,986, 16,730, 16,649, 16,127, and 16,127, respectively. To avoid potential error due to the ODE solver, we use a tolerance of $10^{-7}$.

In training those hybrid models, we regularize the models by penalizing the L2 distance between the RNN output and the values of the next time stamp.
Due to the second-order natural of the underlying dynamics \cite{norcliffe2020_sonode}, ODE-RNN and ANODE-RNN learn the dynamics very poorly with much larger training and test losses than the other models even they take smaller NFEs. HBNODE-RNN and GHBNODE-RNN give better prediction than SONODE-RNN using less backward NFEs.

\begin{figure*}[!ht]
\centering
\begin{tabular}{ccccc}
\hskip -0.4cm
\includegraphics[width=0.2\linewidth]{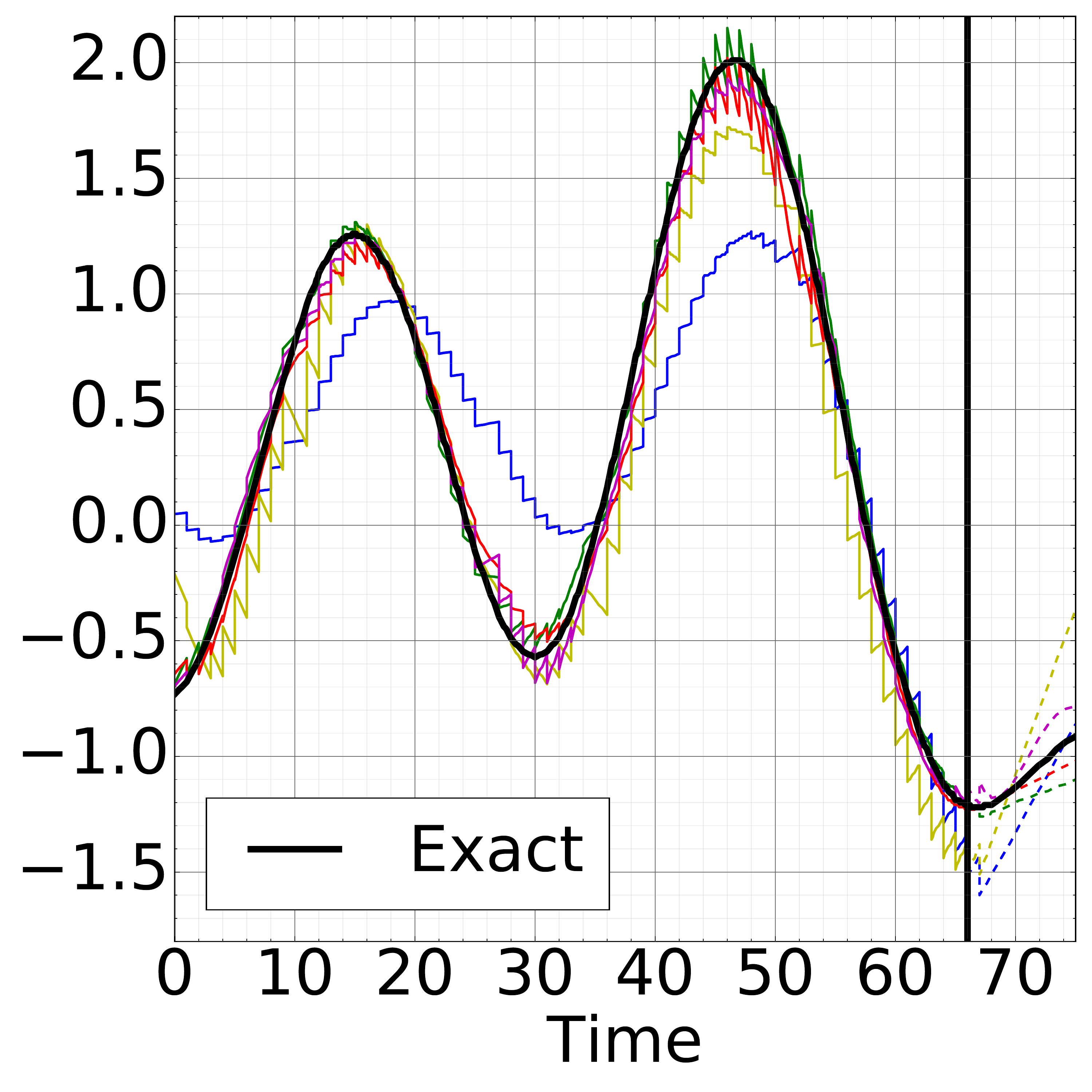}&
\hskip -0.4cm
\includegraphics[width=0.2\linewidth]{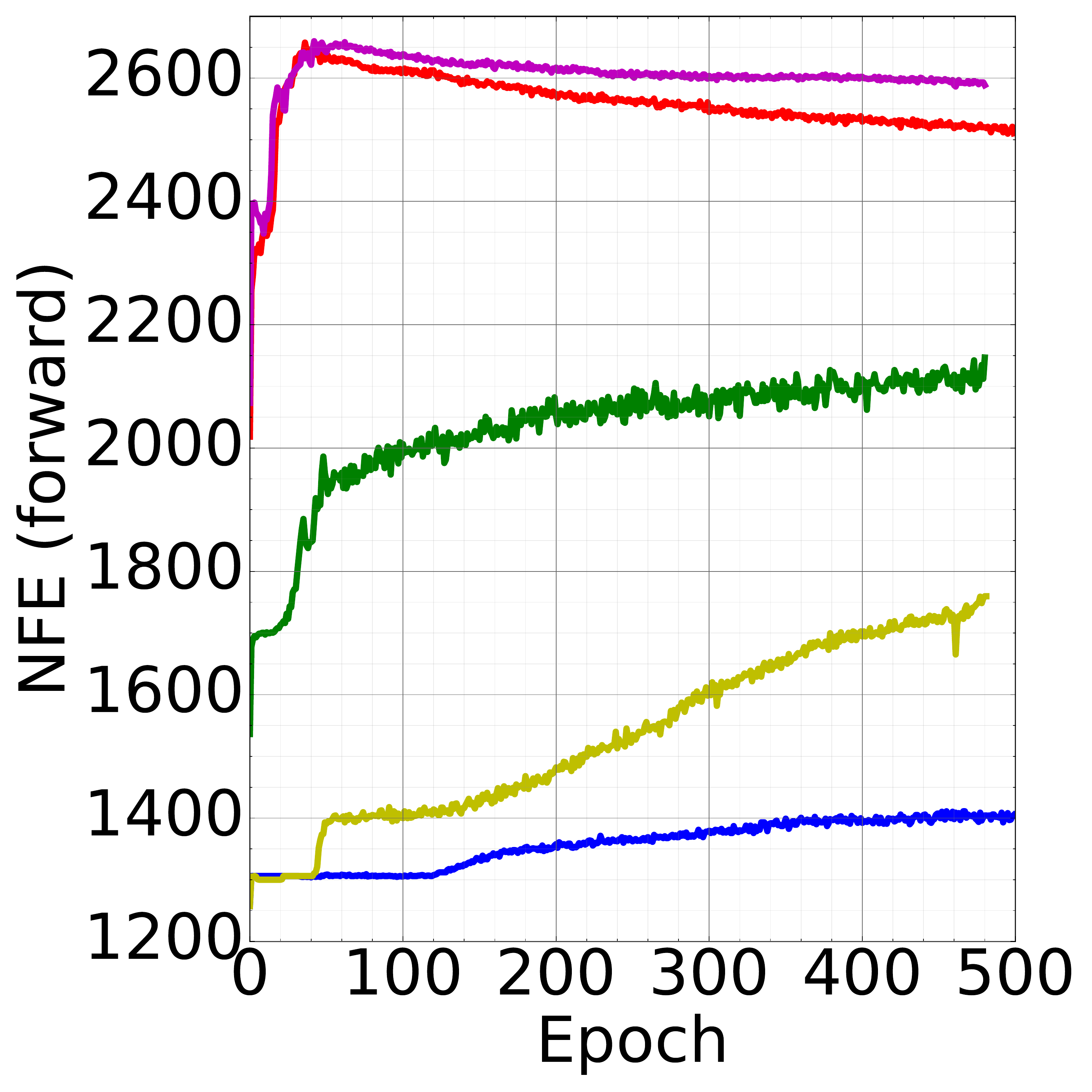}&
\hskip -0.4cm
\includegraphics[width=0.2\linewidth]{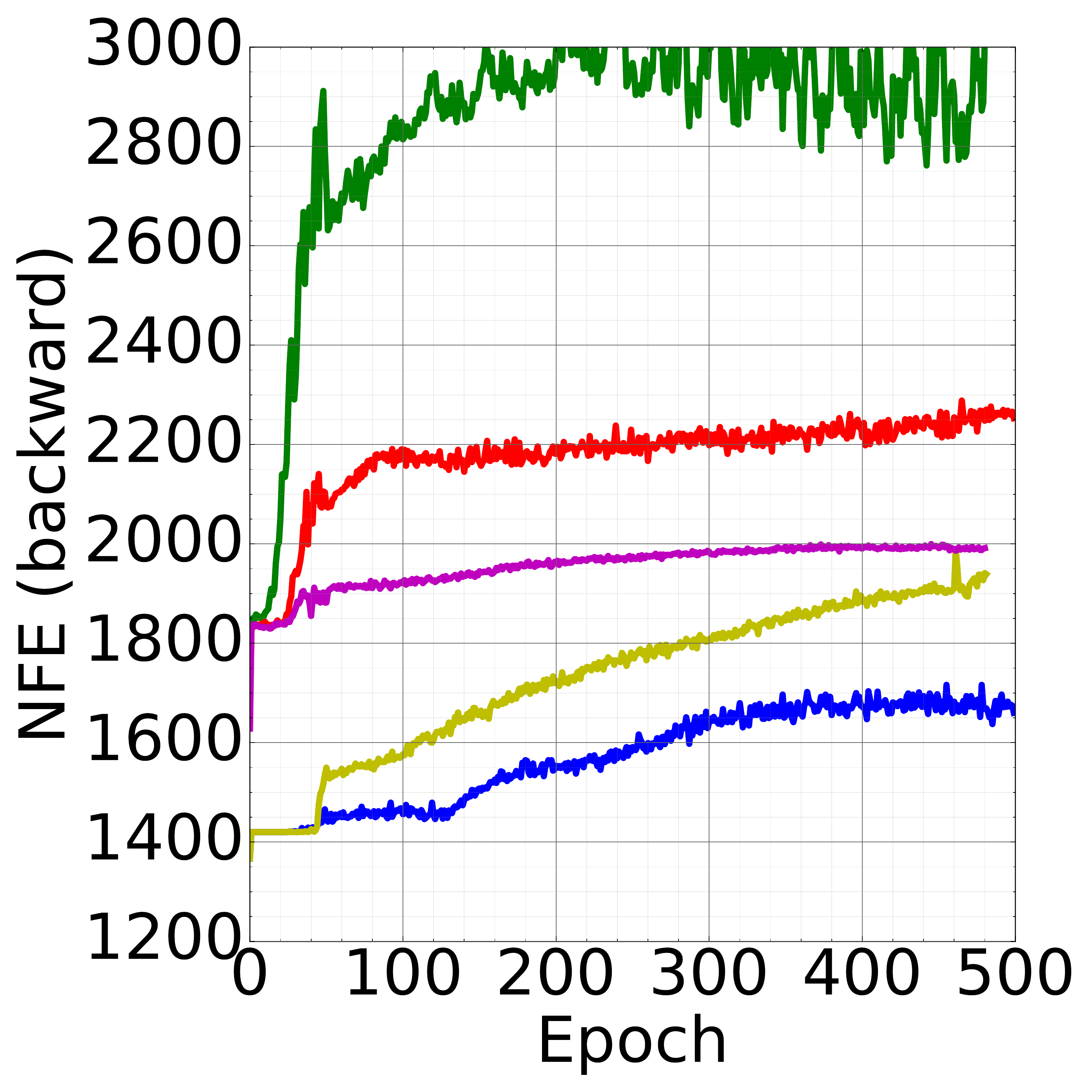}&
\hskip -0.4cm
\includegraphics[width=0.2\linewidth]{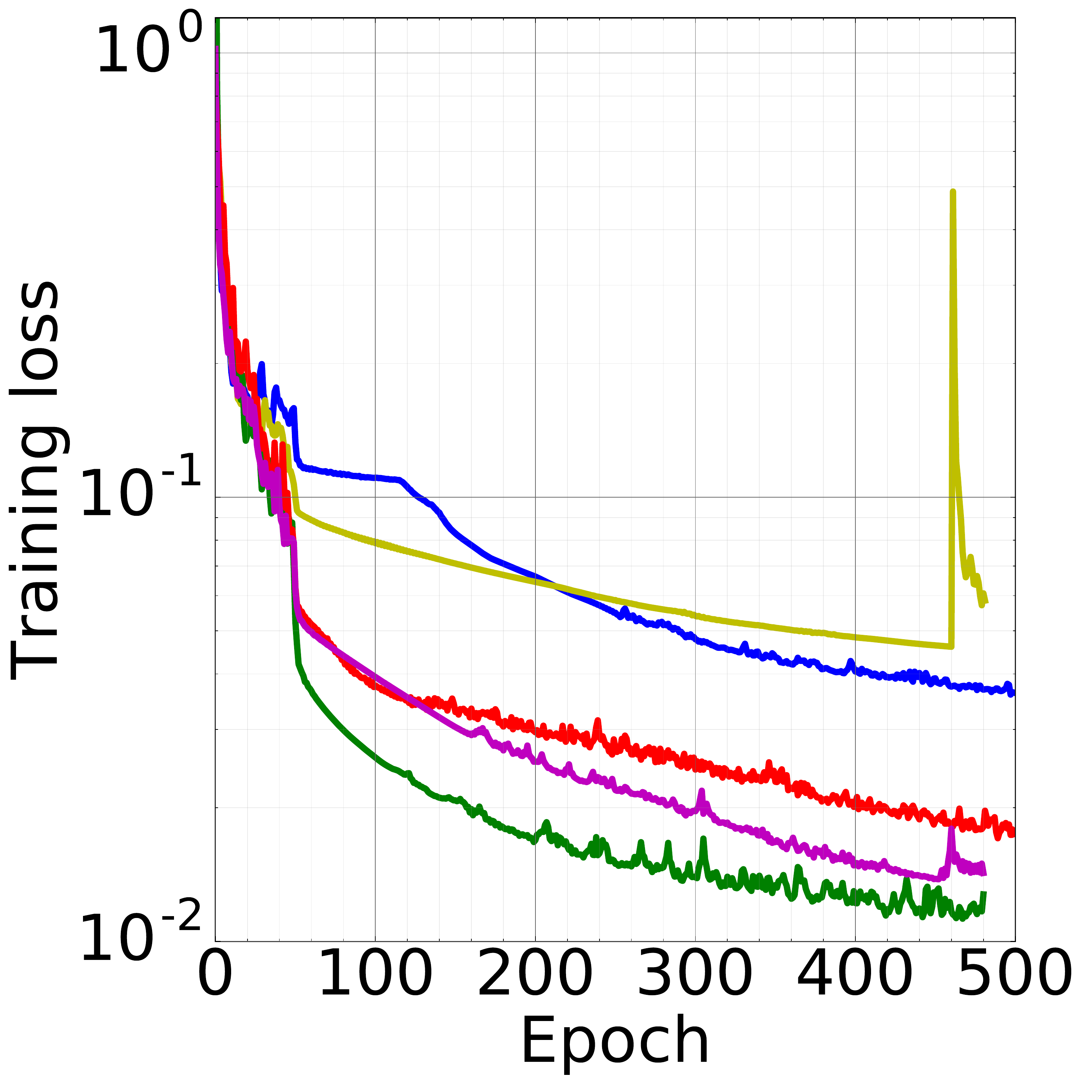}&
\hskip -0.4cm
\includegraphics[width=0.2\linewidth]{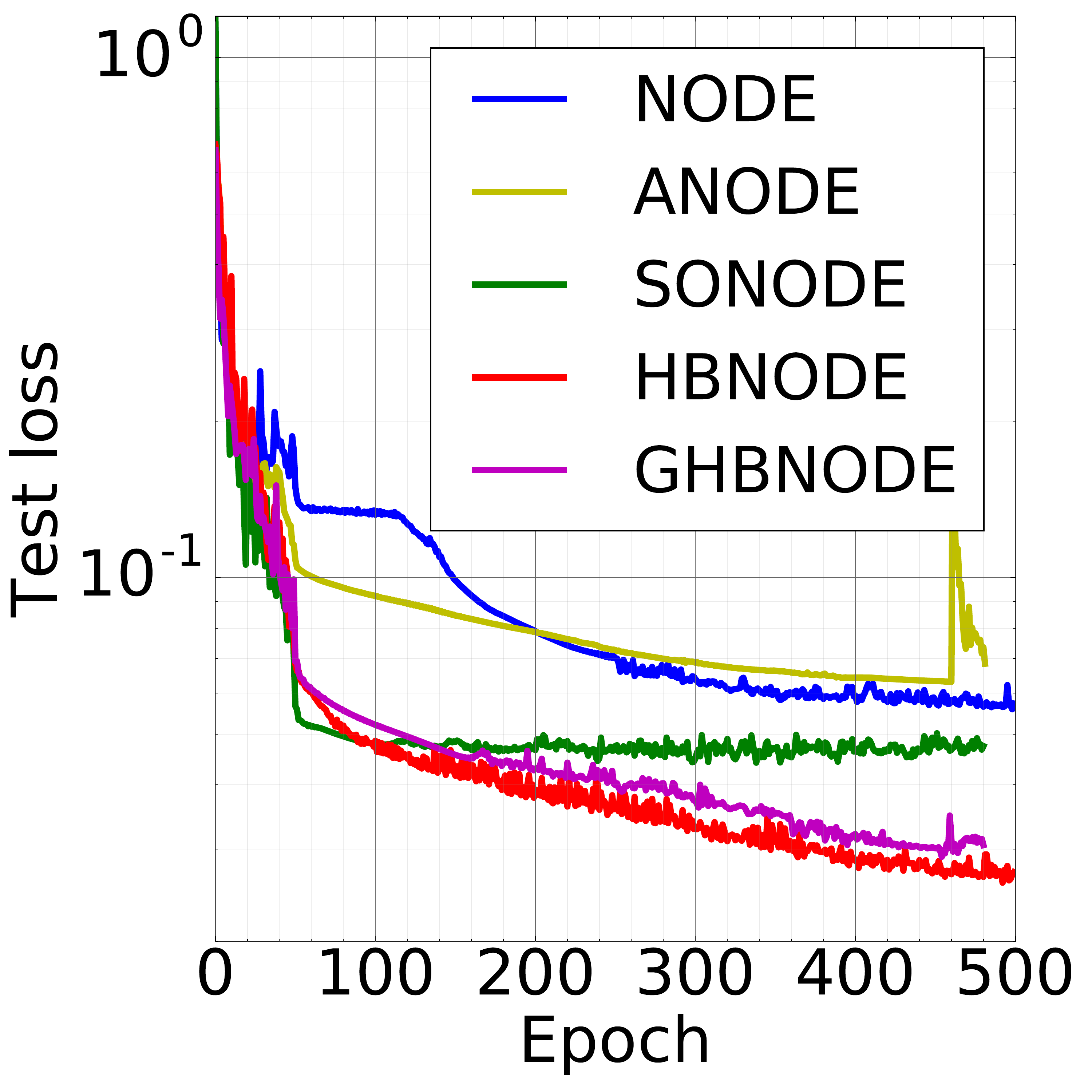}\\
\end{tabular}
\vskip -0.3cm
\caption{Contrasting ODE-RNN, ANODE-RNN, SONODE-RNN, HBNODE-RNN, and GHBNODE-RNN for learning a vibrational dynamical system. Left most: The learned curves of each model vs. the ground truth (Time: $<$66 for training, 66-75 for testing).  
}
\label{fig:Vibration-dynamics}
\end{figure*}

\subsubsection{
Walker2D kinematic simulation
}\label{sec:sequential-modeling}

We evaluate the performance of HBNODE-RNN and GHBNODE-RNN on the 
Walker2D kinematic simulation task, which requires learning long-term dependency effectively \cite{lechner2020learning}. The dataset \cite{brockman2016openai} consists of a dynamical system from kinematic simulation of a person walking from a pre-trained policy, aiming to learn the kinematic simulation of the MuJoCo physics engine \cite{6386109}. The dataset is irregularly-sampled with $10\%$ of the data removed from the simulation. Each input consists of 64 time stamps fed though the the hybrid methods in a recurrent fashion, and the output is passed to a single dense layer to generate the output time series. The goal is to provide an auto-regressive forecast so that the output time series is as close as the input sequence shifted one time stamp to the right. We compare ODE-RNN 
(with 7 augmentation), ANODE-RNN (with 7 ANODE style augmentation), HBNODE-RNN 
(with 7 augmentation), and GHBNODE-RNN (with 7 augmentation) 
The RNN is parametrized by a 3-layer network whereas the ODE is parametrized by a simple dense network. The number of parameters of the above four models are 8,729, 8,815, 8,899, and 8,899, respectively.  In Fig.~\ref{fig:walker-dynamics}, we compare the performance of the above four models on the Walker2D benchmark; HBNODE-RNN and GHBNODE-RNN not only require significantly less NFEs in both training (forward and backward) and in testing than ODE-RNN and ANODE-RNN, but also have much smaller training and test losses.

\begin{figure*}[!ht]
\centering
\begin{tabular}{ccccc}
\hskip -0.3cm
\includegraphics[width=0.2\linewidth]{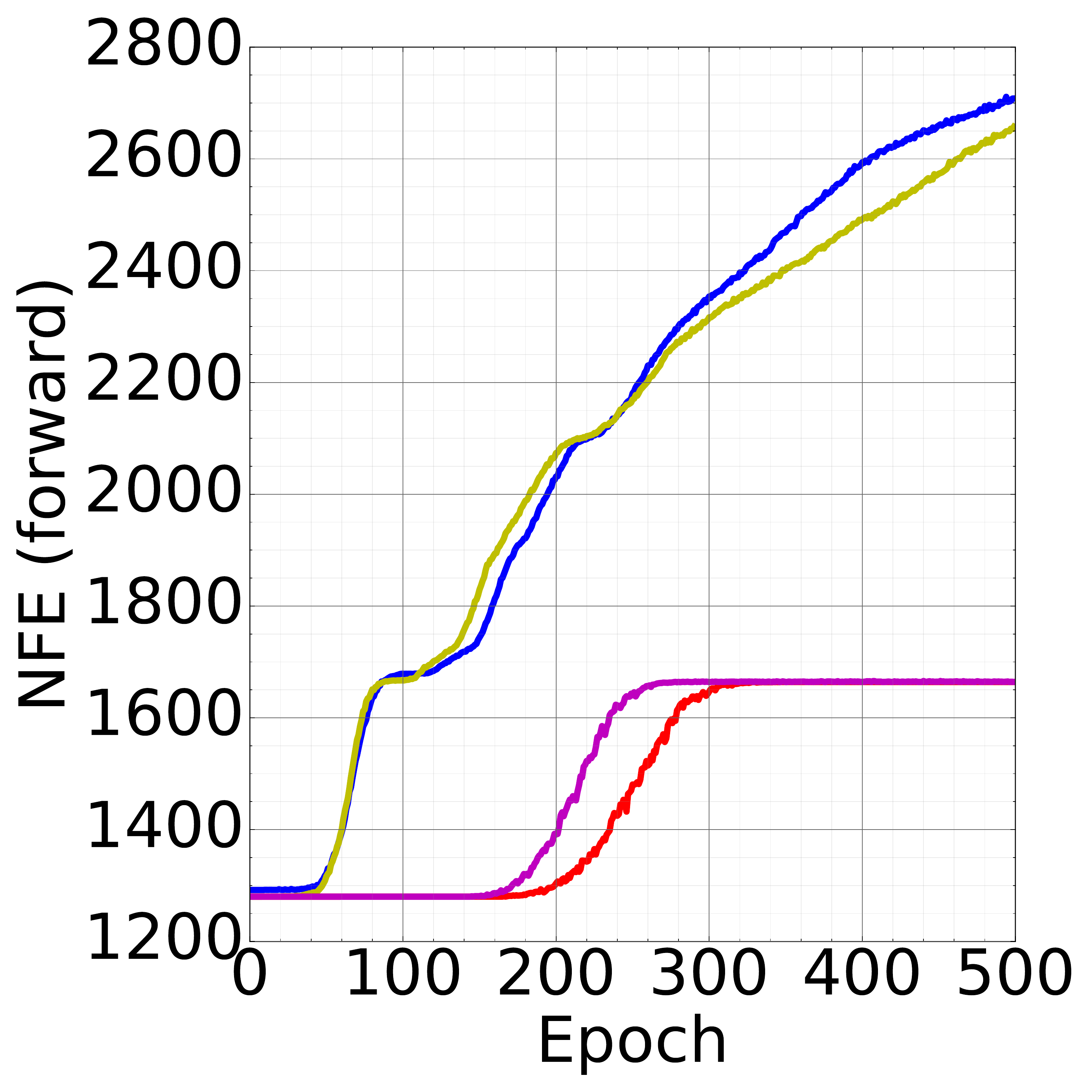}&
\hskip -0.4cm
\includegraphics[width=0.2\linewidth]{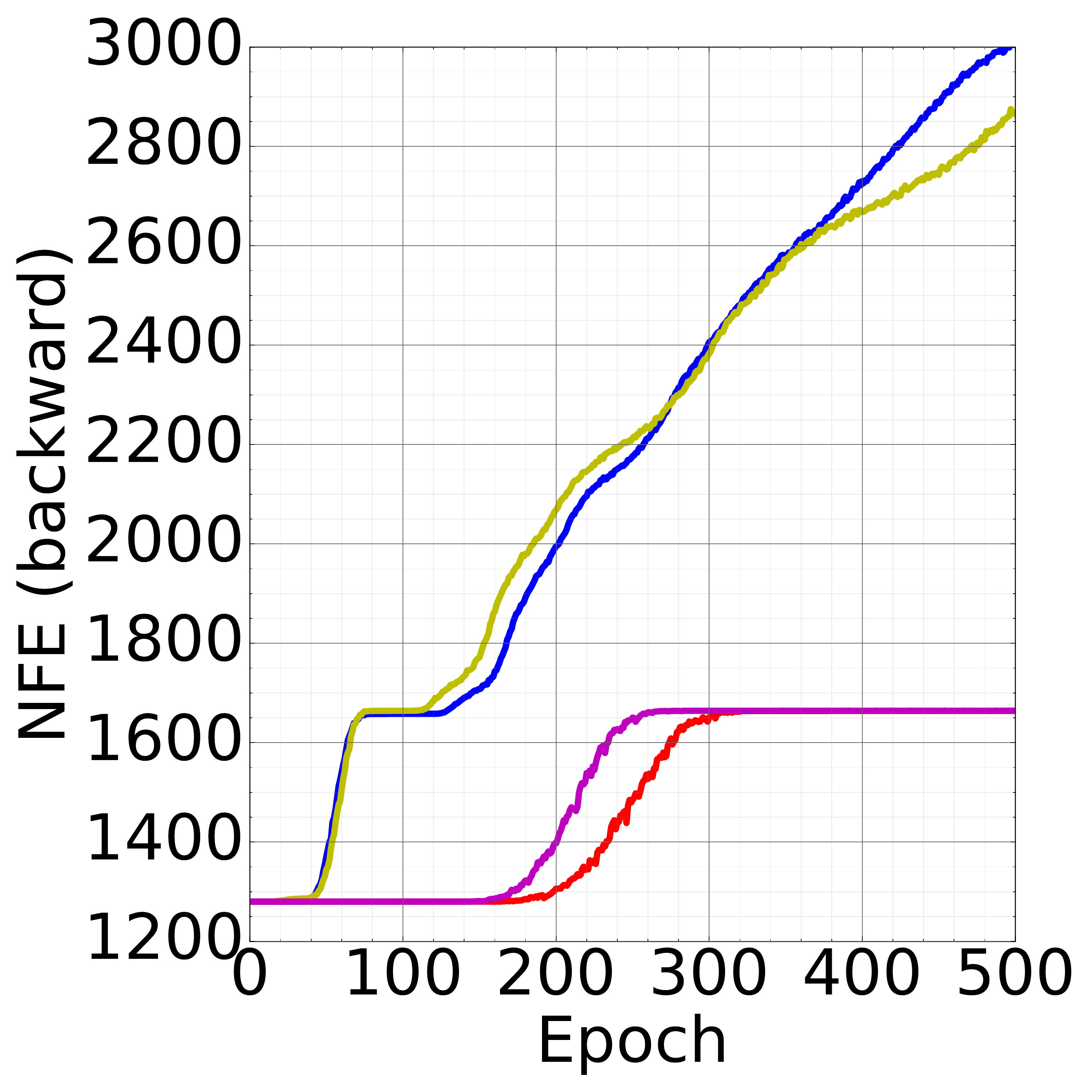}&
\hskip -0.4cm
\includegraphics[width=0.2\linewidth]{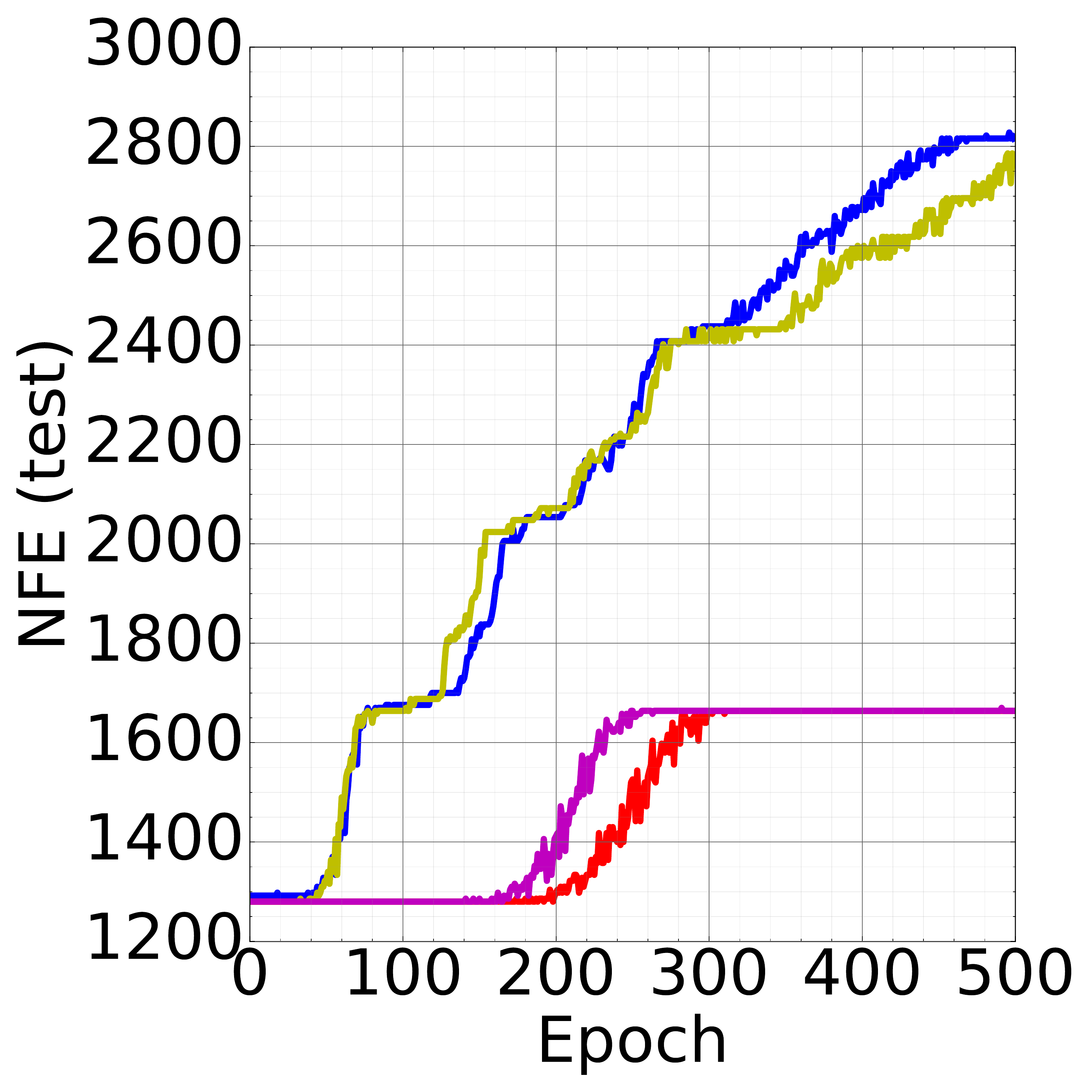}&
\hskip -0.4cm
\includegraphics[width=0.2\linewidth]{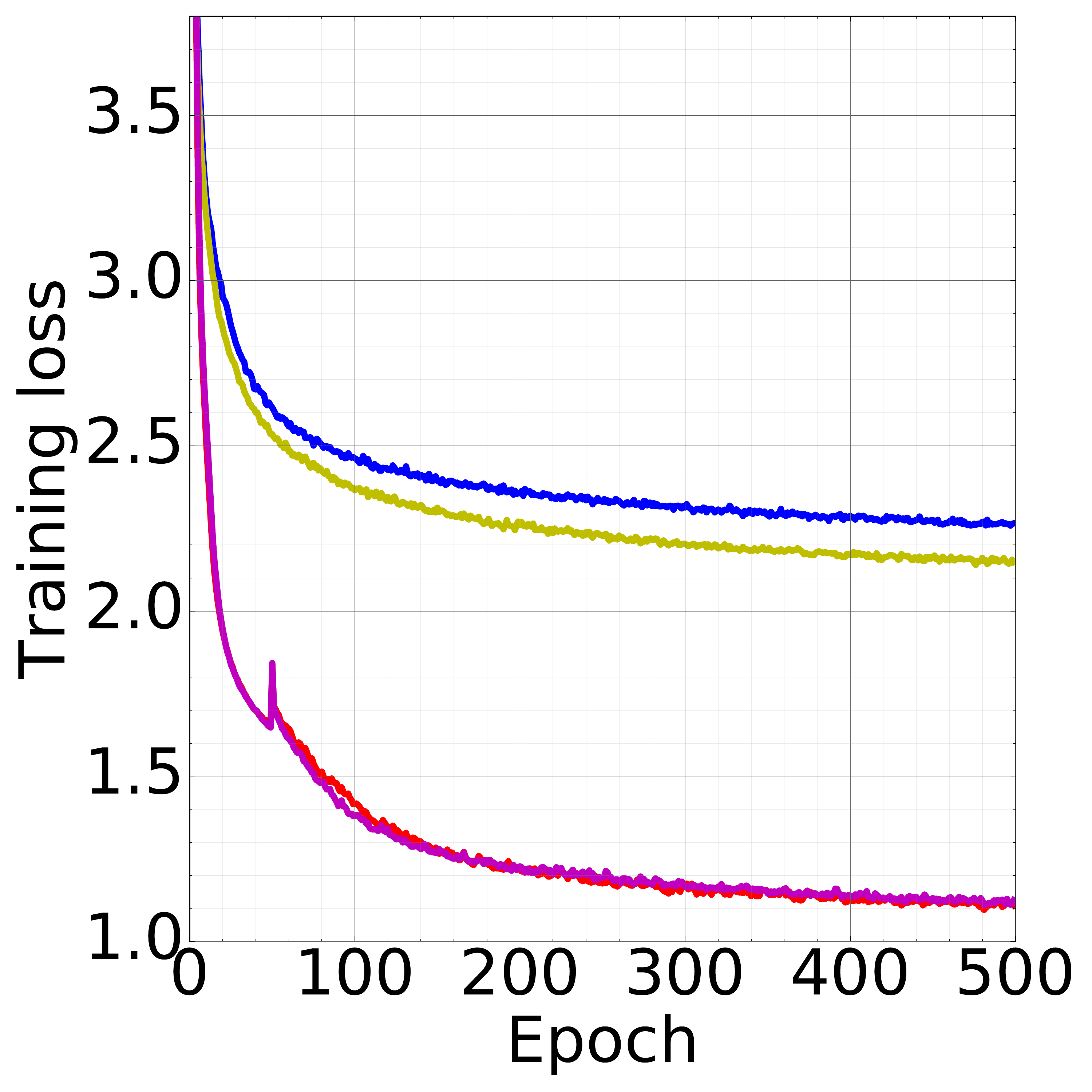}&
\hskip -0.4cm
\includegraphics[width=0.2\linewidth]{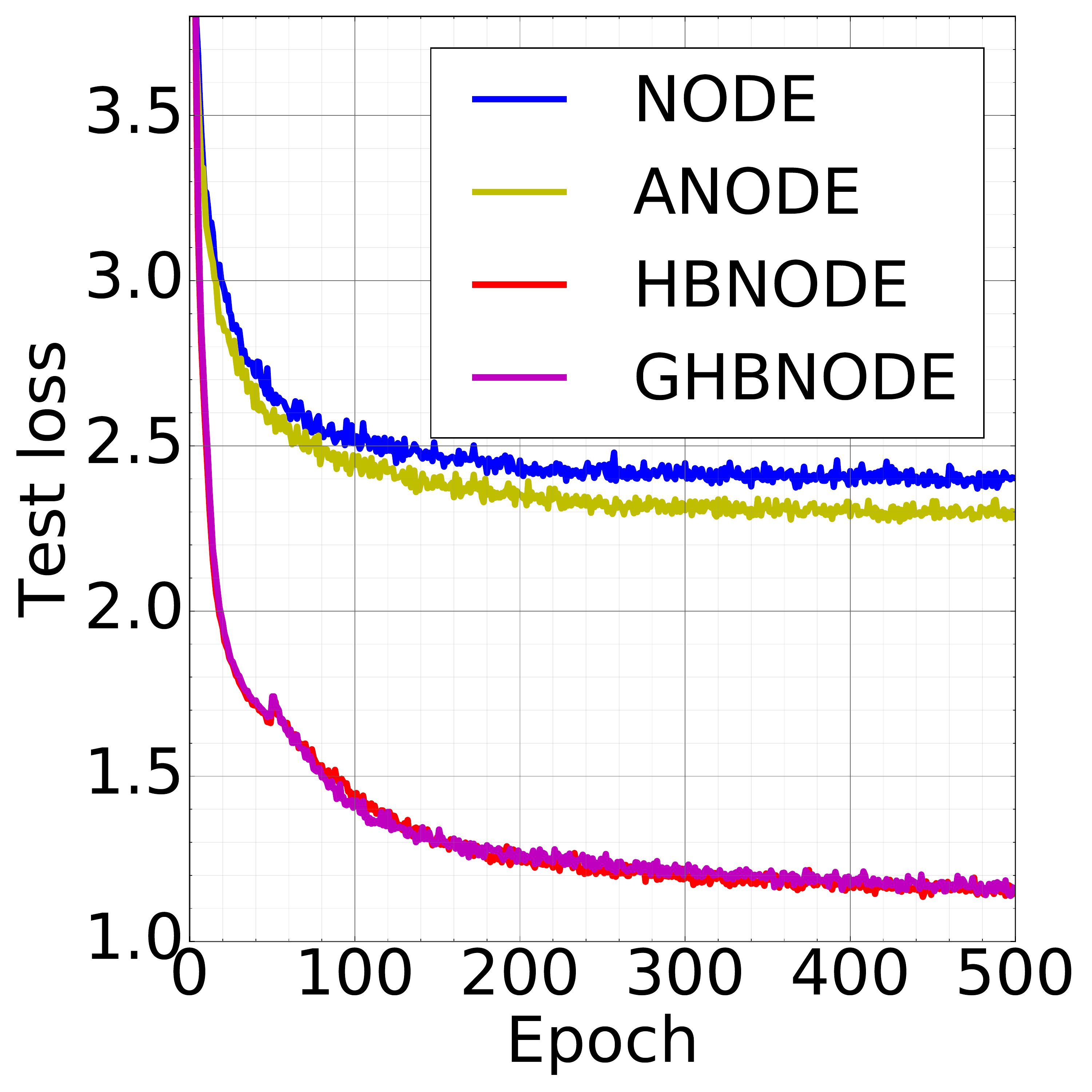}\\
\end{tabular}
\vskip -0.3cm
\caption{Contrasting ODE-RNN, ANODE-RNN, SONODE-RNN, HBNODE-RNN, and GHBNODE-RNN for the Walker-2D kinematic simulation. 
}
\label{fig:walker-dynamics}
\end{figure*}

\section{Transformers}\label{sec-transformers}
We further show that momentum can be integrated into transformers, which can significantly reduce the computational and memory costs of the standard transformer \cite{vaswani2017attention} and enhance the performance of linear transformers \cite{katharopoulos2020transformers}.

The self-attention mechanism is a fundamental building block of transformers 
\cite{vaswani2017attention,kim2017structured}. Given an input sequence ${\mX}=[\vx_1,\vx_2,\cdots,\vx_N]^\top\in \RR^{N\times D_x}$ of $N$ feature vectors, the self-attention transformers it into another sequence $\hat{\mV}=[\hat{\vv}_1,\hat{\vv}_2,\cdots,\hat{\vv}_N]^\top\in \RR^{N\times D_v}$ as follows
\begin{equation}\label{eq:attention-single-vector}
\hat{\vv}_i=\sum_{j=1}^N{\rm softmax}\Big(\frac{\vq_i^\top\vk_j}{\sqrt{D}} \Big)\vv_j,\ \mbox{for}\ i=1,\cdots,N,
\end{equation}
where the scalar ${\rm softmax}({(\vq_i^\top\vk_j)}/{\sqrt{D}})$ can be understood as the attention $\hat{\vv}_i$ pays to the input feature $\vx_j$. The vectors $\vq_i,\vk_j,$ and $\vv_j$ are called the query, key, and value vectors, respectively; these vectors are computed as follows
\begin{equation}\label{eq:query-key-value}
\begin{aligned}
[\vq_1,\vq_2,\cdots,\vq_N]^\top &:= {\mQ} = {\mX}{\mW}_Q^\top \in \RR^{N\times D}, \\
[\vk_1,\vk_2,\cdots,\vk_N]^\top &:={\mK} = {\mX}{\mW}_K^\top \in \RR^{N \times D}, \\
[\vv_1,\vv_2,\cdots,\vv_N]^\top &:= {\mV} = {\mX}{\mW}_V^\top \in \RR^{N\times D_v}, 
\end{aligned}
\end{equation}
where ${\mW}_Q, {\mW}_K\in \RR^{D\times D_x}$, and ${\mW}_V\in \RR^{D_v\times D_x}$ are the weight matrices.
We can further write \eqref{eq:attention-single-vector} into the following compact form
\begin{equation}\label{eq:attention-vector-form}
\hat{\mV}={\rm softmax}\Big(\frac{{\mQ}{\mK}^\top}{\sqrt{D}} \Big){\mV},
\end{equation}
where the softmax function is applied to each row of 
${\small {(\mQ\mK^\top)}/{\sqrt{D}}}$. Equation \eqref{eq:attention-vector-form} is also called the ``scaled dot-product attention'' or ``softmax attention''. Each transformer layer $T_{\ell}(\cdot)$ is defined via the following residual connection,
\begin{align}
    T_{\ell}({\mX}) = f_{\ell}(\hat{\mV} + {\mX}), \label{eqn:res-connect}
\end{align}
where $f_{\ell}$ is a function that transforms each feature vector independently and usually chosen to be a 
feedforward network. We call a transformer built with softmax attention standard transformer or transformer. It is easy to see that both memory and computational complexity of \eqref{eq:attention-vector-form} are 
$\mathcal{O}(N^2)$ with $N$ being the length of the input sequence. We can further introduce causal masking into \eqref{eq:attention-vector-form} for autoregressive applications \cite{vaswani2017attention}.

Transformers have become the state-of-the-art model for solving many challenging problems in natural language processing
\cite{vaswani2017attention,47866,dai2019transformer,williams-etal-2018-broad,devlin2018bert,NEURIPS2020_1457c0d6,howard-ruder-2018-universal,rajpurkar-etal-2016-squad} and computer vision \cite{dehghani2018universal,so2019evolved,dosovitskiy2020image,touvron2020deit}. Nevertheless, the quadratic memory and computational cost of computing the softmax attention \eqref{eq:attention-vector-form} is a major bottleneck for applying transformers to large-scale applications that involve very long sequences, such as those in 
\cite{liu2018generating,huang2018music,pmlr-v80-parmar18a}.
Thus, much recent research on transformers has been focusing on developing efficient transformers, aiming to reduce the memory and computational complexities of the model \cite{qiu2019blockwise,child2019generating,ho2019axial,pmlr-v80-parmar18a,beltagy2020longformer,ainslie-etal-2020-etc,wang2020linformer,tay2020synthesizer,pmlr-v119-tay20a,Kitaev2020Reformer,roy-etal-2021-efficient,vyas2020fast,zaheer2021big,wang2020linformer,katharopoulos2020transformers,choromanski2021rethinking,shen2021efficient,DBLP:journals/corr/abs-2102-11174,pmlr-v80-blanc18a,NEURIPS2019_e43739bb,song2021implicit,peng2021random,xiong2021nystromformer}. A thorough survey of recent advances in efficient transformers is available at \cite{tay2020efficient}. 
These efficient transformers have better memory and/or computational efficiency at the cost of a significant reduction in accuracy. 

\subsection{Motivation}
In \cite{katharopoulos2020transformers}, the authors have established a connection between transformers and RNNs through the kernel trick. They proposed the linear transformer, which can be considered a rank-one approximation of the softmax transformer. Linear transformers have computational advantages in training, test, and inference: the RNN formulation (see \eqref{eq:rnn:formulation} below) enjoys fast inference, especially for autoregressive tasks, and the unrolled RNN formulation (see \eqref{eq:attention-kernel-linear-matrix} below) is efficient for fast training. See subsection~\ref{sec:transformers} for a detailed review of the linear transformer and its advantages. \cite{MomentumRNN} proposes integrating momentum into RNNs to accelerate training RNNs and improve learning long-term dependencies. We notice that MomentumRNN also enjoys a closed unrolling form, which is quite unique among existing techniques for improving RNNs, enabling fast training, test, and inference; see section~\ref{sec:momentum-transformers} for details. As such, in this section we study \emph{how momentum improves linear transformers?}

\subsection{Linear transformer}
\label{sec:transformers}
Transformers learn long-term dependencies in sequences effectively and concurrently through the self-attention mechanism. Note we can write \eqref{eq:attention-single-vector} as  ${\hat{\vv}_i=({\sum_{j=1}^Nk(\vq_i,\vk_j)\vv_j})/({\sum_{j=1}^Nk(\vq_i,\vk_j)})}$, where $k(\vq_i,\vk_j)$ $:=\exp(\vq_i^\top\vk_j/\sqrt{D})$. In linear transformers \cite{wang2020linformer,katharopoulos2020transformers,choromanski2021rethinking,shen2021efficient}, the feature map $k(\vq_i,\vk_j)$ is linearized as the product of feature maps $\phi(\cdot)$ on the vectors $\vq_i$ and $\vk_j$, i.e., $k(\vq_i,\vk_j)=\phi(\vq_i)^\top\phi(\vk_j)$. The associative property of matrix multiplication is then utilized to derive the following efficient computation of the attention map
\begin{equation}
\begin{aligned}\label{eq:attention3}
\hspace{-0.3cm}{\small \hat{\vv}_i=\frac{\sum_{j=1}^Nk(\vq_i,\vk_j)\vv_j}{\sum_{j=1}^Nk(\vq_i,\vk_j)}
=\frac{\sum_{j=1}^N\phi(\vq_i)^\top\phi(\vk_j)\vv_j}{\sum_{j=1}^N\phi(\vq_i)^\top\phi(\vk_j) } 
=\frac{\phi(\vq_i)^\top\sum_{j=1}^N\phi(\vk_j)\vv_j^\top}{\phi(\vq_i)^\top\sum_{j=1}^N\phi(\vk_j)}.}
\end{aligned}
\end{equation}
In the matrix-product form, we can further write \eqref{eq:attention3} as follows
\begin{equation}\label{eq:attention-kernel-linear-matrix}
\Hat{\mV}=\frac{\phi({\mQ})(\phi({\mK})^\top{\mV} )}{\phi({\mQ})\phi({\mK})^\top}.
\end{equation}
Replacing $(\phi({\mQ})\phi({\mK}^\top)){\mV}$ with $\phi({\mQ})(\phi({\mK}^\top){\mV})$ reduces the  memory and computational cost of computing the attention map from $\mathcal{O}(N^2)$ to $\mathcal{O}(N)$, making linear transformers scalable to very long sequences. 

Causal masking can be easily implemented in the linearized attention by truncating the summation term in the last equation of \eqref{eq:attention3}, resulting in 
\begin{eqnarray}\label{eq:attention:causal-masking}
\hat{\vv}_i=\frac{\phi(\vq_i)^\top\sum_{j=1}^i\phi(\vk_j)\vv_j^\top}{\phi(\vq_i)^\top\sum_{j=1}^i\phi(\vk_j)}:=\frac{\phi(\vq_i)^\top\vs_i}{\phi(\vq_i)^\top\vz_i},
\end{eqnarray}
where $\vs_i=\sum_{j=1}^i\phi(\vk_j)\vv_j^\top$ and $\vz_i=\sum_{j=1}^i\phi(\vk_j)$. The states $\vs_i$ and $\vz_i$ can be computed recurrently.

\paragraph{Efficient inference via the RNN formulation.} Self-attention processes tokens of a sequence concurrently, enabling fast training of transformers.
However, during inference, the output for timestep $i$ is the input for timestep $i + 1$. As a result, the inference in standard transformers cannot be parallelized and is thus inefficient. Linear transformers provide an elegant approach to fixing this issue by leveraging their RNN formulation. In particular, we can further write the linear attention with causal masking in \eqref{eq:attention:causal-masking} into the following RNN form\footnote{We omit the nonlinearity (a two-layer feedforward network) compared to \cite{katharopoulos2020transformers}.}
\begin{equation}\label{eq:rnn:formulation}
\begin{aligned}
\vs_i&=\vs_{i-1} + \phi(\vk_i)\vv_i^\top;\quad\\
\vz_i&=\vz_{i-1} + \phi(\vk_i);\quad\\
\hat{\vv}_i&=\frac{\phi(\vq_i)^\top \vs_i}{\phi(\vq_i)^\top\vz_i},
\end{aligned}
\end{equation}
where $\vs_0=\mathbf{0}$ and $\vz_0=\mathbf{0}$. Note that this RNN formulation of linear transformers with causal masking contains two memory states $\vs_i$ and $\vz_i$.

\subsection{Momentum transformer}
\label{sec:momentum-transformers}
In this section, we present the \emph{momentum transformer}. We start by integrating the heavy-ball momentum into the RNN formulation of causal linear attention in \eqref{eq:rnn:formulation}, resulting in the causal momentum attention. Next, we generalize the causal momentum attention to momentum attention that can efficiently train the model. Moreover, we propose the \emph{momentum connection} to replace residual connections between the attention $\hat{\mV}$ and the input ${\mX}$ in~\eqref{eqn:res-connect} to boost the model's performance. Finally, we derive the adaptive momentum attention from the theory of optimal choice of momentum for the heavy-ball method.

\subsubsection{Momentum transformer}\label{subsec:momentum-transformer}

\paragraph{Integrating momentum into causal linear attention.} Now we consider integrating momentum into causal linear attention. We integrate momentum into the state $\vs_i$  in \eqref{eq:rnn:formulation} only since the denominator in causal linear attention is simply a normalizing scalar. If we regard $-\phi(\vk_i)\vv_i^\top$ as the gradient vector in \eqref{eq:GD}, then we can add momentum into the state $\vs_i$ by following 
the heavy-ball method in \eqref{eq:HB2}, resulting in the following RNN formulation of causal momentum attention,
\begin{equation}\label{eq:momentum:rnn:formulation}
\begin{aligned}
\vm_i &= \beta\vm_{i-1}-\phi(\vk_i)\vv_i^\top;\quad \\
\vs_i &= \vs_{i-1} - \gamma\vm_i;\quad\\
\vz_i &= \vz_{i-1} + \phi(\vk_i);\quad\\
\hat{\vv}_i &= \frac{\phi(\vq_i)^\top\vs_i}{\phi(\vq_i)^\top\vz_i},
\end{aligned}
\end{equation}
where $\vm_0=\mathbf{0}$, and $\gamma>0$ and $0\leq \beta<1$ are two hyperparameters. The RNN formulation of causal momentum attention in \eqref{eq:momentum:rnn:formulation} is efficient for autoregressive inference. For 
training, we need to rewrite \eqref{eq:momentum:rnn:formulation} into a form that is similar to 
\eqref{eq:attention:causal-masking}. To this end, we need to eliminate $\vm_i, \vs_i$, and $\vz_i$ from \eqref{eq:momentum:rnn:formulation}. Note that
\begin{equation*}
\begin{aligned}
\vs_i = \vs_{i-1}-\underbrace{\gamma\vm_i}_{:=\vp_i}
 = \underbrace{\vs_0}_{=\mathbf{0}} - \Big(\vp_i+\vp_{i-1}+\cdots+\vp_1\Big),
\end{aligned}
\end{equation*}
since $\vm_i=\beta\vm_{i-1}-\phi(\vk_i)\vv_i^\top$, we have $\vp_i=\beta\vp_{i-1}-\gamma\phi(\vk_i)\vv_i^\top$. Therefore,
\begin{equation*}
\begin{aligned}
\vs_i &= - (\vp_i+\vp_{i-1}+\cdots+\vp_1)= \gamma\phi(\vk_i)\vv_i^\top - \Big( (1+\beta)\vp_{i-1}+\vp_{i-2}+\cdots+\vp_1\Big)\\
&= \gamma\phi(\vk_i)\vv_i^\top + \gamma(1+\beta)\phi(\vk_i)\vv_i^\top  - \Big((1+\beta)^2\vp_{i-2}+\cdots+\vp_1\Big)\\
&=\cdots\\
&=\gamma \sum_{j=1}^i\frac{1-\beta^{i-j+1}}{1-\beta}\phi(\vk_j)\vv_j^\top\ \mbox{for}\ i\geq 1.
\end{aligned}
\end{equation*}
We can then formulate the causal momentum attention as follows
\begin{equation}\label{eq:momentum:causal:attention}
\hat{\vv}_i=\frac{\gamma\phi(\vq_i)^\top\sum_{j=1}^i\Big(\frac{1-\beta^{i-j+1}}{1-\beta}\phi(\vk_j)\vv_j^\top\Big) }{\phi(\vq_i)^\top\vz_i}.
\end{equation}
Note that \eqref{eq:momentum:causal:attention} is mathematically equivalent to \eqref{eq:momentum:rnn:formulation}, but it can be trained much more efficiently in a concurrent fashion via layerwise parallelism. 

\begin{remark}
Comparing \eqref{eq:momentum:causal:attention} with \eqref{eq:attention:causal-masking} , we see that momentum plays a role in reweighting the terms $\{\phi(\vk_j)\vv_j^\top\}_{j=1}^i$. It is interesting to note that this reweighting is opposite to that used for reweighting the local attention \cite{dai2019transformer}. It has also been noticed that low-rank attention can complement local attention, resulting in improved performance \cite{nguyen2021fmmformer}.
\end{remark}

\paragraph{Integrating momentum into linear attention.} To obtain momentum attention without causal masking, we can simply take the sum from $1$ to $N$ instead of summing from $1$ to $i$. Therefore, we obtain the following momentum attention
\begin{equation}\label{eq:momentum:attention}
\hat{\vv}_i=\frac{\gamma\phi(\vq_i)^\top\sum_{j=1}^N\Big(\frac{1-\beta^{N-j+1}}{1-\beta}\phi(\vk_j)\vv_j^\top\Big) }{\phi(\vq_i)^\top\sum_{j=1}^N\phi(\vk_j)}.
\end{equation}

\paragraph{Memory and computational complexity.} 
Training momentum transformers have the same memory and computational complexities of $\mathcal{O}(N)$ as the training of linear transformers. For test and inference, momentum transformers also have the same memory and computational complexities as linear transformers. However, in the RNN form, momentum transformers require slightly more memory than linear transformers to store the extra momentum state $\vm_i$.

\subsubsection{Momentum connection}\label{subsec:momentum-connection}
Each transformer layer has a residual connection between the self-attention output and the input as shown in~\eqref{eqn:res-connect}. We further integrate momentum into~\eqref{eqn:res-connect} and derive the momentum connection as follows

\begin{equation}\label{eq:momentum-transformer-layer}
T_\ell({\mX}) = f_\ell\big(\hat{\mV}+{\mX} + \Tilde{\beta}({\mX}-T_{\ell-1}({\mX}))\big),\ \ 0\leq \Tilde{\beta}<1.
\end{equation}

\paragraph{Adaptive momentum.}\label{subsubsec:adaptive-momentum}\ \ 
Our momentum transformer introduces additional hyperparameters $\gamma$ and $\beta$, as well as $\Tilde{\beta}$, compared to the linear transformer. 
Often $\gamma$ can be simply set to 1. 
However, tuning 
$\beta$ and $\Tilde{\beta}$
can introduce extra computational cost for training transformers. Moreover, using a constant momentum may not give us optimal performance. In this part, we will introduce an adaptive momentum formula for computing the momentum hyperparameter in momentum connection and thus eliminating the computational overhead for tuning $\Tilde{\beta}$. Here, the adaptive momentum does not apply to $\beta$ since it will break the closed unrolling form in \eqref{eq:momentum:causal:attention}. Adaptive momentum has been used in optimization, see, e.g., \cite{wang2020stochastic}, \cite{sun2021training}; here, we use the later one for its simplicity.

\begin{figure*}[t!]
\centering
\includegraphics[width=0.9\linewidth]{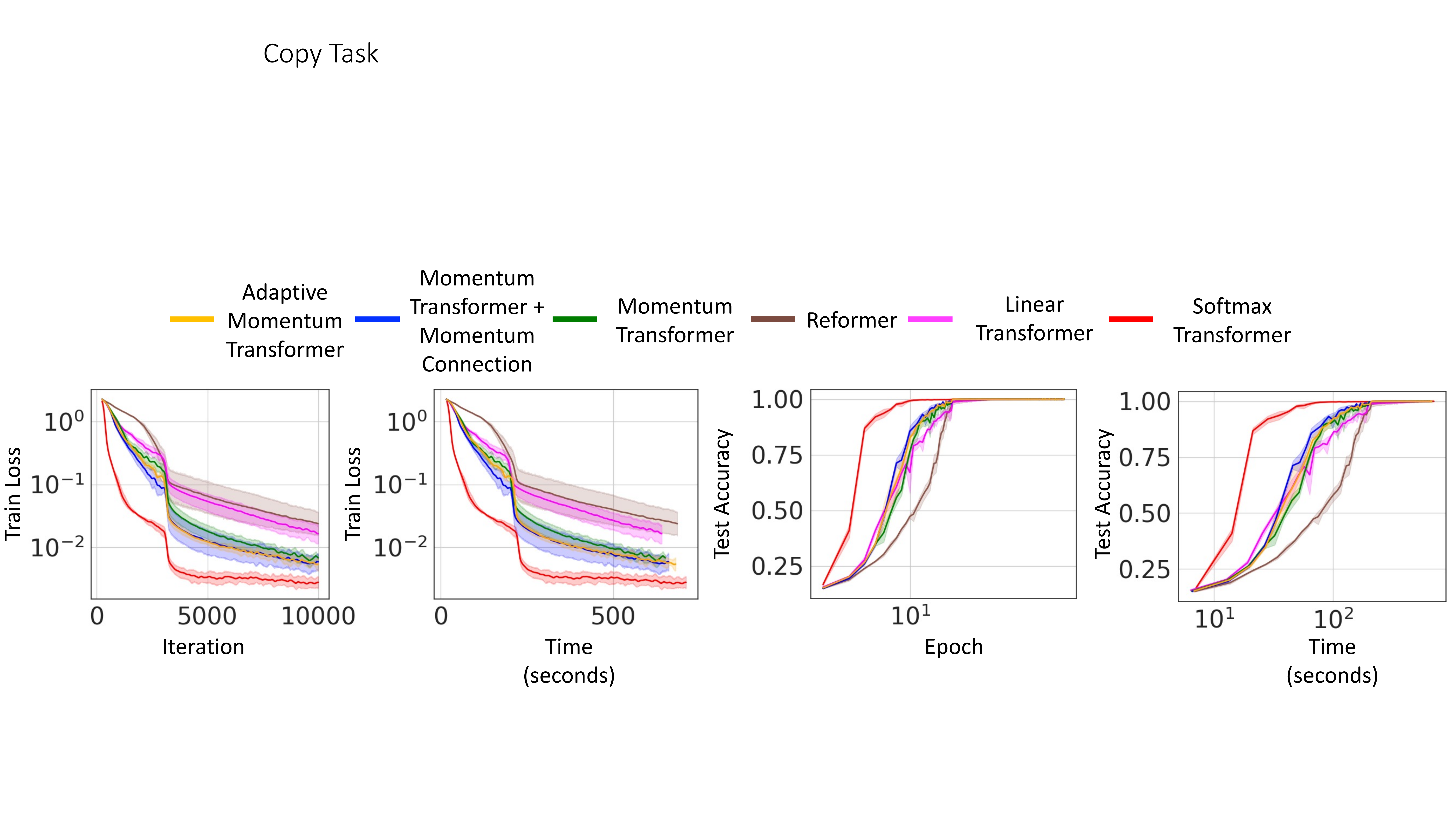}\vspace{-0.3cm}
\caption{Convergence comparison of adaptive momentum, momentum, reformer, linear, and softmax transformer on the sequence copy task. Momentum and adaptive momentum transformers converge faster and achieve better training loss than both linear transformer and reformer. Softmax transformer converges the fastest but suffers from quadratic memory and computational complexity. Adaptive momentum transformer performs as well as momentum transformer without intensively searching for momentum values. 
}
\label{fig:copy-convergence-analysis}
\end{figure*}

\subsection{Experimental results}

We evaluate the benefits of our momentum transformers in terms of convergence speed, efficiency, and accuracy. We compare the performance of momentum and adaptive momentum transformers with the baseline standard softmax transformer and several other efficient transformers in the following tasks: 1) the synthetic copy task, 2) the MNIST and CIFAR image generation task, 3) Long-Range Arena~\cite{tay2021long}, and 4) the non-autoregressive machine translation task. These tasks are among standard benchmarks for measuring the performance of transformers and their efficiency. The tasks we choose also cover different data modalities - text and image - and a variety of model sizes. Our experimental results confirm that momentum and adaptive momentum transformers outperform many existing efficient transformers, including linear transformers and reformers, in accuracy and converge faster. Furthermore, adaptive momentum transformer improves over momentum transformer without the need of searching for momentum hyperparameter. 

\subsubsection{Copy task}

We train momentum transformers and baseline models on a synthetic copy task to analyze their convergence speed. In this task, the model has to duplicate a sequence of symbols. 
Each training and test sample has the form $0w0w$ where $w$ is a sequence of symbols collected from the set $\{1,\dots,N\}$. 

In our experiments, we follow the same experimental setting as that used in \cite{katharopoulos2020transformers}. In particular, we use a sequence of maximum length 128 with 10 different symbols separated by a separator symbol. The baseline architecture for all methods is a 4-layer transformer with 8 attention heads and $D=32$. The models are trained with the RAdam optimizer 
using a batch size of 64 and a learning rate of $10^{-3}$ which is reduced to $10^{-4}$ after 3000 iterations. Figure~\ref{fig:copy-convergence-analysis} shows the training loss and the test accuracy over epochs and over GPU time. Both the momentum and the adaptive momentum transformers converge much faster and achieve better training loss than the linear transformer. Notice that while the standard 
transformer converges the fastest, it has quadratic complexity. Adaptive momentum transformer has similar performance as the momentum transformer without the need of tuning for the momentum value. 

\subsubsection{Image generation}
Transformers have shown great promise in autoregressive generation applications \cite{radford2019language,child2019generating}, such as autoregressive image generation \cite{ramesh2020dalle}. However, the training and sampling procedure using transformers are quite slow for these tasks due to the quadratic computational time complexity and the memory scaling with respect to the sequence length. In this section, we train our momentum-based transformers and the baselines with causal masking to predict images pixel by pixel and compare their performance. In particular, we demonstrate that, like linear transformers, both momentum and adaptive momentum transformers are able to generate images much faster than the standard softmax transformer. Furthermore, we show that momentum-based transformers converge much faster than linear transformers while achieving better bits per dimension (bits/dim). 
Momentum and adaptive momentum transformers also generate images with constant memory per image like linear transformers.

\begin{figure}[t!]
\centering
\includegraphics[width=0.7\linewidth]{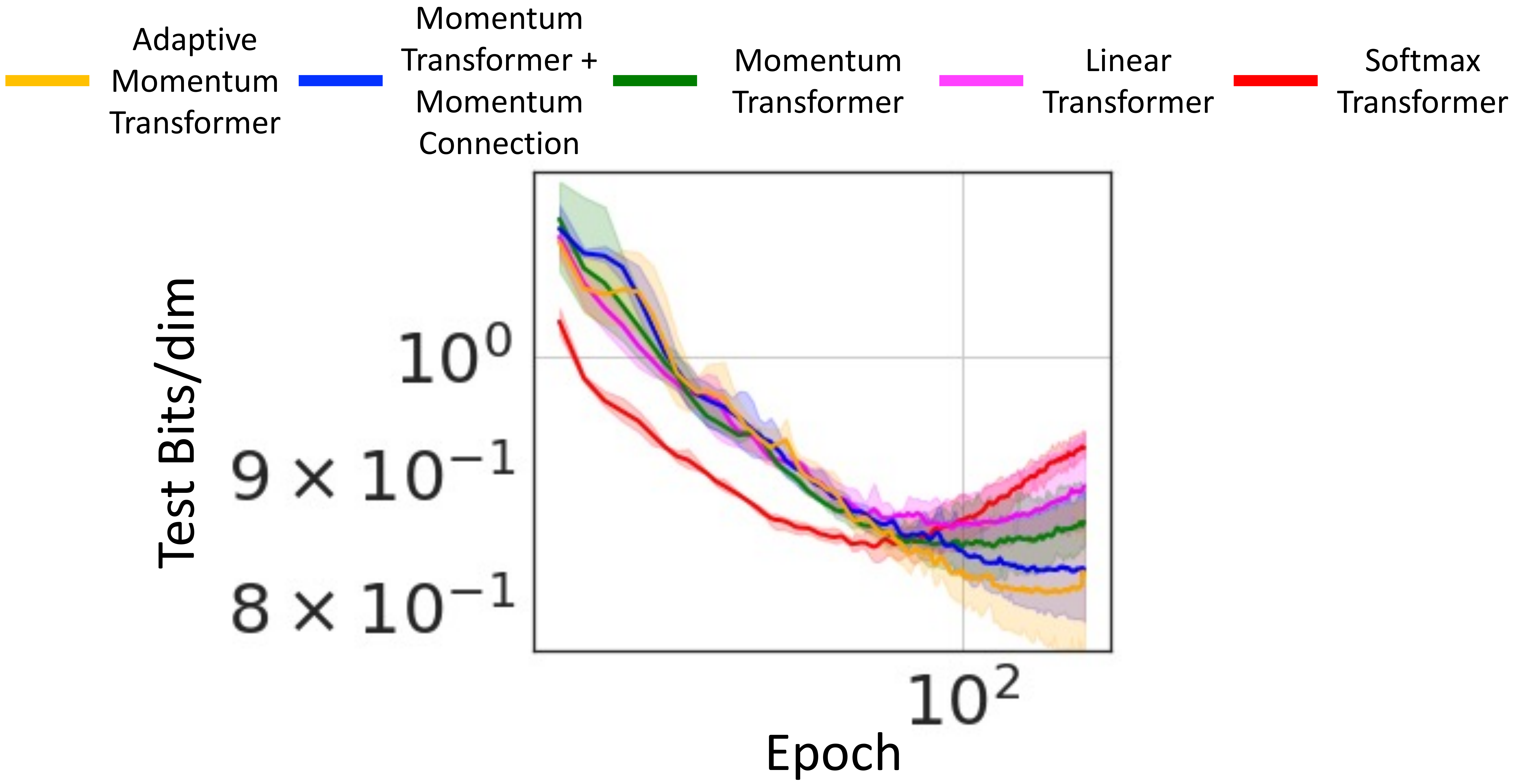}
\vskip -0.2cm
\caption{Momentum transformers outperform linear transformers on the MNIST image generation task. Adaptive momentum transformer achieves the best test bits/dim.}
\label{fig:mnist-convergence-analysis}
\end{figure}

\bgroup
\renewcommand{\arraystretch}{1.0}
\begin{table*}[t!]
    \begin{center}
    \begin{tabular}{lcrl}
        Method & Bits/dim & \multicolumn{2}{c}{Images/sec} \\
        \hline
        Standard softmax transformer & 0.84 & 0.45 & (1$\times$) \\
        Linear transformer & 0.85 & 142.8 & (317$\times$)  \\
        \hline
        Momentum transformer & 0.84 & 139.7 & (310$\times$) \\
        Momentum transformer + momentum connection & 0.82 & 135.5 & (301$\times$) \\
        Adaptive momentum transformer & 0.80 & 134.9 & (300$\times$)  \\
    \end{tabular}
    \end{center}\vspace{-0.3cm}
    \caption{Momentum transformers achieve better test bits/dim than both softmax and linear transformers on MNIST generation. 
    }
    \label{tab:mnist2}
\end{table*}
\egroup

\paragraph{MNIST.}
We first examine our momentum-based transformers on the MNIST image generation task. 
For all methods, we train a 8-layer transformer with 8 attention heads and the embedding size of 256, which corresponds to 32 dimensions per head. The feedforward dimensions are 4 times larger than the embedding size. A mixture of 10 logistics is used to model the output as in \cite{salimans2017pixelcnn}. For training, we use the RAdam optimizer with a learning rate of $10^{-4}$ and train all models for 250 epochs except for the adaptive momentum transformer. 

We report the bits/dim and image generation throughput in Table~\ref{tab:mnist2}. Compared to the linear transformer, all momentum-based transformers not only attain better bits/dim but also have comparable image generation throughput, justifying the linear complexity of our models. In addition, we demonstrate that the adaptive momentum transformer converges much faster than the baseline models in Figure~\ref{fig:mnist-convergence-analysis}. Momentum-based transformers even outperform softmax transformers in this task.

\paragraph{CIFAR10.}
Next, we investigate the advantages of our momentum-based transformers when the sequence length and the number of layers in the model increase. We consider the CIFAR-10 image generation task, in which we train 16-layer transformers to generate CIFAR-10 images. The configuration for each layer is the same as in the MNIST experiment. For the linear transformer and our momentum-based transformer, we use a batch size of 4 while using a batch size of 1 for the standard softmax transformer due to the memory limit of the largest GPU available to us, i.e., NVIDIA V100. This is similar to the setting in \cite{katharopoulos2020transformers}. Like in the MNIST image generation task, our momentum-based transformers outperform the linear transformer in terms of bits/dim while maintaining comparable image generation throughput. This is a very expensive task, limiting us to perform a thorough hyperparameter search; we believe better results can be obtained with a more thorough hyperparameter search.

\bgroup
\renewcommand{\arraystretch}{1.0}
\begin{table*}[t!]
    \begin{center}
    \begin{tabular}{lcrl}
        Method & Bits/dim & \multicolumn{2}{c}{Images/sec} \\
        \hline
        Standard softmax transformer & 3.20 & 0.004 & (1$\times$) \\
        Linear transformer & 3.44 & 17.85 & (4462$\times$)  \\
        \hline
        Momentum transformer & 3.43 & 17.52 & (4380$\times$) \\
        Momentum transformer + momentum connection & 3.41 & 17.11 & (4277$\times$) \\
        Adaptive momentum transformer & 3.38 & 17.07 & (4267$\times$)  \\
    \end{tabular}
    \end{center}\vspace{-0.4cm}
    \caption{
    Momentum-based transformers achieve better test bits/dim than 
    linear transformer on CIFAR10 image generation task.}
    \label{tab:cifar10}
\end{table*}
\egroup

\begin{table*}[!t]
\centering
\resizebox{1.0\linewidth}{!}{
\begin{tabular}{c|c|c|c|c|c|c}
\hline
Model          & ListOps (2K) & Text (4K) & Retrieval (4K) & Image (1K) &  Pathfinder (1K) & Avg \\ \hline
Softmax~\cite{vaswani2017attention} & 37.10 (37.10) & 64.17 (65.02) & 80.71 (79.35) & 39.06 (38.20) & 72.48 (74.16) & \bf 58.70 (58.77) \\
\hline
Linear~\cite{katharopoulos2020transformers}  & 18.30 & 64.22 & 81.37 & 38.29 & 71.17 & 54.67 \\
\hline
Performer~\cite{choromanski2021rethinking}  & 18.80 & 63.81 & 78.62 & 37.07 & 69.87 & 53.63 \\
\hline
Reformer~\cite{Kitaev2020Reformer}  & 19.05 & 64.88 & 78.64 & 43.29 & 69.36 & 55.04 \\
\hline
Linformer~\cite{wang2020linformer}  & \bf 37.25 & 55.91 & 79.37 & 37.84 & 67.60 & 55.59 \\
\hline
Momentum transformer & 19.56 & 64.35 & 81.95 & 39.40 & 73.12 & 55.68  \\
Adaptive momentum transformer & 20.16 & \bf 64.45 & \bf 82.07 & \bf 39.53 & \bf 74.00 & 56.04 \\
\hline
\end{tabular}}
\hspace{0.1em}
\caption{Results on the LRA tasks. We report the test classification accuracy for each task and average accuracy across all tasks. The momentum-based transformers, in particular, the adaptive momentum transformer, outperforms all other transformers except on the ListOps. 
The numbers in the parenthesis are from the paper \cite{xiong2021nystromformer}. Unit: \%. 
}\label{tab:lra}
\end{table*}

\bgroup
\renewcommand{\arraystretch}{1.0}
\begin{table*}[t!]
    \begin{center}
    \begin{tabular}{lcc}
        Method & BLEU Score & Speed (tokens/s) \\
        \hline
        Standard softmax transformer & 24.34 & 5104 \\
        Linear transformer & 21.37 & 1382  \\
        \hline
        Momentum transformer & 22.11 & 1398 \\
        Momentum transformer + momentum connection & 22.14 & 1403 \\
        Adaptive momentum transformer & 22.20 & 1410 \\
    \end{tabular}
    \end{center}\vspace{-0.3cm}
    \caption{BLEU scores and tokens per second from machine translation models trained on IWSLT
show the advantages of our momentum-based transformers.
The number of trainable parameters is almost the same for all models,
up to the small difference introduced by the momentum mechanism in our models. Momentum-based transformers outperform the linear transformer in generation quality in terms of BLEU score and obtain comparable generation efficiency in terms of tokens per second.}
    \label{tab:wikitext103}
\end{table*}
\egroup

\subsubsection{Long-Range Arena}
In this experiment, we evaluate our model on tasks that involve longer sequence lengths in the Long Range Arena (LRA) benchmark~\cite{tay2021long}. We show that the momentum-based transformer outperforms the baseline linear transformer and standard softmax transformer~\cite{vaswani2017attention}, justifying the advantage of our momentum-based transformers in capturing long-term dependency. 

{\bf Datasets and metrics.} We consider all five tasks in the LRA benchmark \cite{tay2021long}, 
including Listops, byte-level IMDb reviews text classification, byte-level document retrieval, CIFAR-10 classification on sequences of pixels, and Pathfinder. These tasks involve long sequences of length $2K$, $4K$, $4K$, $1K$, and $1K$, respectively. We follow the setup/evaluation protocol in \cite{tay2021long} and report test accuracy for each task and the average result across all tasks.  

{\bf Models and training.} All models have 2 layers, 64 embedding dimension, 128 hidden dimension, 2 attention heads. Mean pooling is applied in all models. Also, we use the nonlinear activation $elu(x) + 1$ for the linear transformer. Our implementation uses the public code in ~\cite{xiong2021nystromformer} as a starting point, and we follow their training procedures. The training setting and additional baseline model details are provided in the configuration file used in~\cite{xiong2021nystromformer}. 

{\bf Results.} We summarize our results in Table~\ref{tab:lra}. Both momentum-based transformers 
outperform linear transformers in all tasks and yield better accuracy than the standard softmax transformer in most tasks except the Listops. The adaptive momentum transformer performs the best on every task except the LipsOps, far behind the softmax transformer and Linformer.

\subsubsection{Non-autoregressive machine translation}
All of the above experiments are for auto-regressive tasks. In this last experiment, we demonstrate that the benefits of our momentum-based transformers also hold for a non-autoregressive task. We consider a machine translation task on the popular IWSLT' 16 En-De dataset. We follow the setting in~\cite{lee-etal-2018-deterministic}. In particular, we tokenize each sentence using a script from Moses~\cite{koehn-etal-2007-moses} and segment each word into subword units using BPE~\cite{sennrich-etal-2016-neural}. We also use $40K$ tokens from both source and target. Our baseline model is the small transformer-based network in~\cite{lee-etal-2018-deterministic}. This model has 5 layers, and each layer has 2 attention heads. We replace the softmax attention in this network with the linear and momentum-based attention to obtain the linear transformer baseline and the momentum-based transformer models, respectively. 

Table~\ref{tab:wikitext103} reports the results in terms of generation quality, measured by the BLEU score~\cite{Papineni02bleu:a}, and generation efficiency, measured by the number of generated tokens per second. Consistent with other experiments above, our momentum-based transformers obtain better BLEU scores than the linear transformer in this non-autoregressive setting. Furthermore, in terms of generation efficiency, momentum-based models are comparable with the linear transformer and much more efficient than the standard softmax transformer.

\section{Conclusion and Future Work}\label{sec-conclusion}

In this paper, we reviewed how to integrate momentum into neural networks to enhance their theoretical and practical performances. In particular, we showed that momentum improves learning long-term dependencies of RNNs and neural ODEs and significantly reduces their computational costs. Moreover, we showed that momentum can also be used to improve the efficiency and accuracy of transformers. There are numerous directions for future work: 
1) Can we leverage the momentum-augmented neural network component to aid the neural architecture search? 
2) Can we further improve the momentum-integrated architectures by using the numerical ODE insights \cite{haber2017stable}? 
3) Momentum has also been used in designing CNNs \cite{li2018optimization,sander2021momentum}; it is also worth further studying the benefits of momentum for CNNs.

\section*{Acknowledgement}
This material is based on research sponsored by NSF grants DMS-1924935 and DMS-1952339, DOE grant  DE-SC0021142, and ONR grant N00014-18-1-2527 and the ONR MURI grant N00014-20-1-2787.


\end{document}